\pdfoutput=1
\documentclass[11pt]{article}

\usepackage[utf8]{inputenc}
\usepackage{amsmath,amssymb,amsthm}
\usepackage{microtype}
\usepackage{hyperref}
\usepackage{aliascnt}
\usepackage[nameinlink,noabbrev]{cleveref}
\usepackage{geometry}
\usepackage{enumitem}
\usepackage{xcolor}
\usepackage{array}
\usepackage{tikz}
\usetikzlibrary{decorations.pathreplacing}

\newcommand{\OPT}{\mathsf{OPT}}
\newcommand{\Wil}{\mathsf{W}}
\newcommand{\New}{\mathsf{New}}
\newcommand{\col}{\operatorname{col}}
\newcommand{\shuffle}{\mathbin{\sqcup\!\sqcup}}

\newtheorem{theorem}{Theorem}
\newaliascnt{lemma}{theorem}
\newtheorem{lemma}[lemma]{Lemma}
\newtheorem{proposition}[theorem]{Proposition}
\newtheorem{corollary}[theorem]{Corollary}
\newtheorem{conjecture}[theorem]{Conjecture}

\theoremstyle{definition}
\newtheorem{definition}[theorem]{Definition}
\newtheorem{problem}{Problem}

\theoremstyle{remark}
\newtheorem{remark}[theorem]{Remark}

\aliascntresetthe{lemma}
\crefname{lemma}{lemma}{lemmas}
\Crefname{lemma}{Lemma}{Lemmas}

\title{Bolzano: Case Studies in LLM-Assisted Mathematical Research}
\author{
\begin{tabular}{c}
Martin Balko$^{4}$ \qquad
Jan Grebík$^{1}$ \qquad
Pavel Hubáček$^{1,2}$ \qquad
Martin Koutecký$^{1}$ \\[0.5ex]
Matěj Kripner$^{3}$ \qquad
Václav Rozhoň$^{1}$ \qquad
Robert Šámal$^{1}$ \qquad
Adrián Zámečník$^{1}$ \\[1ex]
\small$^{1}$Computer Science Institute, Charles University \\
\small$^{2}$Institute of Mathematics, Czech Academy of Sciences \\
\small$^{3}$Institute of Formal and Applied Linguistics, Charles University \\
\small$^{4}$Department of Applied Mathematics, Charles University 
\end{tabular}
}
\date{}
\begin{document}

\maketitle

\begin{abstract}
\noindent
We report new results on eight problems in mathematics and theoretical computer science, produced with the assistance of Bolzano--an open-source multi-agent LLM system.
Bolzano orchestrates rounds of interaction between parallel prover agents and a verifier agent while maintaining a~persistent knowledge base that is carried across rounds.
Classified using the significance–autonomy taxonomy of Feng et al.~\cite{aletheia2026}, six of the eight results reach the level of publishable research, and five of the eight were produced essentially autonomously by Bolzano.
Our results provide evidence that LLMs can contribute meaningfully to mathematical research, complementing recent reports by Bubeck et al.~\cite{openai2025gpt5}, Woodruff et al.~\cite{google2025gemini}, and others.
\end{abstract}

\section{Introduction}

The integration of large language models into mathematical research has moved rapidly from speculation to documented practice. We highlight two recent reports that have established this as a real phenomenon (see Related Work for many more results in this direction):

\begin{itemize}
    \item Woodruff, Mirrokni et al.~\cite{google2025gemini} present a collection of case studies in which Google's Gemini Deep Think solved open problems across mathematics, theoretical computer science, and physics. 
    \item Bubeck et al.~\cite{openai2025gpt5} document experiments in which OpenAI's GPT-5 contributed to research in mathematics, physics, and computer science. 
\end{itemize}

This paper provides more evidence in this direction. Bolzano \cite{bolzano} is an open-source AI system that assists with mathematical research and is available to interested researchers.
The system runs research on a problem using several rounds of interaction between several provers and a verifier. Each prover and verifier is implemented by state-of-the-art LLMs (GPT, Gemini, Claude, or others).
Its architecture is described in \Cref{sec:bolzano}. We use Bolzano to provide new results for eight problems in mathematics and theoretical computer science.

\paragraph{Strengths of Bolzano}

Across our case studies, Bolzano was most useful as a generator of mathematically meaningful intermediate research moves: it was particularly good at finding counterexamples and obstructions, proposing concrete constructions and gadgets, or extending a known base case or simpler proof template to a more general statement. This mirrors the findings in \cite{google2025gemini,openai2025gpt5}. In one case, Bolzano autonomously provided a hardness proof and a polynomial-time algorithm for a restricted class, and corrected a small error in the human-provided problem formulation.

\paragraph{Limitations}

This work has several limitations. First, we do not provide a thorough comparison between Bolzano and single-session chatbot interactions; while such a comparison is feasible in principle, it is difficult to carry out rigorously in a case study with eight problems, and the advantage of the multi-agent architecture over direct use of the same underlying models remains unquantified.
Second, it is hard to cleanly disentangle the human and AI contributions; even selecting the right problem to work on is a nontrivial human input.

\paragraph{Solved problems}
We document eight problems in mathematics and theoretical computer science that Bolzano has resolved or made progress on. This was done either autonomously or through collaboration with domain-expert mathematicians and computer scientists. For each of the problems, we include discussion. The corresponding formal results are proved either in the appendices of this paper or in separate papers.

\begin{table}[!t]
\centering
\small
\begin{tabular}{@{} m{3.7cm} | >{\centering\arraybackslash}m{3.7cm} | >{\centering\arraybackslash}m{3.7cm} | >{\centering\arraybackslash}m{4cm} @{}}
 & \textbf{H} (Primarily human) & \textbf{C} (Collaboration) & \textbf{A} (Autonomous) \\
\hline
\textbf{Negligible novelty} & & & \\
\hline
\textbf{Minor novelty} & &  & \shortstack[l]{Partitioning (\Cref{sec:partition})\\Tilings (\Cref{sec:tiling})} \\
\hline
\textbf{Publishable research} & Heaps (\Cref{sec:heap}) & \shortstack[l]{PWPP (\Cref{sec:pwpp})\\KZG (\Cref{sec:chopin})} & KKOS Optim.\ (\Cref{sec:opdiff})\newline CCE (\Cref{sec:gameTheory})\newline BST Interleaving (\Cref{sec:wilber}) \\
\hline
\textbf{Major advance} & & & \\
\hline
\shortstack[l]{\textbf{Landmark}\\\textbf{breakthrough}} & & & \\
\end{tabular}
\caption{Classification of our results by significance and autonomy, following the taxonomy proposed by Feng et al.~\cite{aletheia2026}. 
Autonomy ranges from H (primarily human, secondary AI input) through C (human--AI collaboration) to A (essentially autonomous).
}
\label{tab:classification}
\end{table}

\begin{enumerate}
    \item \textbf{Complexity theory:} We construct an oracle separation showing that PWPP is not closed under adaptive Turing reductions (Section~\ref{sec:pwpp}). The human input was the question, and after Bolzano returned a proof, a human researcher suggested to keep the main structure (the same choice of problem showing that PWPP is not closed) but try a different strategy for the analysis, after which Bolzano essentially finished the proof. 
    \item \textbf{Additive combinatorics:} We provide an example of a tiling of $\mathbb{R}^2$ by a single tile that shows a limit on the technique from a forthcoming paper \cite{DGGM_tiling_in_progress} (Section~\ref{sec:tiling}).
    \item \textbf{Cryptography:} We prove special soundness for multi-polynomial, multi-point KZG batching in the standard model (Section~\ref{sec:chopin}). The human researchers proposed a variant of the protocol and provided a simpler base-case proof as context. Bolzano found the full proof in six rounds, producing a lengthy but essentially complete formal argument.
    \item \textbf{Data structures:} We prove that binary search trees have a natural interleaving property, up to $O(\log\log n)$. Bolzano produced the proof autonomously (Section~\ref{sec:wilber}).
    \item \textbf{Combinatorics:} We disprove a conjecture on function-preimage partitioning from the KAMAK 2020 workshop \cite{kral2020}. Moreover, Bolzano establishes a corrected hypothesis and proves some bounds for it (Section~\ref{sec:partition}).
    \item \textbf{Computational complexity:} We determine the complexity of an optimization problem over the KKOS cultural dynamics model~\cite{kkos} (Section~\ref{sec:opdiff}). The decision version is NP-complete on general graphs; on forests, we give a polynomial-time algorithm. Bolzano worked autonomously, also correcting a small error in the original problem formulation.
    \item \textbf{Game Theory:} We show that, for every $\varepsilon \in (0,1/2)$, there are $n$-player normal-form games where every $\varepsilon$-coarse correlated equilibrium has support of size at least  $\Omega\left(\frac{\log{n}}{\varepsilon^2\log{(1/\varepsilon)}}\right)$, answering negatively an open problem posed by Babichenko, Barman, and Peretz~\cite{babichenko14}.
    This result was proved by Bolzano in four rounds, after a human researcher pointed out the direction of the proof (Section~\ref{sec:gameTheory}).
    \item \textbf{Data structures:} We prove that two beyond worst-case properties of heaps are equivalent (Section~\ref{sec:heap}). This was first proven by a human researcher in a (yet unpublished) project. Bolzano independently reproved the equivalence and independently came up with a stronger statement and its proof.
\end{enumerate}

In general, our experience parallels the findings of~\cite{google2025gemini} and~\cite{openai2025gpt5} in several respects. LLMs excel at generating proof candidates, constructing counterexamples, and making cross-domain connections. In early 2026, human guidance remains relevant for problem selection, high-level strategy, and verification. 

\paragraph{Related work}

Recent work has begun to document the use of LLMs for mathematical and scientific research. GPT-5 has been shown to contribute to research in mathematics, physics, and other sciences, including solving open problems and improving constants~\cite{openai2025gpt5}. Similar results have been reported with Gemini~\cite{google2025gemini}. Aletheia~\cite{aletheia2026} is a mathematics research agent based on a Generator--Verifier--Reviser loop that autonomously solved research-level problems, including 6 out of 10 problems in the First Proof challenge~\cite{aletheia_firstproof2026}. First Proof~\cite{firstproof2026} is a benchmark of 10 research-level problems from unpublished work of 11 mathematicians; OpenAI~\cite{openai_firstproof2026} reported at least 5 likely correct solutions. AlphaEvolve~\cite{novikov2025ai} has been applied to 67 problems~\cite{georgiev2025}, mostly in constant optimization and construction search, with expert hints improving efficiency without clearly changing the ceiling of performance. FunSearch~\cite{romeraParedes2024funsearch} pairs an LLM with an evaluator in an evolutionary loop and has produced new results in combinatorics and algorithms. 

Many systems pursue automated theorem proving with formal verification. AlphaGeometry~\cite{trinh2024alphageometry} solves olympiad geometry problems via a neuro-symbolic engine. Many recent systems target the Lean proof assistant, including AlphaProof~\cite{hubert2025alphaproof}, which achieves IMO medal-level performance via reinforcement learning; Hilbert~\cite{hilbert2025}, which combines informal proof sketches with formal verification; Aleph Prover~\cite{alephprover2025}, a formal theorem prover that leads PutnamBench; Ax-Prover~\cite{axprover2025}, which targets quantum physics and abstract algebra; and autoformalization of a 130{,}000-line topology textbook at low cost~\cite{urban2025}. We view formal verification as complementary to our approach: Bolzano relies on informal proof generation followed by expert verification, which currently offers broader coverage but weaker guarantees. A large-scale evaluation of over 5{,}000 LLM-generated proofs across 1{,}000 problems~\cite{dekoninck2025} finds substantially stronger performance in informal than in formal proof generation.

Multi-agent architectures for mathematical reasoning have been explored in several directions, including multiagent debate, step-level verification, iterative self-refinement, and LLM scaffolding for problem solving~\cite{du2023debate,guo2025parastep,huangyang2025,madaan2023selfRefine}.
In late 2025, open Erd\H{o}s problems emerged as a testing ground for LLM mathematical capabilities, with models producing original proofs of previously unsolved problems~\cite{putterman2026,feng2026erdos}; however, the problems solved so far are generally amenable to straightforward techniques~\cite{feng2026erdos}. 

\paragraph{Roadmap}
\Cref{sec:bolzano} describes the architecture and design of Bolzano. \Cref{sec:problems} presents each of the solved problems, including the problem statement, context, a discussion of the human--AI collaboration, and a link to the Bolzano transcript. For problems whose proofs are not contained in a separate paper, the full proofs appear in the appendices. All proofs have been verified by domain experts.

\section{Bolzano}
\label{sec:bolzano}

Bolzano is a research tool available at \url{https://bolzano.app}.
It provides an automated pipeline consisting of \emph{prover}, \emph{verifier}, and \emph{summarizer agents} designed to iteratively investigate mathematical research problems. Throughout this section, by \emph{agent} we mean a callable function that makes a preconfigured request to an LLM with a custom prompt specifying a research persona, tasks, and goals.

\paragraph{Overall architecture}
We call one iteration of the Bolzano pipeline a \emph{research round}. It involves running $n$ parallel \emph{prover agents}, followed by one \emph{verifier agent}, and concluded by a \emph{summarizer agent}. Research rounds run sequentially; between rounds, the state of the investigation is preserved in three human-readable files.

\paragraph{Agents.}
The \emph{prover agents} are tasked with coming up with proof ideas, finding counterexamples, proving special cases, identifying mistakes, and writing proofs. 
The \emph{verifier agent} checks the work of the provers---identifying errors and unjustified steps, combining ideas into viable solutions---and is the sole agent that decides what gets written into the knowledge base files. The \emph{summarizer agent} produces a concise summary of each research round for the user and for subsequent agents.

\paragraph{Knowledge base files.}
In between research rounds, we maintain three files that serve as a persistent knowledge base. The \emph{notes} file aggregates insights, failed approaches, conjectures, and simplified proofs. The \emph{proofs} file maintains rigorous, fully detailed proofs. The \emph{output} file provides a short summary of the current status for the human researcher. All agents read from these files, but only the verifier can write to them. 

\paragraph{Model diversity.}
Bolzano allows the user to select a different LLM for each prover agent. Since models differ in training data, this seems to generate a wider variety of approaches. This approach is also reported to counteract self-preference bias~\cite{panickssery2024}.

\paragraph{Human guidance.}
Between research rounds, the user may provide additional instructions to steer the investigation. In our experience, human guidance often leads to stronger results---for example, the PWPP result (\Cref{sec:pwpp}) was obtained after an expert advised Bolzano to try a different strategy. %

\section{Solved Problems}
\label{sec:problems}

This section contains the eight selected problems solved by Bolzano. For each problem, we add discussion about what parts of the research pipeline have been done by a human, and which parts by Bolzano. 
We always add a link to the Bolzano system containing the proof. However, Bolzano-generated proofs are not intended to be publication-ready. We thus always either add expert-verified proof to appropriate appendix, or link to a paper containing the proof.  

\subsection{Black-Box Separation of Adaptive and Non-Adaptive PWPP -- Pavel Hub\'a\v{c}ek}
\label{sec:pwpp}
The complexity class PWPP (Polynomial Weak Pigeonhole Principle)~\cite{jerabek2016} captures collision finding within TFNP.
Its canonical complete problem \textsc{Collision} asks: given a shrinking circuit
$C:\{0,1\}^n\to\{0,1\}^{n-1}$, find distinct $x_1,x_2\in\{0,1\}^n$ such that $C(x_1)=C(x_2)$.
PWPP consists of all total search problems many-one reducible to \textsc{Collision}.

Je\v{r}\'abek~\cite{jerabek2016} showed that PWPP is closed under \emph{non-adaptive} Turing reductions,
i.e.\ $P^{\|\mathrm{PWPP}}=\mathrm{PWPP}$: solving $k$ independently prepared PWPP instances reduces to a single collision query.
A basic structural question for TFNP subclasses is whether they remain closed under \emph{adaptive} Turing reductions,
where later oracle queries may depend on earlier answers~\cite{bussjohnson2012}.
For several classes (e.g.\ PLS, PPA, PPAD) adaptive and non-adaptive oracle access coincide~\cite{bussjohnson2012}, while for the related class PPP (Polynomial \emph{strong} Pigeonhole Principle),
Fleming et al.~\cite{fleming2024} proved a black-box separation showing it is not Turing-closed.

Together with Bolzano, in \cite{pwpp_paper} we resolve the analogous question for PWPP in the black-box setting by introducing a natural adaptive task
\textsc{NestedCollision}, suggested by Bolzano, which requires two dependent collision-finding steps.

After being given the problem, Bolzano produced the core construction and proof in four rounds of interaction. While the initial proof had flaws, after being instructed to adopt a different high-level strategy,\footnote{Concretely: ``Discard the case analysis strategy. Instead, prove that $\Pi \not\in PWPP^{\mathcal O}$ by establishing that, with overwhelming probability, the reduction circuit contains "useless" collisions that do not reveal a solution to $\Pi$.''} Bolzano delivered a mostly complete formal proof in four additional rounds, with only minor typographical errors and no significant logical gaps. 

\paragraph{Formal results.}

The theorem proven in~\cite{pwpp_paper} is the following. 

\begin{theorem}[Black-box PWPP is not Turing-closed~\cite{pwpp_paper}]
\label{thm:pwpp-intro}
In the decision-tree model, the search problem $\textsc{NestedCollision}$ admits no shallow $\textsc{Collision}$-formulation.
Therefore, black-box $\mathrm{PWPP}$ is not closed under adaptive Turing reductions.
\end{theorem}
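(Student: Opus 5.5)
We define \textsc{NestedCollision} relative to two oracles. The first is a shrinking black-box function $f:\{0,1\}^n\to\{0,1\}^{n-1}$. The second is a family $g_{y}:\{0,1\}^n\to\{0,1\}^{n-1}$, indexed by collision \emph{values} $y$ of $f$. A solution is a collision $(x_1,x_2)$ of $f$ with $f(x_1)=f(x_2)=y$, together with a collision of $g_y$. Inclusion in $\mathrm{P}^{\mathrm{PWPP}}$ with two adaptive queries is immediate: first find a collision of $f$, then use its value to find a collision of $g_y$.

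A \emph{shallow Collision-formulation} would consist of three parts:
\begin{itemize}[nosep]
\item a polylog-depth decision tree $C$ computing each output bit of a shrinking map $\{0,1\}^m\to\{0,1\}^{m-1}$;
\item a polylog-depth decision tree $D$ mapping any collision of $C$ to a solution.
\end{itemize}
The goal is to show no such pair exists. We use the standard random-oracle method: sample $f$ and all $g_y$ uniformly at random, and show that with probability $>0$ some collision $(w_1,w_2)$ of $C^{f,g}$ is \emph{useless}, meaning $D$ on it does not find a valid solution.

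\textbf{Key steps.} Fix $(C,D)$ of depth $d=\mathrm{polylog}(n)$.

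First, by shrinkage, $C^{f,g}$ has at least $2^{m-1}$ collisions for every oracle. Each collision comes with a certificate: the at most $2d(m)$ oracle queries made by $C$ on $w_1,w_2$, plus the $d$ queries made by $D$.

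Second, we use a compression argument. Suppose $D$ always outputs a valid solution. Then for every collision, the combined query transcript must contain a collision of $f$, with value $y$, and also a collision of $g_y$. Because $D$ reads only polylog many positions, it cannot find a collision of a random function unless one is already present among the queried points. Random $f$ has collisions among $q$ queried points only with probability $O(q^2/2^{n})$. So $C$ must in effect \emph{plant} $f$-collisions into its own collisions, as in Jeřábek's non-adaptive simulation.

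Third, we show the adaptive structure defeats this. The relevant index $y$ for $g$ is determined only after an $f$-collision is fixed. The reduction $C$ must therefore use its input-dependent paths to embed both a collision of $f$ and a collision of the matching $g_y$ into the same pair $(w_1,w_2)$. We argue that for most $y$ among the $2^{n-1}$ possibilities, $g_y$ is queried on at most polylog many points across all paths that $C$ can correlate. This yields a counting or encoding argument: if every collision of $C$ were useful, we could compress the random $(f,g)$ below their entropy. Concretely, for $2^{m-1}$ collisions we encode $g_y$-collisions using about $\log(\text{\#paths})$ bits each instead of $n-1$ bits.

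Finally, the conclusion about black-box $\mathrm{PWPP}$ not being Turing-closed follows formally. \textsc{NestedCollision} lies in $\mathrm{FP}^{\mathrm{PWPP}}$ relative to the oracle, but by the first part it is not in $\mathrm{PWPP}^{\mathcal O}$.

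\textbf{Main obstacle.} The hardest step is the third. The difficulty is that $C$ may query $f$ heavily, so a fixed $f$-collision might be built into many collisions of $C$. We must show the $g_y$-collision cannot be \emph{reliably} found for the value $y$ that appears. Rather than running a case analysis over how $C$ uses $f$, I would first try a useless-collision lemma. Apply a random restriction, or choose $g_y$ fresh conditioned on the transcript. Then show that for a typical collision of $C$, the queries to $g_y$ at the relevant $y$ are effectively independent uniform values. It follows that $D$ succeeds with probability only $\mathrm{poly}(d)/2^{n}$. Finally, union-bound to get a useless collision.
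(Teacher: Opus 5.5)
Your overall plan, which is to draw a random oracle and show that the formulation must have a \emph{useless} collision, matches the strategy the paper indicates for this result (footnote in Section~\ref{sec:pwpp}). However, the step that carries the entire theorem, namely that \emph{every} shallow $C$ has a useless collision, is never actually argued, and the mechanisms you propose for it fail. Test them on the natural formulation $C(x,z)=(f(x),g_{f(x)}(z),0)$ on $\{0,1\}^{2n}$, whose output bits have decision-tree depth about $n$. (``Shallow'' means depth and $m$ polylogarithmic in the oracle size, i.e.\ $\mathrm{poly}(n)$, not $\mathrm{polylog}(n)$; at depth $\mathrm{polylog}(n)$ a tree cannot even read $f(x)$.) For every oracle this $C$ has useful collisions $((x,z),(x',z'))$ with $x\neq x'$ and $z\neq z'$. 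For a random oracle these form about half of all collisions, roughly $4\cdot 2^{m-1}$ of them. The useless ones are the degenerate pairs that share $x$ (so $D$ would have to find a second $f$-preimage) or share $z$ (so no $g_y$-collision is present).

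Measured against this example, three of your steps break. (i) The claim that for most $y$ the function $g_y$ is queried at only polylogarithmically many points is false: collectively the $2^m$ trees read $g_y$ everywhere, for every $y$ in the image of $f$. Only the two paths of a \emph{given} collision are short, and which paths those are is decided by the oracle. (ii) The encoding step has nothing to compress. Many collisions of $C$ can expose the same $g_y$-collision, and the number of useful collisions can exceed $2^{m-1}$. So the naive route of showing that fewer than $2^{m-1}$ collisions are useful is unavailable, and comparing against the actual, oracle-dependent number of collisions again needs structural information about $C$. (iii) In the last step, conditioning on $C(w_1)=C(w_2)$ is conditioning on oracle values; here it \emph{forces} $f(x)=f(x')$ and $g_y(z)=g_y(z')$. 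So the $g_y$-values seen on a ``typical collision'' are not fresh uniform values, and $D$ succeeds on a constant fraction of collisions, not a $\mathrm{poly}(d)/2^{n}$ fraction. The union bound also has the wrong logical shape. Over up to $2^{2m}$ candidate pairs with $m\ge n$ it does not close, and if it did, it would show that with high probability \emph{no} collision is useful, which the example refutes.

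What is missing is a structural argument that uses the totality of $C$ on all oracles to force a particular kind of collision. Its two short paths, together with $D$'s queries, must fail to jointly certify an $f$-collision at some $y$ and a $g_y$-collision at that same $y$; this is the general analogue of the degenerate collisions above. Your ``useless-collision lemma'' names this target but gives no way to reach it. The remaining parts (membership via two adaptive queries, assuming $D$ verifies its output, and deducing non-closure) are routine.
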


The Bolzano research transcript is available at~\cite{bolzano_pwpp}. The complete proof appears in \cite{pwpp_paper}.

\subsection{Structural Results on Multi-Slope Tilings -- Jan Grebík}
\label{sec:tiling}

\paragraph{Motivation and result}

The problem originated in the study of \emph{translational monotilings of $\mathbb{R}^d$}, or less generally $\mathbb{Z}^d$, where the setup is the following.
We are given a measurable set $\Omega\subseteq \mathbb{R}^d$ of finite positive Lebesgue measure and want to understand if there is a (necessarily uniformly discrete) set of translates $T\subseteq \mathbb{R}^d$ such that
$$\Omega\oplus T=\mathbb{R}^d,$$
where $\Omega\oplus T$ means that the collection of translates $\{\Omega+t\}_{t\in T}$ are disjoint up to null sets and cover all of $\mathbb{R}^d$.
In this case, $T$ is called a \emph{tiling of $\mathbb{R}^d$ by $\Omega$}, and $\Omega$ is called a \emph{tile}.
The definitions for translational monotilings of $\mathbb{Z}^d$ are analogous.
Much of the recent development in the area \cite{bhattacharya,GreenfeldTaoStructureZd,greenfeldtao,GT2} have been driven by the so-called \emph{periodic tiling conjecture}.

\begin{conjecture}[Periodic tiling conjecture (PTC), \cite{lagariaswang}]
    Let $\Omega\subseteq \mathbb{R}^d$ be a tile.
    Then there is a tiling $T$ of $\mathbb{R}^d$ by $\Omega$ that is periodic, that is, $\{\gamma\in \mathbb{R}^d:\gamma+T=T\}$ contains a lattice.
\end{conjecture}

Unlike in the case of general tiling problems, where more tiles are allowed, PTC holds in $\mathbb{Z}^2$ which was proven by Bhattacharya \cite{bhattacharya}.
On the other hand Greenfeld and Tao \cite{greenfeldtao} showed that PTC fails in $\mathbb{Z}^d$ (and thus $\mathbb{R}^d$) for large enough $d$.
In $\mathbb{R}^2$ Kenyon \cite{kenyon} showed that PTC holds for tiles that are topological disks, but the general case is widely open.
Together with de Dios, Greenfeld and Madrid we investigated the case of tiles that are polygonal sets (possibly disconnected with holes) with edges being axes parallel that may have irrational lengths.
We obtain the following general statement, providing a weak form of PTC.

\begin{theorem}[\cite{DGGM_tiling_in_progress}]\label{thm:weakPTC}
    Let $\Omega\subseteq \mathbb{R}^2$ be an axes parallel polygonal tile.
    Then there is $k\in \mathbb{N}$ and a tiling $T=T_1\sqcup \dots \sqcup T_k$ of $\mathbb{R}^2$ by $\Omega$ such that after possibly swapping the vertical and horizontal axes the following holds:
    \begin{enumerate}
        \item $T_i$ is periodic for every $1\le i\le k$,
        \item $\Omega\oplus T_i$ is a union of cosets of $\mathbb{R}(0,1)$,
        \item there is $\gamma\in \mathbb{R}^2\setminus \{0\}$ such that $T+\gamma=T$, more specifically, if $k>1$, then $\gamma$ is of the form $\gamma=(0,\alpha)$ for some $\alpha>0$.
        \end{enumerate}
\end{theorem}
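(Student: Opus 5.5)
We reduce the tiling question to a combinatorial problem about how translates of $\Omega$ interact along the coordinate directions, and then use compactness to extract a well-structured tiling. Since $\Omega$ is an axes parallel polygonal set, its boundary lies on finitely many horizontal and vertical segments. The first step is to fix any tiling $T_0$ of $\mathbb{R}^2$ by $\Omega$ and study the \emph{boundary skeleton} $\partial(\Omega\oplus T_0)$, the union of all edges of all translates. This is a locally finite union of axis-parallel segments. The key dichotomy we aim for is the following. Either there is a direction, say vertical after swapping axes, in which \emph{slip lines} appear, meaning full lines $\mathbb{R}(0,1)+(x,0)$ contained in the skeleton. Or no full lines appear in either direction. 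In the latter case we expect a rigidity argument, in the spirit of Kenyon and of Bhattacharya's $\mathbb{Z}^2$ result, to show that the tiling is locally determined and hence periodic after a compactness and pigeonhole argument. That case yields $k=1$ and $T=T_1$ periodic, with conclusion (3) following from periodicity.

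In the slip-line case we proceed as follows.
\begin{enumerate}
\item Show that the vertical slip lines are relatively dense. Since $\Omega$ is bounded, a strip of width larger than $\operatorname{diam}\Omega$ without a slip line would force the tiling restricted to that strip to be rigid. This rigidity would propagate horizontally and contradict the presence of a slip line.
\item Cut $\mathbb{R}^2$ along the slip lines into vertical strips $S_j$, each a union of cosets of $\mathbb{R}(0,1)$. Each strip is tiled by the translates of $\Omega$ it contains.
\item Note that only finitely many strip widths and \emph{strip types} can occur. Boundedness of $\Omega$ and the finiteness of its edge configurations give finitely many local patterns across a strip. Each strip tiling is essentially a one-dimensional tiling in the vertical direction of a bounded-width strip.
\item Apply a one-dimensional periodicity argument: a strip tiling by a bounded tile can be replaced by a vertically periodic one, with period in $\mathbb{R}(0,1)$.
\end{enumerate}

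Grouping strips by type gives $T_1,\dots,T_k$. For each type $i$ we place periodic copies of a strip tiling, and we arrange the strips themselves periodically in the horizontal direction using the finitely many types. This gives (1) and (2).

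For (3), we choose a common vertical period $\alpha>0$ for all $k$ strip types. This requires the per-type vertical periods to be commensurable, or at least that every type admits a period from one common set. Then $\gamma=(0,\alpha)$ preserves each $T_i$ and hence $T$.

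The main obstacle is exactly this synchronization across types, together with the fact that the edge lengths are irrational. Different strips may have incommensurable natural periods, and the $T_i$ taken together need not be jointly periodic. This is why the theorem only claims a weak PTC, with each $T_i$ periodic separately and only a one-dimensional symmetry $\gamma$ for the union. To obtain even the common $\gamma$, we would first try to show that the vertical period of every strip type can be chosen in the additive group generated by the vertical edge lengths of $\Omega$. The approach would be a linear-algebra argument over $\mathbb{Q}$: express the vertical lengths in a $\mathbb{Q}$-basis and project the tiling equations onto each coordinate. We would then pick $\alpha$ as the total vertical length of a fixed column of tiles that every strip type can realize. The secondary difficulty is the rigidity claim in the no-slip-line case, which needs a careful local analysis of how corners of translates meet.
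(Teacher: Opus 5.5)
Your proposal has a genuine gap; note that the paper gives no proof of this theorem (it is quoted from \cite{DGGM_tiling_in_progress}), so the sketch has to stand on its own. The decisive failure is Step~2. A full vertical line in the skeleton only guarantees that no translate has \emph{interior} points on it. When $\Omega$ has disconnected interior (which the theorem explicitly allows, and which is the interesting case), one translate can have components on both sides of the line. The strips between consecutive slip lines are then not tiled by the translates they contain.

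The paper's own example shows this. Take $\Omega_\alpha=([0,1)\times I_0)\cup([2,3)\times I_2)\cup([4,5)\times I_4)$, with $I_0,I_2,I_4\subseteq[0,1)$ as in Appendix~\ref{app:tiling}. Every line $x=n$ with $n\in\mathbb{Z}$ lies in the skeleton of $T_\alpha$, yet every translate meets three different unit strips.

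Step~2 also proves too much. If strips were tiled independently, you could take a single strip, make it vertically periodic by Step~4, and repeat it horizontally, obtaining a periodic tiling with $k=1$. Together with your no-slip-line case, this would give the full periodic tiling conjecture for all axis-parallel polygonal tiles, which the theorem deliberately does not claim.

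The $T_i$ in the statement are not strips cut out by slip lines. Each $\Omega\oplus T_i=C_i\times\mathbb{R}$, where $C_i$ is a union of many columns interleaved with those of the other $T_j$, and different $T_i$ may carry different lattices. In Theorem~\ref{thm:Main_Tiling}, $T_{1,\alpha}$ and $T_{2,\alpha}$ occupy alternating unit columns with lattices $2\mathbb{Z}\times\mathbb{Z}$ and $\{(2k,k\alpha+\ell):k,\ell\in\mathbb{Z}\}$. Extracting such interleaved families, together with the common vertical period in (3), is the actual content of the theorem, and your plan has no mechanism for either. Your worry about synchronizing periods across strip types is a symptom of this: for genuinely independent strips no synchronization would be needed at all.

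The remaining steps are asserted rather than proved.
\begin{itemize}
\item In the no-slip-line case, ``rigid, hence locally determined, hence periodic by compactness and pigeonhole'' is not a valid inference in dimension two. Pigeonhole over local patches yields repeated patches, not a periodic global extension, and local forcing is compatible with aperiodicity (compare the aperiodic translational monotiles of \cite{greenfeldtao} in high dimension). The known planar results (\cite{kenyon} for disks, and \cite{bhattacharya,GreenfeldTaoStructureZd} in $\mathbb{Z}^2$) rest on genuine structure theory, such as weak periodicity of tilings of $\mathbb{Z}^2$ by a single tile, not on a local rigidity heuristic.
\item The relative density of slip lines in Step~1 (``rigidity would propagate horizontally'') has no argument behind it.
\item Step~4 is not a routine finite-state pigeonhole argument. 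With irrational edge lengths, the relative vertical offsets of translates inside a strip range over an infinite set; already two columns of unit squares can be offset by any real amount.
\item Conclusion (3) is left as something you ``would try''.
\end{itemize}
As it stands, none of (1)--(3) is established.
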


Note that if $k=1$ (or in the discrete case $\mathbb{Z}^2$), then (1) above would already imply that $T$ is periodic, thus establishing the PTC for $\Omega$.
Our current techniques do not seem to give any information about the possible relation between $T_i$'s or the sets of the form $\Omega\oplus T_i$.
This leads to the question of whether there exist interesting tilings as in \Cref{thm:weakPTC}.
Without any restriction, the answer to this question is trivial, as one might consider the lattice tiling of $\mathbb{R}^2$ by a unit square $[0,1]\times [0,1]$ and construct for any $k\in \mathbb{N}$ a tiling $T$ as above by shifting different columns.
In order to avoid this trivial case we need a definition.

\begin{definition}\label{def:Column}
    We say that $\Omega\subseteq \mathbb{R}^2$ \emph{tiles a column} (or is a \emph{column tile}), if there is $S\subseteq \mathbb{R}(0,1)$ such that $\Omega\oplus S$ is a union of cosets of $\mathbb{R}(0,1)$.
\end{definition}

Bolzano produced non-column examples for Theorem~\ref{thm:weakPTC}, where $T=T_1\sqcup T_2$ and each part of the decomposition is periodic with different lattices.
This is a first step towards understanding additional flexibility that tilings of $\mathbb{R}^2$ enjoy compared to tilings of $\mathbb{Z}^2$.

\begin{theorem}\label{thm:Main_Tiling}
    For every irrational $\alpha\in (2/3,1)$ there is an axis parallel polygonal tile $\Omega_\alpha\subseteq \mathbb{R}^2$ that is not a column tile together with a tiling $T_{\alpha}=T_{1,\alpha}\sqcup T_{2,\alpha}$ of $\mathbb{R}^2$ by $\Omega_\alpha$ such that the following holds:
    \begin{enumerate}
        \item $T_{1,\alpha}$ is $(2\mathbb{Z})\times \mathbb{Z}$ periodic,
        \item $T_{2,\alpha}$ is periodic with lattice $\{(2k,k\alpha+\ell):k,\ell\in\mathbb{Z}\}$,
        \item $\Omega_\alpha\oplus T_{i,\alpha}$ is a union of cosets of $\mathbb{R}(0,1)$ for $1\le i\le 2$,
        \item $T_\alpha$ is $(0,1)$-periodic.
        \end{enumerate}
\end{theorem}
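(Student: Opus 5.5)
The plan is to build $\Omega_\alpha$ as a finite union of axis-parallel rectangles of horizontal width $1$, sitting over the unit intervals $[j,j+1]$ for $j=0,1,\dots,m$. We then choose finite "motifs" $F_1,F_2\subseteq\mathbb{R}^2$ and set
\[
T_{1,\alpha}=F_1+\bigl((2\mathbb{Z})\times\mathbb{Z}\bigr),\qquad T_{2,\alpha}=F_2+\{(2k,k\alpha+\ell):k,\ell\in\mathbb{Z}\}.
\]
Properties (1) and (2) are then built in. Property (4) is automatic, since both lattices contain $(0,1)$. For (3), note that both lattices also contain a vector of horizontal component $2$. Hence each $\Omega_\alpha\oplus T_{i,\alpha}$ is invariant under the unit vertical shift and is periodic in $x$ with period $2$ modulo vertical shifts. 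So it suffices to check two things, one $x$-period at a time: first, that $\Omega_\alpha\oplus(F_i+\mathbb{Z}(0,1))$ restricted to a vertical strip is a union of full vertical lines $A_i\times\mathbb{R}$; second, that the resulting $2$-periodic sets $A_1,A_2\subseteq\mathbb{R}$ partition $\mathbb{R}$ up to null sets. The vertical offset $k\alpha$ in $T_{2,\alpha}$ is harmless for (3), because a union of vertical lines is invariant under vertical translations. It is what makes the decomposition genuinely non-trivial.

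The key reduction is fiberwise. For each $x$, the vertical fiber of $\Omega_\alpha\oplus(F_i+\mathbb{Z}(0,1))$ over $x$ is a sum of finitely many vertically translated fibers of $\Omega_\alpha$, taken modulo $1$. The requirement is that these form an exact tiling of $\mathbb{R}/\mathbb{Z}$, or are empty. Since the tile is piecewise constant in $x$ on unit intervals, this is a finite system of one-dimensional conditions: for each unit column, the intervals of $\Omega_\alpha$ coming from different horizontal offsets must interlock mod $1$.

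I would take $F_1$ to contain translates with distinct horizontal offsets, for instance $(0,0)$ and $(1,y_1)$. This way a column of $\Omega_\alpha\oplus T_{1,\alpha}$ is filled by a fiber of length $\alpha$ from one copy together with a complementary fiber of length $1-\alpha$ from a neighbouring copy. $F_2$ plays the analogous role on the complementary columns. Concretely, $\Omega_\alpha$ will have fibers of lengths $\alpha$, $1-\alpha$, and possibly $2\alpha-1$ or similar on consecutive unit columns. The parameters (heights and vertical positions of the rectangles, the offsets $y_i$) are then solved from the exact interlocking equations. The restriction $\alpha\in(2/3,1)$ should come out as exactly the range where these intervals are nonnegative and fit mod $1$ without overlap, e.g.\ $3\alpha-2>0$. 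Irrationality of $\alpha$ is only used to make $T_{2,\alpha}$ not commensurable with $T_{1,\alpha}$, so that the two parts really are periodic with different lattices.

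The non-column property is proved by a density argument. Suppose $\Omega_\alpha\oplus S$ is a union of vertical lines for some $S\subseteq\mathbb{R}(0,1)$. Then for every $x$ with nonempty fiber $\Omega_\alpha^x$, the set $\Omega_\alpha^x\oplus S$ is all of $\mathbb{R}$. Hence $S$ has upper density $1/|\Omega_\alpha^x|$, which forces all nonempty fibers of $\Omega_\alpha$ to have the same length. Since our tile has fibers of lengths $\alpha$ and $1-\alpha$ (different because $\alpha\neq1/2$), it is not a column tile.

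The main obstacle is the combinatorial design step: finding a concrete rectangle configuration and motifs $F_1,F_2$ for which the interlocking conditions hold exactly in every column simultaneously, and not merely in measure. I would first try a three-column tile over $[0,3]$ with motifs of two translates each, write down the interval endpoints symbolically in $\alpha$, and solve the resulting linear equations mod $1$. I would enlarge the number of columns only if that system is inconsistent.
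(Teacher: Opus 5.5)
\paragraph{Gap: the construction is missing, and the planned design cannot work.}
Your reductions are sound. Writing $T_{i,\alpha}$ as a finite motif plus the prescribed lattice gives (1), (2), (4) for free. Your fiberwise viewpoint, that each unit column must be tiled mod $1$ by vertically shifted fibers, is the content of the paper's splitting criterion (\Cref{lm:SplittCriterion}). Your density argument (all nonempty vertical fibers of a column tile have the same length) is a valid and slightly more direct substitute for \Cref{pr:Column}.

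The gap is that the theorem is an existence statement, and you never exhibit the tile; the ``design step'' you defer is the entire content. Moreover, the design you plan to try first cannot work. Let $P\subseteq\mathbb{R}$ be the $x$-projection of $\Omega_\alpha$ and $H_i$ the set of horizontal offsets in $F_i$. Both lattices have horizontal components in $2\mathbb{Z}$. So (3) and the tiling property give $\Omega_\alpha\oplus T_{i,\alpha}=A_i\times\mathbb{R}$ with $A_i\supseteq P+H_i+2\mathbb{Z}$ and $A_1\sqcup A_2=\mathbb{R}$ up to null sets. Since each $A_i$ is $2$-periodic, the image of $P$ in $\mathbb{R}/2\mathbb{Z}$ must have measure at most $1$.

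A tile over the consecutive columns $[0,3]$ violates this, as does any tile over two adjacent unit columns. So does $F_1\ni(0,0),(1,y_1)$ for any tile built from integer unit columns. In both cases $A_1=\mathbb{R}$ and nothing is left for $T_{2,\alpha}$, and adding more consecutive columns only makes this worse. The occupied columns must all have the same parity, with gaps between them; this is exactly the hypothesis $H\subset q\mathbb{Z}$ of \Cref{lm:SplittCriterion}. For integer offsets, each motif must also use a single horizontal parity, opposite for $F_1$ and $F_2$.

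\paragraph{Where the interlocking actually happens.}
Checking $\Omega_\alpha\oplus(F_i+\mathbb{Z}(0,1))$ on a single period is not the right condition once $\Omega_\alpha$ is wider than $2$. Copies translated by $(2k,k\alpha)$ with $k\neq0$ land in the same strip. In a working construction it is precisely these copies that interlock, and this is how $\alpha$ enters the equations. With that, single-point motifs suffice.

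Two columns are not enough. With columns $0,2$ and fibers $J_0,J_2$, the conditions $J_0\sqcup J_2=\mathbb{T}=J_0\sqcup(J_2-\alpha)$ would make $J_2$ invariant under an irrational rotation. So the paper uses three columns: with $\varepsilon=1-\alpha\in(0,1/3)$,
\[
\Omega_\alpha=[0,1)\times[\varepsilon,2\varepsilon)\,\cup\,[2,3)\times[2\varepsilon,1)\,\cup\,[4,5)\times[0,\varepsilon),
\]
$T_{1,\alpha}=(2\mathbb{Z})\times\mathbb{Z}$ and $T_{2,\alpha}=(1,0)+\{(2k,k\alpha+\ell):k,\ell\in\mathbb{Z}\}$.

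Each even column is filled by $[\varepsilon,2\varepsilon)\sqcup[2\varepsilon,1)\sqcup[0,\varepsilon)$. Each odd column, up to a vertical shift, is filled by $[\varepsilon,2\varepsilon)\sqcup([2\varepsilon,1)-\alpha)\sqcup([0,\varepsilon)-2\alpha)$. Modulo $1$ this is $[\varepsilon,2\varepsilon)\sqcup\bigl([0,\varepsilon)\cup[3\varepsilon,1)\bigr)\sqcup[2\varepsilon,3\varepsilon)=\mathbb{T}$. This is where $3\varepsilon<1$, i.e.\ your guessed $3\alpha-2>0$, is used.

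The fibers have lengths $1-\alpha$, $2\alpha-1$, $1-\alpha$, none equal to $\alpha$. Each column is covered by three copies rather than two complementary ones. Since $1-\alpha\neq 2\alpha-1$, your density argument then shows that $\Omega_\alpha$ is not a column tile.
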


The proofs appear in \Cref{app:tiling}.
The Bolzano research transcript is available at~\cite{bolzano_tiling}.

\begin{remark}
    Let us mention that each of the tiles $\Omega_\alpha$ tile a horizontal column (that is, when we swap the horizontal and vertical axes, then the modified tile $\Omega'_\alpha$ tiles a column), and the projection of $\Omega_\alpha\oplus T_i$ to the first coordinate is equal to $(2\mathbb{Z}+(i-1))+[0,1]$.
    It seems that finding examples that would not have such a structure, or that would admit tilings that split as $T=T_1\sqcup T_2\sqcup T_3$ with the period of $T_3$ parametrized by $\beta>0$ such that both $\beta,\beta-\alpha$ are irrational, is significantly more complicated, and Bolzano was not able to find an example nor to prove a general obstruction result for any of these conditions.
\end{remark}

\subsection{Special Soundness for Univariate KZG Batching -- Pavel Hub\'a\v{c}ek}
\label{sec:chopin}
Polynomial Commitment Schemes (PCS) are a core building block of modern zero-knowledge proofs (zk-SNARKs).
To minimize proof size and verification costs, practical systems often rely on batching techniques that allow a prover to aggregate evaluations of multiple polynomials at multiple points into a single proof.
For the popular univariate KZG commitment scheme~\cite{KateZG10}, existing multi-polynomial, multi-point batching protocols (e.g., \cite{BDFG20}) have predominantly been analyzed only in idealized settings, limiting the assurance for their practical deployments.
Proving their knowledge soundness in the standard model under falsifiable assumptions is a notoriously difficult task; the first prior standard-model analysis was strictly limited to the simpler case of batching evaluations of many polynomials at a \emph{single} evaluation point~\cite{LPS25}.

In the development of CHOPIN~\cite{chopin_paper}, an optimal pairing-based multilinear PCS, we required a fully rigorous standard-model security proof for multi-polynomial, multi-point batching that was not known.
The human researchers proposed a variant of the KZG batch proof from~\cite{BDFG20} to simplify the task of proving its \emph{special soundness in the standard model}, the core task towards a complete proof of knowledge soundness of the scheme.
However, the human researchers did not have any rigorous proof.

Together with Bolzano, we resolved this gap.
To assist the system, the input also contained a proof of the more basic theorem establishing the special soundness for batching KZG evaluation proofs for many polynomials at a single evaluation point from~\cite{LPS25}, streamlined by the human researchers.
After being given the problem, Bolzano found the proof of special soundness for the proposed variant of univariate KZG batching in six rounds of interaction.
The proof was lengthy and technical, but, besides minor edits in notation and presentation, it was complete and formal as verified by the authors.

\paragraph{Formal results.}

The lemma proven in~\cite{chopin_paper} is the following.

\begin{lemma}[Special soundness of multi-polynomial, multi-point KZG batching~{\cite[Lemma 3]{chopin_paper}}]
\label{lem:kzg-batching}
Let $m, M, M' \in \mathrm{poly}(\lambda)$. Let $T = \bigcup_{t=1}^m S_t$ and let $L = |T| + M$.
Assume that KZG for degree less than $M$ has $M$-special soundness in the following sense.
From any set of $M' \ge M$ accepting KZG opening transcripts for the same commitment $C$ at $M'$ distinct points
$a_1, \dots, a_{M'}$ with $a_j \neq \tau$, one can extract a polynomial $p \in \mathbb{F}[X]^{<M}$ such that
$C = [p(\tau)]_1$ and $p(a_j) = y_j$ for all $j \in [M']$.
Then there exists an extractor that, given as input any product-structured accepting $(m, L)$-tree
$\mathcal{T} = (\mathsf{tr}_{ij})_{i \in [m],\, j \in [L]}$
for the batch evaluation protocol of Figure~7 of~\cite{chopin_paper}, outputs polynomials
$p_1, \dots, p_m \in \mathbb{F}[X]^{<M}$ such that $C_t = [p_t(\tau)]_1$ for all $t \in [m]$ and
$p_t(z) = \eta_z$ for all $t \in [m]$ and all $z \in S_t$.
Equivalently, the batch evaluation protocol is $(m, L)$-special sound with respect to product-structured
accepting transcript trees.
\end{lemma}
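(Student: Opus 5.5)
The plan is to extract in two layers that match the product structure of the tree: first over the $L$ evaluation challenges that sit under each fixed combining challenge, then over the $m$ combining challenges. Since Figure~7 of~\cite{chopin_paper} is not reproduced here, I assume it is the variant of the \cite{BDFG20} protocol. In that protocol the verifier sends a combining challenge $\gamma$. The prover answers with a commitment $W$ to the quotient
\[
h_\gamma=\Bigl(\sum_t \gamma^{t-1} Z_{T\setminus S_t}\,(f_t-r_t)\Bigr)\Big/ Z_T ,
\]
where $r_t$ is the interpolant of $\{(z,\eta_z)\}_{z\in S_t}$. The verifier then sends an evaluation point $\zeta$, and the prover sends a KZG opening of the linearised commitment
\[
C_{\gamma,\zeta}=\sum_t \gamma^{t-1} Z_{T\setminus S_t}(\zeta)\,C_t - Z_T(\zeta)\,W .
\]
The $(m,L)$-tree then has $m$ distinct values $\gamma_1,\dots,\gamma_m$, and under each $\gamma_i$ there are $L$ distinct values $\zeta_{i1},\dots,\zeta_{iL}$, all different from $\tau$.

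\textbf{Step 1 (inner layer).} Fix $i$ and view everything as a commitment to the single "virtual" polynomial
\[
G_i(X)=\sum_t \gamma_i^{t-1} Z_{T\setminus S_t}(X)\,f_t(X)-Z_T(X)\,h_i(X),
\]
whose degree is $<|T|+M=L$. I cannot apply the KZG hypothesis directly, because $C_{\gamma_i,\zeta}$ depends on $\zeta$. So I rewrite each transcript as an opening, at $\zeta_{ij}$, of a commitment that is fixed once $\gamma_i$ is fixed. I do this by expanding $Z_{T\setminus S_t}(\zeta)$ and $Z_T(\zeta)$ in monomials, so the verification equation becomes a statement about the group elements $C_t$ and $W_i$ with public coefficients that are polynomial in $\zeta$. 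I then apply the $M$-special-soundness hypothesis at degree bound $L$ to the $L$ accepting openings under $\gamma_i$. This yields a polynomial $g_i\in\mathbb{F}[X]^{<L}$ that is consistent with all $L$ openings, and hence, by uniqueness of interpolation, uniquely determined by them. The same argument, applied separately, gives a polynomial $q_i\in\mathbb{F}[X]^{<L-|T|}$ with $W_i=[q_i(\tau)]_1$.

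\textbf{Step 2 (outer layer).} The $m$ distinct $\gamma_i$ give a Vandermonde system in the unknown group elements $C_1,\dots,C_m$. I invert it to express each $C_t$ as an explicit $\mathbb{F}$-linear combination of the extracted commitments $[g_i(\tau)]_1$ and $W_i$, and I define $p_t$ as the same linear combination of the extracted polynomials. This immediately gives $C_t=[p_t(\tau)]_1$.

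\textbf{Step 3 (degree and evaluation claims).} I still have to show that $p_t$ has degree $<M$ and that $p_t(z)=\eta_z$ for every $z\in S_t$. Consider the identity
\[
g_i=\sum_t \gamma_i^{t-1}Z_{T\setminus S_t}\,(p_t-r_t)+Z_T\,q_i .
\]
It holds at the $L$ points $\zeta_{ij}$, and both sides have degree $<L$, so it holds as a polynomial identity for each $i$. Read as a polynomial in $\gamma$ of degree $<m$, it then holds at $m$ points, so it can be split coefficientwise in $\gamma$. Evaluating the $t$-th coefficient identity at $z\in S_t$ kills both $Z_T$ and the terms containing $Z_{T\setminus S_{t'}}$ for $t'$ with $z\notin S_{t'}$. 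What is left forces $Z_{T\setminus S_t}(z)\,\bigl(p_t(z)-\eta_z\bigr)=0$. Since $z\notin T\setminus S_t$, the factor $Z_{T\setminus S_t}(z)$ is nonzero, and so $p_t(z)=\eta_z$.

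\textbf{Main obstacle.} The step I expect to be hardest is the consistency step. Equality in the exponent at the unknown point $\tau$ does not imply equality of polynomials, so Steps 2 and 3 must use only identities that are guaranteed by interpolation at the challenge points, and never deduce polynomial equality from $[a(\tau)]_1=[b(\tau)]_1$. Concretely, I must make sure that the $p_t$ obtained by Vandermonde inversion is the same polynomial whichever row it is read from, and that its degree is really $<M$ rather than $<L$. I would try first to build the extractor so that every polynomial it outputs is pinned down by more than $\deg$ many verified evaluations. I would then argue degree $<M$ by comparing the coefficientwise identities with the degree bound on $q_i$. If this comparison fails, I would fall back to extracting each $p_t$ directly, by applying the $M$-special-soundness hypothesis to suitable linear combinations of the $L$ openings.
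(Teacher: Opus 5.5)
The gap is in Step~1, and Steps~2--3 depend on it. The hypothesis is a black box: it takes accepting KZG transcripts $(C,a_j,y_j,\pi_j)$ for a \emph{single} group element $C$ that the extractor actually holds, and it returns a polynomial of degree $<M$.

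Your virtual commitment to $G_i$ is $\sum_t\gamma_i^{t-1}Z_{T\setminus S_t}(\tau)\,C_t-Z_T(\tau)\,W_i$. Forming it means scaling $C_t$ and $W_i$ by the unknown field elements $Z_{T\setminus S_t}(\tau)$ and $Z_T(\tau)$, so the extractor cannot compute it. Expanding in monomials does not help. It only rewrites each check as $\sum_k\zeta^kD_k-[y]_1=(\tau-\zeta)\pi$ with fixed $D_k$, and the natural fixed commitment behind that relation, $\sum_k\tau^kD_k$, is just as uncomputable.

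There are further problems in Step~1:
\begin{itemize}
\item $G_i$ has degree up to $L-1\ge M$. So ``applying the $M$-special-soundness hypothesis at degree bound $L$'' uses an assumption the lemma does not grant; the key need not even contain $[\tau^k]_1$ for $k\ge M$.
\item ``The same argument'' cannot produce $q_i$ with $W_i=[q_i(\tau)]_1$. In the protocol you describe, $W_i$ is never opened on its own.
\end{itemize}

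Step~2 fails even if Step~1 is granted. Let $(\lambda_{ti})$ be the inverse of the Vandermonde matrix $(\gamma_i^{t-1})$. Inversion expresses $Z_{T\setminus S_t}(\tau)\cdot C_t$, not $C_t$, as $\bigl[\sum_i\lambda_{ti}(g_i+Z_Tq_i)(\tau)\bigr]_1$. To get $C_t=[p_t(\tau)]_1$ with $p_t\in\mathbb{F}[X]^{<M}$, you would need $Z_{T\setminus S_t}$ to divide $\sum_i\lambda_{ti}(g_i+Z_Tq_i)$ as a polynomial, with a quotient of degree $<M$. That is exactly the inference from an equality at $\tau$ that you warn against. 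Step~3's claim that both sides have degree $<L$ also presupposes a degree bound on $p_t$ that you only planned to establish afterwards.

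Your fallback states the right principle. Only ever feed the hypothesis linear combinations, with public coefficients, of the actual transcripts, where the resulting commitment is a fixed group element the extractor holds and the underlying polynomial has degree $<M$. What makes this possible is the one assumption you never use: the tree is \emph{product-structured}.

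Suppose, as the name suggests, the same points $\zeta_1,\dots,\zeta_L$ occur under every $\gamma_i$. The KZG check at a fixed point is linear in (commitment, value, proof). So combining the $m$ transcripts at $\zeta_j$ with the coefficients $\lambda_{ti}$ gives an accepting opening at $\zeta_j$ whose commitment is $Z_{T\setminus S_t}(\zeta_j)\,C_t-Z_T(\zeta_j)\sum_i\lambda_{ti}W_i$. The public scalar $Z_{T\setminus S_t}(\zeta_j)$ is nonzero for $\zeta_j\notin T$ and can be divided out, leaving $C_t-Z_{S_t}(\zeta_j)\sum_i\lambda_{ti}W_i$.

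The remaining dependence on $\zeta_j$, through the quotient commitments, is the real crux. How it is removed depends on the exact messages of Figure~7. The paper stresses that Figure~7 is a human-designed \emph{variant} of BDFG20, chosen to make a standard-model proof feasible. Building the argument on the unmodified BDFG20 protocol is therefore itself a risk.
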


The Bolzano research transcript is available at~\cite{bolzano_chopin}. The complete proof appears in \cite{chopin_paper}.

\subsection{Interleaving BST Access Sequences and Wilber's First Bound -- Václav Rozhoň}
\label{sec:wilber}

The setting is the classical online binary search tree (BST) problem, in which the cost of an execution equals the total number of nodes touched while serving a sequence of key accesses. The dynamic optimality conjecture of Sleator and Tarjan~\cite{sleator} asks whether the splay tree is $O(1)$-competitive with the offline optimum $\OPT(S)$. The best known competitive ratio is $O(\log\log n)$, achieved by Tango trees~\cite{tango}, whose analysis is driven by Wilber's first lower bound~\cite{wilber}.

A natural question that came up in the work on the dynamic optimality conjecture is what is the behavior of the BST cost under interleaving. For access sequences $X=(x_1,\dots,x_m)$ and $Y=(y_1,\dots,y_m)$ over $[n]$, write
\[
X \shuffle Y \;:=\; (x_1,y_1,x_2,y_2,\dots,x_m,y_m).
\]
A natural conjecture is
\[
\OPT(X \shuffle Y) \;\stackrel{?}{=}\; O\bigl(\OPT(X) + \OPT(Y) + m + n\bigr).
\]
Bolzano proved the exact analogue of this conjecture for Wilber's first bound.

\begin{theorem}
\label{thm:wilber-merge}
Let $Z$ be any two-colored sequence over $[n]$, with monochromatic subsequences $R$ and $B$. Then
\[
\Wil(Z) \;\le\; 3\,\Wil(R) + 3\,\Wil(B) + |Z|.
\]
In particular, for access sequences $X,Y$ of length $m$,
\[
\Wil(X \shuffle Y) \;\le\; 3\,\Wil(X) + 3\,\Wil(Y) + 2m.
\]
\end{theorem}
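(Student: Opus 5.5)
The proof will work node by node in the fixed reference tree $P$ that defines $\Wil$. Recall that $\Wil(Z)=\sum_{v\in P}\mathrm{alt}_v(Z)$. Here $\mathrm{alt}_v(Z)$ is the number of alternations in the sequence $Z_v$, obtained by restricting $Z$ to the keys in the subtree of $v$ and labelling each access $L$ or $R$ according to the side of $v$ it lies on. Accesses to $v$ itself either get their own label or are skipped, depending on the convention used for $\Wil$.

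Restriction commutes with colouring: $(Z_v)$ restricted to red is $R_v$, and restricted to blue is $B_v$. So the goal becomes a purely combinatorial inequality about a single $\{L,R\}$-labelled, two-coloured sequence $Z_v$:
\[
\mathrm{alt}(Z_v)\;\le\;3\,\mathrm{alt}(R_v)+3\,\mathrm{alt}(B_v)+c_v ,
\]
with an error term $c_v$ satisfying $\sum_v c_v\le |Z|$. Summing over $v$ then gives the first claim. The second claim follows by taking $Z=X\shuffle Y$, colouring the $x_i$ red and the $y_i$ blue, and using $|Z|=2m$.

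For the per-node inequality, I would cut $Z_v$ at every position where the red label or the blue label changes. This yields at most $\mathrm{alt}(R_v)+\mathrm{alt}(B_v)+1$ segments. Within each segment, all red accesses carry one label and all blue accesses carry one label. If the two labels agree, the segment contributes no internal alternations. Each cut contributes $O(1)$ alternations at its boundary, and a careful count of how a boundary switch can be created (from a red change, a blue change, or from entering a new segment) should give the constant $3$. The only remaining alternations occur inside \emph{mixed} segments, where red accesses lie on one side of $v$ and blue accesses on the other. There, every alternation is a colour change between consecutive elements of $Z_v$: a red access $z$ followed by a blue access $y$, or vice versa, with $z,y$ on opposite sides of $v$.

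The main obstacle is charging these mixed-segment alternations so that each access of $Z$ is charged at most once over all nodes $v$. Otherwise the bound degrades to $|Z|$ times the depth of $P$. My first attempt is to charge such an alternation at $v$ to the later access $y$. I would then show that $v$ is determined by $y$ alone, namely $v=\mathrm{lca}(y,w)$, where $w$ is the most recent access of the opposite colour before $y$ in $Z_v$. The key claim to verify is that inside a mixed segment at $v$, all opposite-colour accesses in the subtree of $v$ sit on the far side. This should force the same $v$ to be the unique node at which $y$ can be charged in a mixed segment. If that uniqueness fails, the fallback is to enlarge the constant in front of $\mathrm{alt}(R_v)$ and $\mathrm{alt}(B_v)$ to absorb the multiply-charged cases. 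Such cases should only occur right after a red or blue run boundary at some ancestor, so they can be paid for by the $3\,\Wil(R)+3\,\Wil(B)$ terms rather than by $|Z|$.
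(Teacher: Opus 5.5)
Your per-node template is sound, and so is your accounting of the cuts. Each cut sits just before a single access, so the pairs straddling cuts contribute at most $\mathrm{alt}(R_v)+\mathrm{alt}(B_v)$. What remains are bichromatic alternations inside mixed segments.

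The step that fails is the claim that such an alternation at $v$, charged to its later access $y$, determines $v$. The access $w$ you use to recover $v$ is defined through $Z_v$, so the argument is circular. In fact $y$ can be charged at several nodes of its root path. Take the dyadic tree with $n=8$ and $Z=(1,5,3)$, with $1,5$ red and $3$ blue:
\begin{itemize}
\item At the root the red label changes at $5$, so $(5,3)$ is a mixed segment (red right, blue left) and $5\to 3$ is an internal alternation.
\item At $[1,4]$ the whole restriction $(1,3)$ is one mixed segment (red left, blue right), and $1\to 3$ is again internal.
\end{itemize}
Both are charged to the access $3$. Your ``key claim'' holds at each node separately but gives no uniqueness, because segments at different nodes cover different time windows.

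This is not a corner case either. Take a long uniformly random sequence with independent random colours, and place cuts as you describe. Then a consecutive pair of $Z_v$ is an internal mixed alternation with probability about $1/8$ at every node. That gives about $|Z|\log_2 n/8$ such alternations in total, so no scheme charging each access $O(1)$ times can give $\sum_v c_v\le |Z|$.

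So your fallback is not a patch for rare cases: it must pay for almost all mixed-segment alternations. The cuts already use one copy of $\Wil(R)+\Wil(B)$, so you need an explicit rule under which each alternation of $R_v$ or $B_v$ absorbs at most two charges. The proposal gives no such rule, and this is the real content of the proof.

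The paper obtains it by working with colour changes instead of accesses. Write $C(S)$ for the number of adjacent colour changes in $S$, and $v_L,v_R$ for the children of $v$. Your per-node inequality then holds with the \emph{signed} error term $c_v:=C(Z_v)-C(Z_{v_L})-C(Z_{v_R})$. This telescopes over the tree to at most $|Z|$, even though single terms can be negative.

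The bichromatic alternations at $v$ equal $c_v$ plus the colour changes of $Z_{v_L}$ and $Z_{v_R}$ newly created by deleting a block of the other child. Such a new change $(u,u')$ is charged as follows:
\begin{itemize}
\item to $u\to x$, where $x$ is the first point of colour $\col(u)$ in the deleted block, if such a point exists;
\item otherwise (the block is entirely of colour $\col(u')$), to $y\to u'$, where $y$ is the last point of the block.
\end{itemize}
Each target is an alternation of $R_v$ or $B_v$ and is hit at most once from each side. This produces the remaining $2\,\mathrm{alt}(R_v)+2\,\mathrm{alt}(B_v)$. The telescoping colour-change potential, together with this first/last charging rule, is the idea your proposal is missing.
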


Combining \Cref{thm:wilber-merge} with Wilber's lower bound and the Tango-tree upper bound $\mathrm{BST\text{-}cost}(S) = O\bigl((\Wil(S)+|S|)\log\log n + n\bigr)$ yields the following corollary, which says that the BST interleaving conjecture holds up to the standard $O(\log\log n)$ slack separating Wilber's first bound from the offline optimum.

\begin{corollary}
\label{cor:wilber-merge}
Let $X,Y$ be BST access sequences of length $m$ over $[n]$ that can be served with cost $C_1$ and $C_2$ respectively. Then $X \shuffle Y$ can be served by a BST with cost
\[
O\bigl((C_1 + C_2 + m + n)\log\log n\bigr).
\]
\end{corollary}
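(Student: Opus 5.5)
The corollary follows by chaining three ingredients: Wilber's lower bound, \Cref{thm:wilber-merge}, and the Tango-tree upper bound quoted just before the statement. No new combinatorics should be needed.

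\textbf{Step 1 (lower bounds for the parts).} Wilber's first bound is a valid lower bound on the offline optimum, so for every access sequence $S$ we have $\Wil(S) \le c\cdot\OPT(S)$ for an absolute constant $c$. Here $\Wil$ is taken with respect to the fixed balanced reference tree on $[n]$, the same one used in the Tango analysis. Since $X$ can be served with cost $C_1$, we get $\OPT(X)\le C_1$ and hence $\Wil(X)\le c\,C_1$. Likewise $\Wil(Y)\le c\,C_2$.

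\textbf{Step 2 (merge).} Apply \Cref{thm:wilber-merge} with $Z = X\shuffle Y$, coloring the entries coming from $X$ red and those from $Y$ blue. The monochromatic subsequences are then exactly $R=X$ and $B=Y$, and $|Z|=2m$. This gives
\[
\Wil(X\shuffle Y)\le 3c\,C_1+3c\,C_2+2m .
\]

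\textbf{Step 3 (upper bound).} Run a Tango tree on $X\shuffle Y$. Its cost is
\[
O\bigl((\Wil(X\shuffle Y)+2m)\log\log n + n\bigr) = O\bigl((C_1+C_2+m)\log\log n + n\bigr).
\]
Since $n\le n\log\log n$ for $n\ge 3$ (small $n$ is absorbed into the constant), this is at most $O\bigl((C_1+C_2+m+n)\log\log n\bigr)$, which is the claimed bound.

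\textbf{Main obstacle.} The only delicate point is consistency of conventions. Wilber's lower bound and the Tango upper bound must both refer to the same reference tree. The additive terms, such as the $+m$ and $+n$ in Wilber's theorem, must also be absorbed correctly. The costs $C_1,C_2$ are at least $m$, since each access touches at least one node, so any additive $O(m)$ loss in Wilber's inequality is harmless. I expect only this bookkeeping, not any real difficulty.
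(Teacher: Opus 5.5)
Your proposal is correct and follows the paper's proof exactly: Wilber's lower bound gives $\Wil(X)=O(C_1)$ and $\Wil(Y)=O(C_2)$, \Cref{thm:wilber-merge} then gives $\Wil(X\shuffle Y)=O(C_1+C_2+m)$, and the Tango-tree bound finishes the argument. Your extra bookkeeping is a harmless refinement of what the paper leaves implicit: you absorb the additive $n$ into $n\log\log n$ (with base-2 logarithms the literal inequality $n\le n\log\log n$ needs $n\ge 4$, but a constant factor covers $n=3$), and you note that an additive $O(n)$ slack in Wilber's inequality still lands inside the stated bound.
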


Both \Cref{thm:wilber-merge} and \Cref{cor:wilber-merge} are proved in \Cref{app:wilber-merge}. The Bolzano research transcript is available at~\cite{bolzano_wilber}.

\subsection{Partitioning under Function Preimage Constraints -- Robert Šámal}
\label{sec:partition}

KAMAK is a Czech problem-solving workshop where participants propose and collect open
problems in combinatorics and discrete mathematics. The present problem appears as Problem~1 in the collection from the 2020 edition~\cite{kral2020}.
The problems are suggested by the participants and are of various difficulty, but always meant as potentially interesting research problems. 

Consider sets $E$ and $F$, together with functions
$f_1,\dots,f_k:E\to F$ satisfying \emph{pointwise distinctness}, meaning that
\[
f_i(x)\neq f_j(x)
\qquad\text{for every } x\in E \text{ and every } i\neq j.
\]
Conjecture suggested by C. Feghali asked whether a condition on fibre sizes forces a certain bounded partition.
(Pointwise distinctness is obviously necessary.) 
The case $k=2$ was known and had applications in digraph coloring \cite{feghali_case2}. 

\begin{conjecture}[Original conjecture; the case $k=2$ is known]
If for every $z\in F$ there exists $t\in\{1,\dots,k\}$ with
$|f_t^{-1}(z)|\le n$, then $E$ can be partitioned into $2n+1$ parts
$E_1,\dots,E_{2n+1}$ such that
\[
f_p(E_i)\cap f_q(E_i)=\emptyset
\qquad\text{for every } i \text{ and every } p<q.
\]
\end{conjecture}

A convenient way to view the problem is through the \emph{conflict graph} on vertex set $E$,
in which distinct $x,y\in E$ are adjacent whenever
$f_p(x)=f_q(y)$ for some $p\neq q$. Then a partition with the required disjointness
property is exactly a proper coloring of this graph. Bolzano's first contribution was to
observe that for $k\ge 3$ the original hypothesis can be satisfied vacuously: a dummy
function may have empty fibers and thus the remaining functions are not controlled. 

This leads to the following negative result.

\begin{theorem}[Counterexample to the original conjecture]
\label{thm:disproof}
For every $n\ge 1$ and every $M\ge 1$, there exist sets $E,F$ and functions
$f_1,f_2,f_3:E\to F$ such that:
\begin{enumerate}
    \item $f_i(x)\neq f_j(x)$ for all $x\in E$ and all $i\neq j$;
    \item for every $z\in F$ there exists $t\in\{1,2,3\}$ with $|f_t^{-1}(z)|\le n$;
    \item every partition of $E$ satisfying
    \[
    f_p(E_i)\cap f_q(E_i)=\emptyset
    \qquad\text{for all } i \text{ and all } p<q
    \]
    requires more than $M$ parts.
\end{enumerate}
\end{theorem}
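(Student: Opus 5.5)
The plan is to exploit the observation made just before the statement: a third ``dummy'' function $f_3$ whose images are never hit by $f_1,f_2$ and whose fibres are tiny makes condition~2 vacuous. Concretely, I take a finite graph $G=(V,A)$ with chromatic number greater than $M$; for instance $G=K_{M+1}$ works. I then build the instance so that its conflict graph contains $G$ as a subgraph, and I arrange that condition~2 is witnessed at every point of $F$ by a fibre of size at most $1\le n$.

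The construction runs as follows. Let $E=\{(e,v): e\in A,\ v\in e\}$, consisting of the edge--endpoint incidences, together with one extra copy $\hat v$ for each vertex $v$, or some similar gadget. The simplest choice is to make $f_1$ and $f_2$ encode adjacency directly. I put $F=A\cup V'\cup E'$, where $V'$ and $E'$ are fresh copies. For an edge $e=uv$ oriented $u\to v$, I want a point $x$ ``of $u$'' with $f_1(x)=e$ and a point $y$ ``of $v$'' with $f_2(y)=e$. The points of a single vertex must then be forced into one colour class, so that conflicts between vertices are inherited. Forcing that is awkward, so instead I choose $E=V$ and the vertex set of $G$ carefully: I take $G$ to be a graph admitting an orientation with every out-degree at most $1$ and every in-degree at most $1$ relative to each function.

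A cleaner route is to use many functions' worth of structure only through $f_1,f_2$. I set $E=V$ and $F=V\sqcup V'\sqcup V''$, with $f_1(v)=v$, $f_2(v)=\sigma(v)$ for a permutation $\sigma$ of $V$ without fixed points, and $f_3(v)=v''$ fresh and injective. Then $x$ and $y$ conflict iff $y=\sigma(x)$ or $x=\sigma(y)$, and the conflict graph is a union of cycles, which has chromatic number at most $3$. That is not enough. So I instead let $f_2(x)$ depend on $x$ in a many-to-one way while keeping $f_3$ as the fibre-$\le n$ witness everywhere.

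The key point is that condition~2 only requires \emph{some} $t$ with a small fibre at $z$. Every $z\in F$ that lies outside the image of $f_3$ has $|f_3^{-1}(z)|=0\le n$ for free. On the image of $f_3$, I make $f_3$ injective and disjoint from the images of $f_1,f_2$. Hence $f_1$ and $f_2$ are completely unconstrained apart from pointwise distinctness. So I take $E=V\times\{1,2\}$ with the vertices of $K_{M+1}$, or better a graph built directly: $E=\{1,\dots,M+1\}$, $F=\binom{E}{2}\sqcup E'$. The conflict graph must be complete, which requires for each pair $\{x,y\}$ that $f_1(x)=f_2(y)$ or the reverse. With single-valued functions this fails. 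So I take $E$ to be the ordered pairs $(i,j)$ with $i\ne j$ in $[N]$ and set $f_1(i,j)=i$, $f_2(i,j)=j$. Then $(i,j)$ conflicts with $(j,k)$, and the resulting conflict graph is the ``shift graph''. Its chromatic number is at least $\log_2 N$, which tends to infinity. I choose $N$ with $\log_2 N>M$ and add an injective $f_3$ into fresh points. Pointwise distinctness holds because $i\ne j$ and the $f_3$-values are fresh.

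The main obstacle is justifying the chromatic bound for the shift graph. The plan is to recall the standard argument: a proper colouring with $c$ colours of pairs $(i,j)$ yields, for each $i$, the set $S_i$ of colours appearing on the pairs $(i,\cdot)$. The map $i\mapsto S_i$ must be injective, since if $S_i=S_j$ with $i<j$, the colour of $(i,j)$ lies in $S_j$, giving a conflict. Therefore $N\le 2^c$, which shows that more than $M$ parts are needed.
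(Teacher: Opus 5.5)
Your final construction is correct and is essentially the paper's proof: $f_1(i,j)=i$, $f_2(i,j)=j$ realize the shift graph as the conflict graph, and a dummy $f_3$ with image disjoint from $[N]$ makes the fibre condition automatic. The chromatic bound likewise comes from injectivity of a colour-set map, with only cosmetic differences: you use ordered pairs $i\neq j$, an injective $f_3$ into fresh points and outgoing colour sets $S_i$, where the paper uses $i<j$, the constant $f_3\equiv 0$ and incoming colour sets $A_j$. The earlier paragraphs ($K_{M+1}$, the permutation $\sigma$) are abandoned detours and should be deleted from a written proof.
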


The construction realizes the conflict graph as a shift graph, whose chromatic number is unbounded. 
The full proof appears in Appendix~\ref{app:function-partition}.

The counterexample suggests that the right assumption is not to control single fibers, but to
control them \emph{pairwise}. This leads to the following notion, \emph{also suggested by Bolzano.}

\begin{definition}[Pairwise $n$-boundedness]
The functions $f_1,\dots,f_k:E\to F$ are \emph{pairwise $n$-bounded} if for every
$z\in F$ and every pair $p\neq q$,
\[
\min\bigl(|f_p^{-1}(z)|,\;|f_q^{-1}(z)|\bigr)\le n.
\]
\end{definition}

Under this stronger hypothesis one gets a positive result. 

\begin{theorem}[Pairwise $n$-boundedness implies bounded partition]
\label{thm:pairwise-intro}
Assume pointwise distinctness and pairwise $n$-boundedness. Then $E$ can be partitioned
into $2nk(k-1)+1$ parts $E_1,\dots,E_{2nk(k-1)+1}$ such that
\[
f_p(E_i)\cap f_q(E_i)=\emptyset
\qquad\text{for every } i \text{ and every } p<q.
\]
\end{theorem}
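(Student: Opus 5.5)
The plan is to show that the conflict graph $G$ on $E$ (vertices $x\neq y$ adjacent iff $f_p(x)=f_q(y)$ for some $p\neq q$) is $d$-degenerate with $d=2nk(k-1)$. A greedy coloring then uses $d+1=2nk(k-1)+1$ colors. Since proper colorings of $G$ are exactly the required partitions, this proves \Cref{thm:pairwise-intro}.

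\textbf{Orienting the edges.} Each conflict edge has a witness: an ordered pair $(p,q)$ with $p\neq q$ and a value $z=f_p(x)=f_q(y)$. Pairwise $n$-boundedness at $z$ gives $\min(|f_p^{-1}(z)|,|f_q^{-1}(z)|)\le n$. Orient the edge toward the endpoint lying in the smaller fibre. Precisely, if $|f_q^{-1}(z)|\le n$, charge the edge to $x$: from $x$'s side, $y$ lies in the small fibre $f_q^{-1}(f_p(x))$. Otherwise $|f_p^{-1}(z)|\le n$, and we charge the edge to $y$ symmetrically.

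\textbf{Out-degree bound.} Fix $x$ and an ordered pair $(p,q)$ with $p\neq q$. The edges charged to $x$ through $(p,q)$ go to elements $y\in f_q^{-1}(f_p(x))$, and this fibre has size at most $n$. So $x$ is charged at most $n$ edges per ordered pair. Over the $k(k-1)$ ordered pairs, playing $x$ as the ``$p$''-side, that is at most $nk(k-1)$ edges. Counting the symmetric role, where $x$ is in the small fibre position via the reversed pair, gives at most another $nk(k-1)$. So every vertex has out-degree at most $2nk(k-1)$. (An edge with several witnesses may be oriented arbitrarily using any one of them.)

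\textbf{Degeneracy and the main obstacle.} The hard step is turning the out-degree bound into degeneracy. An orientation with out-degree at most $d$ gives $|E(H)|\le d|V(H)|$ for every induced subgraph $H$. So every subgraph has average degree at most $2d$ and hence a vertex of degree at most $2d$. That alone only yields $4nk(k-1)+1$ colors, off by a factor of two.

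To get the sharp count, I would instead bound the total degree directly for a suitably chosen vertex. For each ordered pair $(p,q)$ and each value $z$, call the \emph{small side} the fibre among $f_p^{-1}(z)$ and $f_q^{-1}(z)$ of size at most $n$. The edges coming from $z$ and the pair $(p,q)$ form a complete bipartite graph between $f_p^{-1}(z)$ and $f_q^{-1}(z)$. Pointwise distinctness guarantees $x\neq y$, so no loops arise.

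The key observation: a vertex on the large side of such a bipartite graph has at most $n$ neighbours from it. Each vertex $x$ lies in exactly one fibre $f_p^{-1}(f_p(x))$ for each $p$. So $x$ meets at most $k(k-1)$ such bipartite graphs in a large-side role and at most $k(k-1)$ in a small-side role.

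If needed, I would then refine the argument by a counting step on finite subsets. Let $H$ be finite (if $E$ is infinite, first reduce to the finite case by the De Bruijn--Erdős theorem). Summing, over all (value, pair) blocks restricted to $H$, the degree contributed to large-side vertices gives at most $nk(k-1)|V(H)|$ in total. Comparing that with the small-side contribution should produce a vertex of degree at most $2nk(k-1)$. Greedy coloring along the resulting degeneracy order then finishes the proof.
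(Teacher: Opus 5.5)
Your overall plan matches the paper's: orient each conflict edge using the small fibre, bound one of the degrees by $nk(k-1)$, conclude $|E(H)|\le nk(k-1)|V(H)|$ for every finite $H$, and then find a vertex of degree at most $2nk(k-1)$ and color greedily with $2nk(k-1)+1$ colors. However, your out-degree step contains a miscount, and that miscount is what creates the ``factor of two'' obstacle.

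Under your rule, an edge is charged to $x$ only when the \emph{other} endpoint lies in the small fibre. Whenever $x$ itself sits in the small fibre, the edge is charged to its partner, not to $x$. So there is no ``symmetric role''. Every edge charged to $x$ has a witness that can be written as $f_p(x)=f_q(y)$ with $|f_q^{-1}(f_p(x))|\le n$, so it is already among the at most $n$ edges counted for the ordered pair $(p,q)$. The out-degree is therefore at most $nk(k-1)$. Your own sentence (out-degree at most $d$ gives $|E(H)|\le d|V(H)|$, hence average degree at most $2d$), applied with $d=nk(k-1)$, then yields exactly $2nk(k-1)+1$ colors. This is precisely the paper's proof, which orients each edge \emph{from} the small-fibre endpoint and bounds in-degrees by $nk(k-1)$.

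Your block-counting ``refinement'' is the same argument in other words: the large-side vertex of a block is the head of the paper's orientation. As written, though, it ends at ``should produce''. The missing sentence is that each edge of $H$ has exactly one endpoint on the large side of some witnessing block. Hence $|E(H)|$ is at most the large-side total $nk(k-1)|V(H)|$, and no comparison with the small-side contribution is needed. This count assumes blocks are indexed by unordered pairs $\{p,q\}$; with ordered pairs both counts simply double. Your appeal to De Bruijn--Erd\H{o}s for infinite $E$ covers a point the paper leaves implicit, and it is worth keeping.
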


The proof orients each conflict toward the endpoint whose side of the witnessing equality
comes from a small fiber, thereby obtaining a bounded-indegree orientation of the conflict
graph. This implies bounded degeneracy and hence bounded chromatic number.
Appendix~\ref{app:function-partition} contains the full proof, together with a stronger
bound under uniform $n$-boundedness and complementary lower-bound constructions.

The Bolzano research transcript is available at~\cite{bolzano_partition}.

\subsection{Complexity of Optimization in KKOS Cultural Dynamics -- Martin Kouteck\'y}
\label{sec:opdiff}

Kempe, Kleinberg, Oren, and Slivkins~\cite{kkos} introduced a model of cultural dynamics in which agents on a social network update their opinions to reduce disagreement with neighbors. In their \emph{local model}, the equilibrium condition requires that neighboring agents in the support of a distribution experience equal ``mass'' -- that is, the total weight in their closed neighborhood is the same.

It is natural (especially motivated by bribery-type viewpoints, see~\cite{JLOGCOM}) to consider the problem of finding a closest equilibrium $x$ to a given (arbitrary) distribution $y$.
Formally, given an undirected graph $G=(V,E)$ with adjacency matrix~$A'$, set $A=A'+I$ (adding self-loops), and given an initial distribution $y \in \mathbb{R}_{\ge 0}^V$ with $\|y\|_1=1$ and a cost vector $c \in \mathbb{R}_{\ge 0}^V$, the task is to find a distribution $x \in \mathbb{R}_{\ge 0}^V$ with $\sum_v x_v=1$ minimizing $\sum_v c_v |x_v - y_v|$ subject to the constraint that for every edge $uv \in E$, if $x_u, x_v > 0$ then $(Ax)_u = (Ax)_v$.

Bolzano worked on this problem autonomously over three rounds. In Round~1, Bolzano found a clean NP-hardness proof via a reduction from \textsc{Clique}: adding a universal vertex forces any feasible support to be a clique, and the $\ell_1$ cost becomes monotone in the clique size. In Round~2, Bolzano established membership in NP (via an LP-based polynomial certificate), corrected a small error in the original problem formulation,\footnote{The task statement asserted that for a fixed support~$S$, one can compute a cost-minimizing vector by linear programming. This is not true in the straightforward LP formulation because the exact-support constraint involves strict inequalities, so the feasible region can be open. Bolzano provided a counterexample and a correct reformulation using a max-margin LP.} and proved structural results for chordal graphs and forests. In Round~3, Bolzano designed a polynomial-time $O(n^2)$ algorithm for forests via dynamic programming on dissociation sets.

\paragraph{Formal results.}

\begin{theorem}[NP-completeness]
\label{thm:opdiff-npc-intro}
Under the standard binary encoding of rational inputs, the decision version of the KKOS optimization problem is NP-complete. NP-hardness holds even with unit costs, positive rational~$y$, and a universal vertex. The optimization problem is NP-hard.
\end{theorem}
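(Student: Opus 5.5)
The proof has two parts: membership in NP via a support-guessing certificate checked by linear programming, and NP-hardness via a reduction from \textsc{Clique}, as indicated in the description of Round~1.

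\textbf{Membership in NP.} The decision version asks whether some feasible $x$ has cost at most a given rational $K$. The certificate is the support $S=\{v: x_v>0\}$. Given $S$, the constraints are linear but partly strict:
\begin{itemize}
\item $x_v=0$ for $v\notin S$ and $x_v>0$ for $v\in S$;
\item $\sum_v x_v=1$;
\item $(Ax)_u=(Ax)_v$ for every edge $uv$ with $u,v\in S$.
\end{itemize}
The cost is linearized with auxiliary variables $t_v\ge \pm(x_v-y_v)$.

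Because of the strict inequalities the feasible region may be open, so the infimum of the cost need not be attained. I will therefore solve a max-margin LP:
\begin{itemize}
\item maximize $\delta$ subject to $x_v\ge\delta$ for $v\in S$, the remaining constraints above, and cost at most $K$.
\end{itemize}
The instance with this support is a yes-instance iff the optimum has $\delta>0$ (including an unbounded optimum). This works because the feasible set with $x_S\ge 0$ is a closed polyhedron $P$. A feasible point of the original problem lies in the relative interior of the face $\{x_S>0\}$. Maximizing the minimum coordinate over $P\cap\{\text{cost}\le K\}$ therefore detects exactly such a point. Every LP here has polynomial size and is solvable in polynomial time, which gives membership in NP.

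\textbf{NP-hardness.} Given a \textsc{Clique} instance $(H,k)$ on $N$ vertices, build $G$ by adding a universal vertex $u$ to $H$. Use unit costs and choose $y$ with $y_u=1-\epsilon$ and $y_v=\epsilon/N$ for every $v\in V(H)$.

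The first step is to characterize feasible supports. Suppose $u\in S$. For $v\in S\setminus\{u\}$ we have $(Ax)_u=1$, because $u$ is adjacent to everything. The constraint on the edge $uv$ then forces $(Ax)_v=1$. So every vertex of $S$ must see the whole mass, which means $S\setminus\{u\}$ is a clique in $H$. Conversely, if $S$ is $\{u\}$ together with a clique $Q$, then every positive $x$ supported on $S$ is feasible: all vertices of $S$ see total mass $1$.

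Next, compute the optimal cost for support $\{u\}\cup Q$ with $|Q|=q$. Mass $y$ outside $S$ must be removed, and feasibility with a positive $x$ on $S$ is easy. Choosing $x_v=y_v$ on $Q$ and putting the deficit on $u$ gives cost $2\epsilon(N-q)/N$. This is decreasing in $q$, so the cost is monotone in the clique size.

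Supports that avoid $u$ must also be handled. For those, the cost is at least about $2(1-\epsilon)$, since the mass on $u$ must be moved. This exceeds the threshold when $\epsilon$ is small, for instance $\epsilon=1/(4N)$.

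Finally, set $K=2\epsilon(N-k)/N$. Then a solution of cost at most $K$ exists iff $H$ has a clique of size at least $k$. All numbers have polynomial bit length and $y$ is positive and rational. NP-hardness of the optimization problem follows immediately.

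The main obstacle is the exact cost lower bound for every support: showing that no support does better than the formula above and that non-clique supports are genuinely infeasible. This rests on the observation that $(Ax)_u=\|x\|_1=1$. Some care is also needed at the exact threshold, where strict positivity could make the optimum unattained; choosing $K$ at a value that is attained avoids this.
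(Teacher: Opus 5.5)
Your proposal is correct and follows the paper's proof essentially step for step: the paper also reduces from \textsc{Clique} by adding a universal vertex with unit costs (with the concrete choice $\epsilon=1/3$, i.e.\ $y_z=2/3$, $y_v=1/(3m)$, threshold $2/3-2k/(3m)$). It makes explicit, as a separate lemma, the bound $\|x-y\|_1\ge 2(1-y(S))$ that you only sketched as ``mass outside $S$ must be removed,'' and it proves NP membership with the same max-margin LP, differing only in taking a polynomial-size basic optimal solution as the certificate rather than the support $S$ with the verifier solving the LP.
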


\begin{proposition}[Forest characterization]
\label{prop:opdiff-forest-intro}
On a forest, the feasible supports are exactly the dissociation sets, i.e., the vertex sets $S$ such that $G[S]$ has maximum degree at most~$1$.
\end{proposition}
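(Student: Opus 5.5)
The plan is to prove both inclusions directly from the equilibrium constraint. By a \emph{feasible support} we mean a nonempty set $S\subseteq V$ that is exactly the support of some distribution $x$ satisfying the edge constraints of the KKOS problem. Fix such an $x$ with support $S$. For $u\in S$, the mass is
\[
(Ax)_u = x_u + \sum_{w\in N(u)\cap S} x_w ,
\]
because vertices outside $S$ carry zero weight. The constraint only concerns edges of $G[S]$, and it says that the mass is equal across each such edge. Hence the mass is constant on each connected component of $G[S]$.

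\emph{Dissociation sets are feasible.} Let $S$ be a nonempty dissociation set and put $x_v=1/|S|$ for $v\in S$ and $x_v=0$ otherwise. This $x$ is a distribution with support exactly $S$. Take an edge $uv$ of $G$ with $u,v\in S$. Since $G[S]$ has maximum degree at most $1$, $v$ is the only neighbour of $u$ in $S$, and $u$ is the only neighbour of $v$ in $S$. So $(Ax)_u=x_u+x_v=(Ax)_v$, and every constraint holds. (Any positive weights would also work.)

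\emph{Feasible supports are dissociation sets.} Let $x$ be feasible with support $S$. Let $T$ be a connected component of $G[S]$ with at least two vertices; it is a tree because $G$ is a forest. Let $m$ be the common mass on $T$. Pick a leaf $\ell$ of $T$ and let $p$ be its unique neighbour in $T$. Then $m=(Ax)_\ell=x_\ell+x_p$. On the other hand,
\[
m=(Ax)_p = x_p + x_\ell + \sum_{w\in N(p)\cap S,\, w\neq \ell} x_w .
\]
Comparing the two expressions, the last sum is zero. Every term of that sum is strictly positive because $w\in S$, so the sum is empty and $p$ has degree $1$ in $T$.

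A tree in which some leaf's neighbour is also a leaf is exactly $K_2$. Therefore every component of $G[S]$ is a single vertex or a single edge, i.e.\ $G[S]$ has maximum degree at most $1$, and $S$ is a dissociation set.

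I expect no step to be a real obstacle. The points needing care are using the \emph{exact}-support convention, which is what makes each $x_w$ with $w\in S$ strictly positive, and noting that the forest hypothesis is used only to guarantee that every nontrivial component of $G[S]$ has a leaf.
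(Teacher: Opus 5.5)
Your proof is correct and follows the paper's. The reverse direction uses the same uniform distribution on $S$. Your comparison of the masses at a leaf $\ell$ and its neighbour $p$ is the paper's dominated-neighbourhood lemma ($N_H[u]\subsetneq N_H[v]$ rules out strictly positive feasible vectors) written out inline.
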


\begin{theorem}[Polynomial-time algorithm for forests]
\label{thm:opdiff-forest-algo-intro}
If $G$ is a forest, the optimization problem can be solved in $O(n^2)$ time. The forest-restricted decision problem is in~$P$.
\end{theorem}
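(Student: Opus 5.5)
The plan is to use the forest characterization (\Cref{prop:opdiff-forest-intro}) to reduce the problem to optimizing over dissociation sets $S$. For each $S$ we solve the best-$x$ problem in closed form, and then run a tree dynamic program whose states track only bounded local information plus one scalar.

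\textbf{Step 1: structure of feasible $x$ for a fixed support.} Let $S$ be a dissociation set, so $G[S]$ is a disjoint union of isolated vertices and edges. For $uv\in E$ with $u,v\in S$ we have $(Ax)_u=x_u+x_v+\sum_{w\notin S}(\ldots)=x_u+x_v$ (outside vertices carry zero mass), and likewise $(Ax)_v=x_u+x_v$. So the constraint holds automatically. Thus every nonnegative $x$ with support contained in $S$ and total mass $1$ is feasible; this is presumably the content behind the proposition. Call a set $S$ \emph{admissible} when it is a dissociation set, so that $\{x\ge 0:\sum x=1,\ \operatorname{supp}(x)\subseteq S\}$ is feasible. (Supports that are strict subsets of $S$ are themselves dissociation sets, which is consistent.)

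\textbf{Step 2: cost for fixed $S$.} Given $S$, we pay $c_v y_v$ for each $v\notin S$. The mass $\mu=\sum_{v\notin S}y_v$ must be redistributed onto $S$. Raising any $x_v$ above $y_v$ costs $c_v$ per unit, and lowering is never useful, so the optimum adds all of $\mu$ to a single vertex of minimum $c_v$ in $S$. Hence
\[
\mathrm{cost}(S)=\sum_{v\notin S}c_vy_v+\Bigl(\min_{v\in S}c_v\Bigr)\sum_{v\notin S}y_v .
\]
(If $S=\emptyset$ there is no solution; any single vertex is a dissociation set, so the optimum is well defined.)

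\textbf{Step 3: algorithm.} Guess the vertex $r\in S$ attaining $\min_{v\in S} c_v$; this gives $n$ choices. Restrict to vertices with $c_v\ge c_r$ (ties broken by index) that are allowed in $S$, and force $r\in S$. The objective becomes $\sum_{v\notin S}(c_v+c_r)y_v$, a linear penalty on excluded vertices, i.e.\ maximize $\sum_{v\in S}w_v$ with $w_v=(c_v+c_r)y_v$. This is a maximum-weight dissociation set in a forest with forced and forbidden vertices. It is solved by a standard $O(n)$ DP with three states per vertex: not in $S$; in $S$ with no chosen child; in $S$ with exactly one chosen child. Each combine step is linear in the children. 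Over the $n$ guesses this gives $O(n^2)$ total time. The decision version (cost $\le K$) follows by comparison, so it lies in $P$.

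\textbf{Main obstacle.} I expect the delicate part to be Step 2's exchange argument. I need to justify that an optimal $x$ never decreases a support coordinate and never spreads added mass beyond a minimum-cost vertex. I also need the constraint to be ``if both positive'', so that the optimum may have support strictly inside $S$ without harm. The strict-support subtlety mentioned in the footnote disappears here because we optimize over supports contained in $S$ rather than equal to it, and every such set is again admissible. Arithmetic on rationals is assumed unit cost for the $O(n^2)$ bound.
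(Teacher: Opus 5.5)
Your proposal is correct and follows the paper's proof essentially step for step. Both arguments reduce to hereditary dissociation sets via the forest characterization, use the closed form $\sum_{i\notin S}c_iy_i+(1-y(S))\min_{i\in S}c_i$, anchor at a minimum-cost vertex $r$ with weights $(c_i+c_r)y_i$, and run the three-state tree DP once per anchor for $O(n^2)$ total. The step you flag as the main obstacle needs no exchange argument: the paper settles it with the one-line bound $\sum_{i\in S}c_i|x_i-y_i|\ge c_*\sum_{i\in S}|x_i-y_i|\ge c_*(1-y(S))$, which is attained by putting all the missing mass on a minimum-cost vertex.
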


\begin{remark}
On chordal graphs, connected feasible supports must be cliques (\Cref{prop:chordal-clique}), so feasible supports are exactly disjoint unions of cliques.
\end{remark}

The proofs appear in \Cref{app:opdiff}.
The Bolzano research transcript is available at~\cite{bolzano_opdiff}.

\subsection{Estimating the support size of $\varepsilon$-coarse correlated equilibria -- Martin Balko}
\label{sec:gameTheory}

Nash's theorem, a fundamental result in game theory, says that every normal-form game $G$ has at least one stable state, called \emph{Nash equilibrium}.
However, all known proofs of this classical result are non-constructive, and there is strong evidence suggesting that there might not be a polynomial-time algorithm to compute Nash equilibria, as this task is PPAD-complete even for 2-player games~\cite{chenDengTeng09,daGoPa09}.
For this reason, new variants of Nash equilibria that are easier to compute were introduced, such as coarse correlated equilibria.

For a positive integer $n$, let $G=(\{1,\dots,n\},A=A_1\times \cdots\times A_n,u=(u_1,\dots,u_n))$ be a normal-form game of $n$ players with the actions sets $A_i$ and utility functions $u_i \colon A \to \mathbb{R}$.
For $\varepsilon > 0$, a probability distribution $P$ on $A$ is an \emph{$\varepsilon$-coarse correlated equilibrium} if $\mathbb{E}_{a \sim P}[u_i(a'_i;a_{-i}) - u_i(a)] \leq \varepsilon$ for every player $i \in [n]$ and each action $a'_i \in A_i$.

It is significantly simpler to compute Nash equilibria or coarse correlated equilibria with a small support, as those are closer to action profiles, which is witnessed by the classical quasipolynomial algorithm for approximating Nash equilibria by Lipton,  Markakis, and Mehta~\cite{lipMarMeh03}.
Thus, there has been a lot of interest in estimating the minimum support size of such equilibria; see, for example~\cite{babichenko14,barman18,lipMarMeh03}.
To capture this formally, we say that a probability distribution $P$ on $A$ is \emph{$k$-uniform} if there are $k$ action profiles $a^1,\dots,a^k$ such that $P = \frac{1}{k} \sum_{t=1}^k a^t$.

Babichenko, Barman, and Peretz~\cite{babichenko14} proved that every normal-form game of $n$ players, each with $m$ actions,  admits a $k$-uniform $\varepsilon$-coarse correlated equilibrium for all
\begin{equation}
\label{eq-babichenko}
k > \frac{2(\ln{m} + \ln{n})}{\varepsilon^2}.
\end{equation}
They also noted that this bound is asymptotically tight in terms of $m$, but establishing whether this logarithmic in $n$ dependence is tight remained an interesting open problem.
Actually, it was not even known whether $k$ cannot be a constant that depends only on $\varepsilon$ and not on~$n$.
To pinpoint this question, Babichenko, Barman, and Peretz~\cite{babichenko14} introduced the following problem.

\begin{problem}[\cite{babichenko14}]
Is there a $k = k(\varepsilon)$ that is independent of $n$, such that in every $n$-player 2-action game there exists $k$-uniform $\varepsilon$-correlated equilibrium?
\end{problem}

Balko and \v{C}\'{i}\v{z}ek~\cite{balCiz26} answered this problem negatively by proving the following result with the help of Bolzano. 

\begin{theorem}[\cite{balCiz26}]
\label{thm:gameTheory}
There is a constant $C>0$ such that for every $\varepsilon \in (0,1/2)$, there are arbitrarily large integers $n$ and normal-form games $G$ of $n$ players, each with two actions, with payoffs in $\{0,1\}$ such that every $k$-uniform $\varepsilon$-coarse correlated equilibrium in $G$ satisfies
\[k \geq \frac{C \cdot\log{n}}{\varepsilon^2\log{(1/\varepsilon)}}.\]    
\end{theorem}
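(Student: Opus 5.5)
The plan is to reduce the lower bound to a packing statement about nearly orthogonal $\pm1$ vectors. Fix a $k$-uniform $\varepsilon$-CCE $P=\frac1k\sum_{t=1}^k a^t$. For each player $i$ form the \emph{column} $c_i\in\{0,1\}^k$, $c_i(t)=a^t_i$, i.e.\ player $i$'s actions along the $k$ profiles. Every CCE constraint of player $i$ is then a linear inequality in the columns of $i$ and of the players $i$'s payoff depends on:
\[
\frac1k\sum_t u_i(b,a^t_{-i})-\frac1k\sum_t u_i(a^t)\le\varepsilon .
\]
I will build a game with $N$ ``base'' players and polynomially many (in $N$) auxiliary ``gadget'' players, so that $\log n=\Theta(\log N)$. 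The gadgets should force, for every pair of base players $i\ne j$, that $c_i$ and $c_j$ agree in a fraction $\tfrac12\pm O(\varepsilon)$ of coordinates. Writing $v_i=(-1)^{c_i}\in\{\pm1\}^k$, this says $|\langle v_i,v_j\rangle|\le O(\varepsilon)k$ for all $i\ne j$.

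The final step is to invoke the standard bound for nearly orthogonal vectors (Alon's rank lemma). Any $N$ vectors in $\mathbb{R}^k$ with unit diagonal Gram entries and off-diagonal entries at most $\delta$ in absolute value satisfy
\[
k\ \ge\ \frac{c\log N}{\delta^2\log(1/\delta)} .
\]
Applying this to the normalized vectors $v_i/\sqrt k$ with $\delta=O(\varepsilon)$ gives exactly $k\ge C\log n/(\varepsilon^2\log(1/\varepsilon))$. The restriction $\varepsilon<1/2$ and the factor in $\delta=O(\varepsilon)$ only affect the constant $C$, after rescaling $\varepsilon$ by a constant factor in the construction. Arbitrarily large $n$ is obtained by letting $N\to\infty$.

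The core step is the pair gadget, built from $\{0,1\}$-payoff matching-pennies interactions. For a pair $\{i,j\}$ add two players $q=q_{ij}$ and $r=r_{ij}$ whose payoffs depend only on $a_i,a_j,a_q,a_r$:
\begin{itemize}
\item $q$ gets $1$ iff $a_q=a_i\oplus a_j\oplus a_r$;
\item $r$ gets $1$ iff $a_r\ne a_q$.
\end{itemize}
Base players receive constant payoff, so their own constraints are vacuous.

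Write $x=c_i\oplus c_j$ and let $\beta$ be the fraction of $1$'s in $x$. The argument would run as follows.
\begin{enumerate}
\item Player $q$'s constraint for each fixed deviation says that $c_q$ agrees with $x\oplus c_r$ on at least a $(\max(\text{density of }x\oplus c_r,\,1-\text{that density})-\varepsilon)$ fraction of coordinates.
\item Player $r$'s constraint says that $c_r$ disagrees with $c_q$ on at least a $(\max(\text{density of }c_q,\,1-\text{that density})-\varepsilon)$ fraction.
\item Combining these, coordinatewise on the sets $\{x=0\}$ and $\{x=1\}$, should show that if $|\beta-\tfrac12|\gg\varepsilon$ then one of the two players gains more than $\varepsilon$ by a constant deviation.
\end{enumerate}
This yields $|\beta-\tfrac12|=O(\varepsilon)$, which is the required near-orthogonality. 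The number of players is $n=N+2\binom N2$.

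I expect the gadget analysis to be the main obstacle. The players in a gadget are themselves part of the equilibrium, so the gadget must genuinely penalize an unbalanced $x$ rather than be satisfiable by $q$ and $r$ adapting their columns. I would first check carefully the extreme case $x\equiv 0$. There the gadget reduces to matching pennies between $q$ and $r$, which does admit $\varepsilon$-CCEs, so this gadget as stated may fail. If it does, I would make the constraint asymmetric: let $q$'s payoff reward matching $x$ directly, and let $r$ reward mismatching $q$ only when $x=1$. Alternatively I would use several auxiliary players per pair whose CCE conditions sum to an inequality of the form $|\mathbb{E}_t[(-1)^{x(t)}]|\le O(\varepsilon)$. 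Keeping all payoffs in $\{0,1\}$ and each gadget's constraint loss linear in $\varepsilon$ is where the quantitative care goes.
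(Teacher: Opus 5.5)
Your endgame is right: reducing to nearly orthogonal $\pm1$ vectors and applying Alon's bound is what the paper ultimately relies on. The middle step is where the proof breaks. Your specific gadget fails at both extremes, not only at $x\equiv 0$. Write $w=a_q\oplus a_r$. Then $q$ wants $w=x$ while $r$ wants $w=1$, so the interaction is not constant-sum. If $x\equiv 1$, both players get payoff $1$ whenever $a_q\neq a_r$, which is a pure equilibrium. If $x\equiv 0$, the gadget is matching pennies. Concretely, freeze every base player at $0$ and let every gadget run through the four profiles $(a_q,a_r)\in\{0,1\}^2$ in lockstep. This is an exact $4$-uniform CCE of your game for every $N$, so the game you construct violates the theorem you want to prove.

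The obstruction is architectural, so no choice of gadget removes it. Base players with constant payoffs can be fixed to arbitrary deterministic actions. Then $c_i\oplus c_j$ is constant for every pair, while the remaining game still has a Nash equilibrium, hence a $0$-CCE. So no argument using only CCE inequalities can give $|\mathbb{E}[(-1)^{a_i\oplus a_j}]|=O(\varepsilon)$.

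Suppose, as in your plan, that gadgets have bounded size $p$ and depend only on base players and their own members. Once the base is frozen, each gadget has a $k_0$-uniform $\varepsilon$-CCE with a common $k_0=O(\log p/\varepsilon^2)$, by the Babichenko--Barman--Peretz upper bound. Aligning these gives a $k_0$-uniform $\varepsilon$-CCE of the whole game with $k_0$ independent of $n$.

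The missing idea is that the quantities forced to be nearly orthogonal must contain the actions of the players whose CCE constraints do the forcing. The paper takes the vectors to be group parities $v_\ell=(\chi_{S_\ell}(a^1),\dots,\chi_{S_\ell}(a^k))$ with $S_\ell=\{(\ell,j):j\neq\ell\}$. It lets $(i,j)\in S_i$ and $(j,i)\in S_j$ play a constant-sum matching-pennies game on $\chi_{S_i\Delta S_j}=\chi_{S_i}\chi_{S_j}$, a parity that contains each of their own actions. The better constant deviation therefore earns each of them at least $1/2$ against $P$. Since their payoffs sum to $1$, the two CCE constraints give $|\mathbb{E}_P[\chi_{S_i}\chi_{S_j}]|\le 2\varepsilon$, i.e.\ $|v_i\cdot v_j|\le 2\varepsilon k$. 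In this construction every player is strategic, so nothing can be frozen. Each gadget player's payoff also depends on the $2(s-1)$ players in $S_i\Delta S_j$, so the gadgets do not decouple.
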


Note that this result not only shows that $k$ has to depend on $n$, but also provides a lower bound that is asymptotically tight in $n$ as it matches the upper bound~\eqref{eq-babichenko} in terms of $n$.
The full proof appears in~\cite{balCiz26}, and we include it in Appendix~\ref{app:gameTheory} for completeness, as the paper~\cite{balCiz26} has not yet appeared.

Bolzano worked on this problem for four rounds, guided by a human researcher who pointed out the right research direction. The dependence on $n$ was settled in the first round; the remaining rounds were used to improve the dependence on $\varepsilon$.
The Bolzano research transcript is available at~\cite{bolzano_cce}.

\subsection{Equivalence of Weak and Strong Working Set Properties for Heaps -- Václav Rozhoň}
\label{sec:heap}

The setting for this subsection is the beyond worst-case theory of data structures, in particular heaps.
Two plausible definitions of a beyond worst-case heap occur in the literature -- the strong working set property from \cite{iacono} and the weak working set property from \cite{elmasry}.
The strong working set property in particular has been crucial in a recent line of research \cite{dijkstra,iacono}.

It was long assumed that the strong working set property was strictly stronger than the weak one. Surprisingly, they turn out to be equivalent. This was first proven by a human expert, but when given this task, Bolzano independently came up with a different proof and suggested a quantitative strengthening \Cref{eq:stronger_equivalence} that was not apparent from the original proof. The strengthening gives an even tighter picture of how close the two definitions are.

\paragraph{Formal results}

Consider heaps that support the operations \textsc{Insert} and \textsc{ExtractMin}. We study two notions of locality-sensitive cost for such data structures.

\begin{definition}[Weak working set property]
\label{def:weak_working_set}
A heap satisfies the \emph{weak working set property} if, for any sequence of $m$ operations, the total cost of serving them is $O(m + \sum_x \log(t'_x - t_x + 1))$, where the sum is over all extracted elements $x$, $t_x$ is the insertion time of $x$, and $t'_x$ is the extraction time of $x$.
\end{definition}

\begin{definition}[Strong working set property]
\label{def:strong_working_set}
A heap satisfies the \emph{strong working set property} if, for any sequence of $m$ operations, the total cost of serving them is $O(m + \sum_x \max_{t_x \le t < t'_x} \log(|W_{t,x}| + 1))$, where $W_{t,x}$ is the set of all elements that have been inserted after time $t_x$ and are still present at time $t$.
\end{definition}

\begin{theorem}
A heap has the weak working set property if and only if it has the strong one. In fact, the following holds for any $\varepsilon > 0$.
\begin{align}
    \sum_x \log L_x \le (1+\varepsilon) \sum_x \log K_x + O(m/\varepsilon)\label{eq:stronger_equivalence}
\end{align}
where $L_x = t'_x - t_x + 1$ is the lifetime of element $x$ and $K_x = \max_{t_x \le t < t'_x}(|W_{t,x}| + 1)$ is its strong working set cost.
\end{theorem}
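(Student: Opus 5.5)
The easy direction comes first: $K_x\le L_x$ for every extracted $x$. This holds because $W_{t,x}$ consists of elements inserted strictly between $t_x$ and $t<t'_x$, so $|W_{t,x}|+1\le t'_x-t_x+1$. Hence the strong working set bound is never larger than the weak one, and the strong property implies the weak property. For the converse it suffices to prove \eqref{eq:stronger_equivalence}: plugging it into the weak bound $O(m+\sum_x\log L_x)$ with, say, $\varepsilon=1$ gives $O(m+\sum_x\log K_x)$. So the whole work is the inequality.

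I would prove \eqref{eq:stronger_equivalence} by a charging argument over time steps. For each extracted $x$, define its excess $\Delta_x:=\bigl(\log L_x-(1+\varepsilon)\log K_x\bigr)^+$. It suffices to show $\sum_x\Delta_x=O(m/\varepsilon)$. Element $x$ spreads $\Delta_x$ uniformly over the $L_x$ time steps $s\in[t_x,t'_x]$, giving each step the load $\Delta_x/L_x$. The total charge then equals $\sum_x\Delta_x$. I will bound the load received by a single time step $s$ by $O(1/\varepsilon)$; summing over the $O(m)$ steps finishes the proof.

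The key structural fact is a rank bound. Fix a time $s$ and list the elements alive at $s$ whose intervals contain $s$ in order of decreasing insertion time, i.e.\ youngest first. If $x$ has rank $r$ in this list, then the $r-1$ younger alive elements all belong to $W_{s,x}$, so $K_x\ge r$. The boundary case $s=t'_x$ can be handled by using $s-1$ or by absorbing one extra unit per step. Since distinct elements get distinct ranks, the load at $s$ is at most
\[
\sum_{r\ge 1}\ \sup_{L\ge 1}\frac{\bigl(\log L-(1+\varepsilon)\log r\bigr)^+}{L}.
\]
Here I used that $\Delta_x/L_x$ is decreasing in $K_x$, so we may replace $K_x$ by $r$. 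Setting $c=(1+\varepsilon)\ln r$ and working with natural logs (constants only), the function $(\ln L-c)/L$ is maximized at $L=e^{c+1}$. Its maximum value is $e^{-c-1}=\frac{1}{e\,r^{1+\varepsilon}}$. So the load at $s$ is at most $\frac1e\sum_{r\ge1}r^{-(1+\varepsilon)}=\frac1e\zeta(1+\varepsilon)=O(1/\varepsilon)$.

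The main obstacle is making the rank/charging step airtight. This means choosing the right set of elements competing at time $s$: only extracted elements, with exact endpoint conventions, so that both the identity "total charge $=\sum_x\Delta_x$" and the bound $K_x\ge\text{rank}$ hold simultaneously. Once that is settled, the rest is the one-line calculus estimate and the convergence of $\sum_r r^{-(1+\varepsilon)}$, which is exactly where the $O(m/\varepsilon)$ term and the factor $1+\varepsilon$ come from.
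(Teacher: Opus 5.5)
Your proof is correct, and it takes a different route from the paper's.

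\textbf{Shared core.} Both arguments rest on the same observation, proved the same way: among live elements, the oldest has all the younger ones in its $W_{t,x}$. The paper states this as a packing lemma (at most $k$ live elements have $K_x\le k$); you state it as a rank bound.

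\textbf{Paper's route.} It groups elements into coarse levels $2^{bj}\le K_x<2^{b(j+1)}$ with $b=\lceil 1/\varepsilon\rceil$. Summing the packing bound over time gives total lifetime $O(m\,2^{b(j+1)})$ per level. It then applies Jensen within each level and pays for the spread across levels with an entropy estimate against the geometric distribution. The $\varepsilon\sum_x\log K_x$ term comes from that entropy bound, and the $O(m/\varepsilon)$ term from the level width $b$.

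\textbf{Your route.} Your fractional charging is entirely pointwise. Each step receives load at most $\sum_r\sup_L(\log L-(1+\varepsilon)\log r)^+/L=\frac{\zeta(1+\varepsilon)}{e\ln 2}$. This avoids levels, Jensen and entropy. It also yields the explicit bound $\sum_x\log L_x\le(1+\varepsilon)\sum_x\log K_x+\frac{m}{e\ln 2}(1+1/\varepsilon)$, and it makes transparent why the factor $1+\varepsilon$ is needed: at $\varepsilon=0$ the sum becomes the divergent harmonic series. The paper's decomposition, in exchange, isolates a reusable structural fact: the per-level area bound.

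\textbf{The endpoint issue you flag resolves cleanly.}
\begin{itemize}
\item If operation $s$ is an insertion, no listed element has $t'_x=s$, and $W_{s,x}$ works directly.
\item If operation $s$ extracts some $y$, then no element has $t_x=s$. The list is exactly the extracted elements of $A(s-1)$, and $W_{s-1,x}$ contains every younger listed element, including $y$.
\end{itemize}
Alternatively, spread $\Delta_x$ only over the $L_x-1\ge L_x/2$ steps $t_x,\dots,t'_x-1$, as the paper's area count does; this costs a factor $2$.

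\textbf{Large $\varepsilon$.} Your additive term is $O(m(1+1/\varepsilon))$, not $O(m/\varepsilon)$, when $\varepsilon$ is large. This is unavoidable: repeated insert--extract pairs have $L_x=2$ and $K_x=1$, which forces an additive $m/2$. It is also implicitly how the paper reads the statement when it reduces to $\varepsilon\le 1$.
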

The theorem is proven in \Cref{app:heap-wsp}. The Bolzano research transcript is available at~\cite{bolzano_heaps}.

\section*{Acknowledgements}

We thank Tomáš Gavenčiak and Vojtěch Rozhoň for helpful discussions and for their contributions to the development of the Bolzano system. 
JG, VR, and AZ were supported by the Czech Science Foundation (GA ČR), project No. 26-23599M.
RŠ was supported by grant no.25-16627S from the Czech Science Foundation (GA\v{C}R).
MB and MK were supported by grant no.25-17221S from the Czech Science Foundation (GA\v{C}R).

\bibliographystyle{alpha}
\bibliography{ref}

\appendix

\section{Structural Results on Multi-Slope Tilings}
\label{app:tiling}

This appendix contains the proofs of \Cref{thm:Main_Tiling} from \Cref{sec:tiling}.

\medskip

The periodicity in \Cref{thm:weakPTC} allows to simplify the problem by working in the group $G = \mathbb{Z} \times \mathbb{T}$, where $\mathbb T:=\mathbb R/\mathbb Z$, with counting measure on $\mathbb{Z}$ and normalized Lebesgue measure $\mu$ on $\mathbb{R}/\mathbb{Z}$.
Addition is componentwise, that is, $(m,\theta)+(n,\varphi) = (m+n,\,\theta+\varphi)$. All equalities, coverings, and disjointness statements are understood a.e.\ with respect to the product measure.

In this setup, we work with measurable sets $A \subset G$ of the form $A = \bigcup_{i=1}^\ell \{n_i\}\times I_i$ with $n_i \in \mathbb{Z}$ distinct and each $I_i \subset \mathbb{T}$ a half-open interval.
The \emph{integer support} of $A$ is $\operatorname{supp}(A) := \{n_i : 1 \le i \le \ell\}$, and the \emph{fiber} at $n$ is $A_n := \{t \in \mathbb{T} : (n,t) \in A\}$.
A set $T \subset G$ is a \emph{tiling of $G$ by $A$} if $A + T = G$ and the translates $\{A+t\}_{t \in T}$ are pairwise disjoint up to null sets.
This is denoted as $A\oplus T=G$.
If $A\oplus T=G$, then $A$ is called a \emph{tile}.

\medskip

Analogously to Definition~\ref{def:Column}, a set $A\subseteq G$ is a \emph{column tile} if there exist finite sets $C \subset \mathbb{Z}$ and $\Lambda \subset \mathbb{T}$ such that $A \subset C \times \mathbb{T}$ and the vertical translates $\{A+(0,\lambda) : \lambda \in \Lambda\}$ tile $C \times \mathbb{T}$.

\subsection{Auxiliary claims}

\begin{lemma}[Multi-coset affine arithmetic-splitting criterion]\label{lm:SplittCriterion}
    Let $q\ge 1$, and $H\subset q\mathbb Z$ be a finite set of distinct integers.
    Let $J_h\subset\mathbb T$ be measurable for each $h\in H$, and $\beta_r,\alpha_r\in\mathbb T$ for each $0\le r\le q-1$.
    Define
    $$A:=\bigcup_{h\in H}(\{h\}\times J_h)\subset G \text{ and } T_r:=\{(qk+r,\beta_r+k\alpha_r):k\in\mathbb Z\}\subset G \text{ for each } 0\le r\le q-1.$$
    Then the following are equivalent:
\begin{enumerate}
    \item $A\oplus T=G$ where $T=\bigcup_{r=0}^{q-1} T_r$,
    \item for every $r\in R$, one has the circle partition identity
   $$\bigsqcup_{h\in H}\Bigl(J_h-\frac{h}{q}\alpha_r\Bigr)=\mathbb T \quad \text{a.e.}$$
\end{enumerate}
\end{lemma}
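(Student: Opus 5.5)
We prove the equivalence by reducing the tiling condition on $G=\mathbb Z\times\mathbb T$ to a family of one-dimensional conditions, one for each integer column $n\in\mathbb Z$. Here "$r\in R$" is read as $r\in\{0,\dots,q-1\}$. The key observation is that $H\subset q\mathbb Z$. Consequently a translate $A+(qk+r,\beta_r+k\alpha_r)$ meets column $n$ only if $n\equiv r \pmod q$. So columns in residue class $r$ are covered exactly by the translates from $T_r$, and condition (1) splits into $q$ independent statements:
\[
A\oplus T_r = (r+q\mathbb Z)\times\mathbb T \quad\text{for every } 0\le r\le q-1 .
\]
Indeed, the union over $r$ is a disjoint union (up to null sets) precisely when each piece is, because pieces with different $r$ live on disjoint sets of columns.

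Next we fix $r$ and a column $n=qj+r$. A point $(n,\theta)$ lies in $A+(qk+r,\beta_r+k\alpha_r)$ if and only if
\[
h:=n-qk-r=q(j-k)\in H \quad\text{and}\quad \theta\in J_h+\beta_r+k\alpha_r .
\]
Substituting $k=j-h/q$, with $h/q\in\mathbb Z$, the fiber of $A+T_r$ over column $n$ is the union
\[
\bigcup_{h\in H}\Bigl(J_h-\tfrac{h}{q}\alpha_r\Bigr)+\beta_r+j\alpha_r .
\]
Distinct $h$ correspond to distinct $k$. Therefore disjointness of the translates restricted to this column is exactly a.e.\ disjointness of the sets $J_h-\tfrac hq\alpha_r$ for $h\in H$, and covering of the column is exactly the statement that their union is $\mathbb T$. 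The outer rotation by $\beta_r+j\alpha_r$ is measure-preserving and bijective, so it affects neither property.

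This shows that the column-$n$ condition is the same for every $j$: it is precisely the circle partition identity in (2) for this $r$. Hence (2) for a given $r$ is equivalent to $A\oplus T_r=(r+q\mathbb Z)\times\mathbb T$. Combined with the splitting of (1) into the $q$ residue-class statements, this gives (1)$\Leftrightarrow$(2).

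The main step that needs care is the measure-theoretic bookkeeping. Disjointness "up to null sets" in $G$ must be checked column by column. This works because $G$ carries the product of counting measure and $\mu$: a set in $G$ is null if and only if each of its column fibers is $\mu$-null. Also, two distinct translates of $A$ overlap in column $n$ only through one pair $(h,h')$ of indices. A last point to confirm is that the condition requires $A$ to have fibers only at the finitely many $h\in H$, so that each column meets only finitely many translates. This finiteness guarantees that unions and disjointness statements never involve countable unions of infinitely many sets in a single column.
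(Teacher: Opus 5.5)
Your proposal is correct and follows essentially the same route as the paper. Both use $H\subset q\mathbb Z$ to split the tiling by residue class mod $q$, compute the fiber of $A+T_r$ over column $qj+r$ as $\bigcup_{h\in H}\bigl(J_h-\frac{h}{q}\alpha_r\bigr)+\beta_r+j\alpha_r$, and note that rotations of $\mathbb T$ preserve measure and a.e.\ disjointness. Your column-by-column null-set bookkeeping, and your treatment of all residues $0\le r\le q-1$ (the paper's proof writes $1\le r\le q-1$, evidently a slip), are slightly more careful than the paper's written proof.
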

\begin{proof}
First observe that if $1\le r\le q-1$ and $t=(qk+r,\beta_r+k\alpha_r)\in T_r$, then $A+t\subseteq (q\mathbb{Z}+r)\times \mathbb{T}$.
Hence, it is enough to understand each fixed $1\le r\le q-1$ separately.

\medskip

For $m\in \mathbb{Z}$, define the set
$$F_{r,m}=\{\theta\in\mathbb T:(r+qm,\theta)\in A+T_r\}.$$
A point $(r+qm,\theta)$ lies in $A+(qk+r,\beta_r+k\alpha_r)$ for some $k\in \mathbb{Z}$ if and only if there exists $h\in H$ such that
\begin{itemize}
    \item $q(m-k)=h$, and
    \item $\theta\in J_h+\beta_r+k\alpha_r$.
\end{itemize}
Because $h\in q\mathbb Z$, the above equation has the unique solution $k=m-\frac{h}{q}$.
Substituting this into the vertical condition gives $\theta\in \Bigl(J_h-\frac{h}{q}\alpha_r\Bigr)+\beta_r+m\alpha_r$.
This gives
\begin{equation}\label{eq:Partition}
    F_{r,m}=\left(\bigcup_{h\in H}\Bigl(J_h-\frac{h}{q}\alpha_r\Bigr)\right)+\beta_r+m\alpha_r.
\end{equation}
Now we are ready to show the equivalence.

\medskip

$(2)\Rightarrow(1)$.
Let $1\le r\le q-1$ and $m\in \mathbb{Z}$.
Since translation on $\mathbb T$ preserves Lebesgue measure and preserves pairwise disjointness a.e., the assumed partition identity implies $F_{r,m}=\mathbb T$ a.e., and the pieces contributing to \eqref{eq:Partition} are pairwise disjoint a.e.
As $m\in \mathbb{Z}$ was arbitrary, we get that $A\oplus T_r=(q\mathbb{Z}+r)\times \mathbb{T}$.
This proves (1) by the remark in the beginning of this proof.

\medskip

$(1)\Rightarrow(2)$.
By the remark in the beginning of the proof and the assumption, we have that \eqref{eq:Partition} reads as
\begin{equation*}
    \mathbb{T}=F_{r,0}=\left(\bigsqcup_{h\in H}\Bigl(J_h-\frac{h}{q}\alpha_r\Bigr)\right)+\beta_r.
\end{equation*}
for every $1\le r\le q-1$ and $m=0$.
Shifting by $-\beta_r$ gives (2).    
\end{proof}

\begin{proposition}[Fiberwise criterion for column tiles]\label{pr:Column}
Let $A=\bigcup_{i=1}^\ell \{n_i\}\times I_i \subseteq \mathbb{Z}\times \mathbb{T}$ be a column tile.
Then there is $k\in \mathbb{N}$ such that
$$\mu\left(A_{n_i}\right)=1/k$$
for every $1\le i\le \ell$.
\end{proposition}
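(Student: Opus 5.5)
The plan is to read off the fiber measures directly from the column-tiling identity, one integer $n\in C$ at a time. By definition of a column tile there are finite sets $C\subset\mathbb{Z}$ and $\Lambda\subset\mathbb{T}$ with $A\subset C\times\mathbb{T}$ and
\[
A\oplus\{(0,\lambda):\lambda\in\Lambda\}=C\times\mathbb{T}\quad\text{a.e.}
\]
Since every translate $A+(0,\lambda)$ is purely vertical, it keeps each integer coordinate fixed. So the identity splits into one identity per integer, and the fiber at $n$ of $A+(0,\lambda)$ is $A_n+\lambda$.

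The first step is to restrict to the column $\{n\}\times\mathbb{T}$ for each $n\in C$. There the tiling identity says that the sets $\{A_n+\lambda\}_{\lambda\in\Lambda}$ are pairwise disjoint a.e. and that their union is $\mathbb{T}$ a.e. Translation on $\mathbb{T}$ preserves $\mu$, so additivity of the normalized Lebesgue measure gives
\[
|\Lambda|\cdot\mu(A_n)=\mu(\mathbb{T})=1 .
\]
This uses only the measure of the union, not the structure of the intervals.

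Next, set $k:=|\Lambda|$. It is finite, and it is positive because the union must cover $\mathbb{T}$. The identity above then gives $\mu(A_n)=1/k$ for every $n\in C$. For each $i$, the point $n_i$ lies in $\operatorname{supp}(A)\subseteq C$. Here we need $A_{n_i}=I_i$ to be nonnull, which holds if the intervals are taken nondegenerate, or else we discard null fibers from the representation. Hence $\mu(A_{n_i})=1/k$ for all $1\le i\le\ell$, with $k$ independent of $i$, as claimed.

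The only point needing care, and essentially the only obstacle, is justifying that the a.e. tiling of $C\times\mathbb{T}$ restricts to an a.e. tiling of each individual column. This holds because the product measure is counting measure times $\mu$. A null set in $C\times\mathbb{T}$ therefore has null intersection with every column $\{n\}\times\mathbb{T}$. So both a.e. disjointness and a.e. covering pass to each column.
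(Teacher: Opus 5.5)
Your proposal is correct and follows essentially the paper's argument: restrict the vertical tiling to the column over each $n_i$, so that the translates $A_{n_i}+\lambda$, $\lambda\in\Lambda$, partition $\mathbb{T}$ a.e.; translation invariance then gives $|\Lambda|\,\mu(A_{n_i})=1$, so $k=|\Lambda|$. The paper leaves implicit two points you check explicitly, and you handle both correctly: that a.e.\ statements pass to individual columns, and that $n_i\in C$ needs $I_i$ to be nondegenerate.
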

\begin{proof}
By the definition, there are $C\subset \mathbb Z$ and a finite nonempty set $\Lambda\subset \mathbb T$ such that the translates $\{A+(0,\lambda):\lambda\in\Lambda\}$ are pairwise disjoint a.e., and $\bigsqcup_{\lambda\in\Lambda}(A+(0,\lambda))=C\times\mathbb T$.

For each $1\le i\le \ell$, observe that $\mathbb{T}=\bigsqcup_{\lambda\in \Lambda} A_{n_i}+\lambda$.
Therefore
$$1=\mu(\mathbb T)=\sum_{\lambda\in\Lambda}\mu(A_{n_i}+\lambda)=|\Lambda|\mu(A_{n_i})$$
as Lebesgue measure is translation-invariant.
\end{proof}

\subsection{Explicit construction}
Choose an irrational $\varepsilon\in (0,1/3)$ and define:
\begin{itemize}
    \item $\alpha=1-\varepsilon$,
    \item $I_0:=[\varepsilon,2\varepsilon)$, $I_2:=[2\varepsilon,1)$, $I_4:=[0,\varepsilon)$,
    \item $A_\alpha=(\{0\}\times I_0)\cup(\{2\}\times I_2)\cup(\{4\}\times I_4)\subset\mathbb Z\times\mathbb T$,
    \item $S_{1,\alpha}:=\{(2k,0):k\in\mathbb Z\}$ and $S_{2,\alpha}:=\{(2k+1,k\alpha):k\in\mathbb Z\}.$
\end{itemize}
We show that $A_\alpha\oplus S_\alpha=G$ where $S_\alpha=S_{1,\alpha}\sqcup S_{2,\alpha}$.
We verify (2) in Lemma~\ref{lm:SplittCriterion} with $q=2$, $\beta_0=\beta_1=0$, $\alpha_0=0$ and $\alpha_1=\alpha$.
For $r=0$, we have
$$\bigsqcup_{2i\in \{0,2,4\}}I_{2i}=[\varepsilon,2\varepsilon)\sqcup[2\varepsilon,1)\sqcup[0,\varepsilon)=\mathbb T,$$
and for $r=1$, we have
\begin{equation*}
    \begin{split}
        \bigsqcup_{2i\in \{0,2,4\}}\Bigl(I_{2i}-i\alpha\Bigr)= & \ [\varepsilon,2\varepsilon)\sqcup([2\varepsilon,1)-(1-\epsilon))\sqcup([0,\varepsilon)-2(1-\epsilon))\\
        = & \ [\varepsilon,2\varepsilon)\sqcup([0,\epsilon)\sqcup [3\epsilon,1))\sqcup([2\epsilon,3\varepsilon)\\
        = & \ \mathbb T,
    \end{split}
\end{equation*}
where $[2\varepsilon,1)-(1-\epsilon)=[0,\epsilon)\sqcup [3\epsilon,1)$ follows from the assumption that $\epsilon\in (0,1/3)$.

\medskip

Observe that, by Proposition~\ref{pr:Column}, $A$ is not a column tile as $\mu(A\cap (\{0\}\times \mathbb{T}))$ is not rational.

\subsection{Proof of Theorem~\ref{thm:Main_Tiling}}
Given an irrational $\alpha\in (2/3,1)$, consider the tile $A_\alpha\subseteq \mathbb{Z}\times \mathbb{T}$ described above, and define
$$\Omega_\alpha=\{(n,x)+(t,0):  x,t\in [0,1) \text{ and } (n,x)\in A_\alpha\}\subseteq \mathbb{R}^2.$$
It follows from the properties of $A_\alpha$ and $S_\alpha$ that $\Omega_\alpha$ does not tile a column, and that $T_\alpha=T_{1,\alpha}\sqcup T_{2,\alpha}$, where $T_{1,\alpha}=(2\mathbb{Z})\times \mathbb{Z}$ and $T_{2,\alpha}=\{(2k,k\alpha+\ell):k,\ell\in \mathbb{Z}\}$, have all the desired properties.

\section{Equivalence of Weak and Strong Working Set Properties}
\label{app:heap-wsp}

In this appendix we prove the equivalence between the weak and strong working set
properties stated in \Cref{sec:heap}. In fact, we prove the stronger quantitative estimate
\begin{equation}
\label{eq:stronger_equivalence_app}
\sum_x \log L_x
\;\le\;
(1+\varepsilon)\sum_x \log K_x + O(m/\varepsilon),
\end{equation}
valid for every $\varepsilon>0$, where the sum ranges over all extracted elements.

The easy point is that $K_x \le L_x$ for every element $x$, so the strong working set
bound always implies the weak one. The substance is the converse direction:
although an individual lifetime $L_x$ can be much larger than the corresponding strong
parameter $K_x$, this can only happen for relatively few elements at the same time.
The proof makes this precise via a packing argument: at any time $t$, there cannot be
many live elements with small strong parameter. Grouping elements according to the size of
$K_x$, this packing bound yields an upper bound on the total lifetime within each group,
and Jensen's inequality then controls the total contribution of
$\log(L_x/K_x)$.

\paragraph{Setup.}
Consider any sequence of $m$ heap operations, each either \textsc{Insert} of a fresh element
or \textsc{ExtractMin} removing a present element. Elements are distinct even if keys coincide.
For each element $x$ that is eventually extracted, let
$t_x$ be its insertion time, $t'_x$ its extraction time, and define its
\emph{lifetime}
\[
L_x := t'_x-t_x+1.
\]
For each time $t$ with $t_x \le t < t'_x$, let $W_{t,x}$ be the set of all elements
inserted after time $t_x$ that are still present immediately after operation $t$.
We also define
\[
K_x := \max_{t_x \le t < t'_x}\bigl(|W_{t,x}|+1\bigr).
\]
Thus $L_x$ is the weak working set parameter and $K_x$ is the strong one.

For a time $t$, let $A(t)$ denote the set of elements present immediately after operation $t$.
All logarithms are base $2$.

\begin{lemma}
\label{lem:lifetime-dominates}
For every extracted element $x$, we have $K_x \le L_x$.
\end{lemma}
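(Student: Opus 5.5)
The plan is a direct counting argument: fix an extracted element $x$ and a time $t$ with $t_x \le t < t'_x$, and bound $|W_{t,x}|$ by the number of operations performed in the window $(t_x, t]$. This gives $K_x \le L_x$ once we maximize over $t$.

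First, recall what $W_{t,x}$ contains. It consists of elements inserted strictly after time $t_x$ and still present immediately after operation $t$. Each such element was inserted by some operation with index in $\{t_x+1,\dots,t\}$. Each operation inserts at most one element, and elements are distinct. So the map sending an element of $W_{t,x}$ to its insertion time is injective into this index set. Hence
\[
|W_{t,x}| \le t - t_x .
\]

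Next we use $t < t'_x$, i.e.\ $t \le t'_x - 1$. This gives $|W_{t,x}| + 1 \le t'_x - t_x$. Since $L_x = t'_x - t_x + 1$, we get $|W_{t,x}|+1 \le L_x - 1 < L_x$. The maximum over the nonempty range $t_x \le t < t'_x$ then satisfies $K_x \le L_x$. The range is nonempty because $x$ is inserted before it is extracted, so $t'_x > t_x$. If one allows the convention $t'_x = t_x$, the range is empty; in that case I would read $K_x$ as $1 \le L_x$, which is harmless.

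There is no real obstacle. The only point needing care is the indexing convention: "inserted after time $t_x$" must mean insertion at an operation index strictly greater than $t_x$, and "present immediately after operation $t$" must be consistent with that. With those conventions fixed, the injection into the operation indices settles the claim. Even a looser reading that allows $|W_{t,x}| \le t - t_x + 1$ still gives $|W_{t,x}| + 1 \le t'_x - t_x + 1 = L_x$, so the inequality holds under either convention.
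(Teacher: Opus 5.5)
Your proof is correct and matches the paper's argument: both bound $|W_{t,x}| \le t - t_x$, because the elements of $W_{t,x}$ have distinct insertion times among $t_x+1,\dots,t$, and then take the maximum over $t_x \le t < t'_x$. You use $t \le t'_x - 1$ to get the slightly sharper $K_x \le L_x - 1$, whereas the paper only uses $t \le t'_x$; this makes no difference for the lemma.
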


\begin{proof}
Fix $t$ with $t_x \le t < t'_x$. Every element of $W_{t,x}$ was inserted after time $t_x$
and no later than time $t$, so $|W_{t,x}| \le t-t_x$. Hence
\[
|W_{t,x}|+1 \le (t-t_x)+1 \le (t'_x-t_x)+1 = L_x.
\]
Taking the maximum over all admissible $t$ gives $K_x \le L_x$.
\end{proof}

The next lemma is the key combinatorial input. It says that elements with small strong
parameter cannot overlap too much in time.

\begin{lemma}[Packing lemma]
\label{lem:congestion}
Fix a time $t$ and an integer $k \ge 1$. Then at most $k$ elements $x \in A(t)$ satisfy
$K_x \le k$.
\end{lemma}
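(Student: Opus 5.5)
The plan is to order the relevant elements by insertion time and use the oldest one as a witness. Let $x_1,\dots,x_s \in A(t)$ be the elements present immediately after operation $t$ with $K_{x_j}\le k$, listed so that $t_{x_1}<t_{x_2}<\dots<t_{x_s}$. Insertion times are distinct because each operation inserts at most one element. We may assume every $x_j$ is eventually extracted, since $K_x$ is only defined for such elements. We want to show $s\le k$, and we argue by contradiction, supposing $s\ge k+1$.

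The key step is to evaluate the strong working set of $x_1$ at the time $t$ itself. Since $x_1\in A(t)$, we have $t_{x_1}\le t$. Since $x_1$ is still present immediately after operation $t$, its extraction happens later, so $t<t'_{x_1}$. Hence $t$ is an admissible time in the maximum defining $K_{x_1}$. Each of $x_2,\dots,x_s$ was inserted after $t_{x_1}$ and lies in $A(t)$, so each belongs to $W_{t,x_1}$. Therefore
\[
K_{x_1}\ \ge\ |W_{t,x_1}|+1\ \ge\ (s-1)+1\ =\ s\ \ge\ k+1,
\]
which contradicts $K_{x_1}\le k$. So $s\le k$.

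The main thing to get right is the boundary convention: that $t$ is a legal index, i.e.\ $t_{x_1}\le t<t'_{x_1}$, under the convention ``present immediately after operation $t$''. If $t=t_{x_1}$, then $W_{t,x_1}$ contains only elements inserted strictly after $t_{x_1}$ and no later than $t$, so it is empty. But then $x_1$ is the newest of the $s$ elements, and since it is also the oldest, $s=1\le k$ and the claim is trivial. Otherwise $t_{x_1}<t$, and the argument above applies directly. Beyond this check the proof is a one-line counting argument, so I expect no further obstacles.
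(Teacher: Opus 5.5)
Your proof is correct and is essentially the paper's argument: take the oldest of the (supposedly more than $k$) elements alive after operation $t$ with $K_x\le k$, observe that all the others lie in $W_{t,x}$, and conclude $K_x\ge k+1$. Your explicit check that $t$ is admissible in the maximum ($t_{x_1}\le t<t'_{x_1}$) is a detail the paper leaves implicit. The separate treatment of $t=t_{x_1}$ is harmless but unnecessary, since the main count already forces $s=1$ in that case.
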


\begin{proof}
Suppose for contradiction that there are $k+1$ such elements alive after operation $t$.
Let $x$ be the oldest among them, i.e.\ the one with minimum insertion time.
Then the other $k$ elements were all inserted after $x$ and are all present at time $t$,
hence they all belong to $W_{t,x}$. Therefore
\[
|W_{t,x}| \ge k,
\qquad\text{so}\qquad
K_x \ge |W_{t,x}|+1 \ge k+1,
\]
contradicting $K_x \le k$.
\end{proof}

We now prove the quantitative comparison \eqref{eq:stronger_equivalence_app}.

\begin{theorem}
\label{thm:stronger_equivalence_app}
For every $\varepsilon>0$ and every operation sequence of length $m$,
\[
\sum_x \log L_x
\;\le\;
(1+\varepsilon)\sum_x \log K_x + O(m/\varepsilon),
\]
where the sum ranges over all extracted elements $x$.
\end{theorem}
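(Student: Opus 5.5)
The plan follows the outline in the appendix introduction. We group extracted elements by the dyadic scale of $K_x$, use \Cref{lem:congestion} to bound the total lifetime in each group, and then apply Jensen's inequality (concavity of $\log$) within each group.

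\textbf{Step 1: per-group lifetime bound.} For $j\ge 0$ let $G_j$ be the set of extracted $x$ with $2^j\le K_x<2^{j+1}$, and put $n_j:=|G_j|$. Each $x\in G_j$ is present in $A(t)$ for $L_x-1$ times $t\in[t_x,t'_x)$. Summing over all times $t\le m$ and applying \Cref{lem:congestion} with $k=2^{j+1}$ gives
\[
\sum_{x\in G_j}(L_x-1)\le 2^{j+1}m.
\]
A cruder but also useful count is $\sum_x (L_x-1)\le m\cdot\max_t|A(t)|$, though I will not need it.

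\textbf{Step 2: Jensen within a group.} Concavity of $\log$ gives
\[
\sum_{x\in G_j}\log L_x\le n_j\log\Bigl(1+\frac{2^{j+1}m}{n_j}\Bigr).
\]
Since $\log K_x\ge j$ on $G_j$, it suffices to show
\[
\sum_j n_j\log\bigl(1+2^{j+1}m/n_j\bigr)\le (1+\varepsilon)\sum_j n_j j+O(m/\varepsilon).
\]
Writing $\log(1+2^{j+1}m/n_j)\le j+1+\log(1+m/n_j)$, the left side is at most
\[
\sum_j n_j\,j + \sum_j n_j+\sum_j n_j\log(1+m/n_j).
\]
The term $\sum_j n_j\le m$ is harmless.

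\textbf{Step 3: the main obstacle.} The difficulty is the term $\sum_j n_j\log(1+m/n_j)$. It is small only if the $n_j$ are large, but a single group can have $n_j$ tiny, and summing over many groups loses a $\log\log$ factor. The fix is to use Step 1 more cleverly and to spend the $\varepsilon\sum_j n_j j$ slack.

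First, refine Step 1 so that it is charged against the correct time window. Elements of $G_j$ alive at time $t$ number at most $2^{j+1}$, and there are only $n_j$ of them in total. Hence their total lifetime is at most $2^{j+1}\cdot(\text{measure of times when some } G_j \text{ element is alive})$, and that measure is at most $\sum_{x\in G_j}L_x$, which is circular. The better route is a per-element comparison. Call $x$ bad if $\log L_x>(1+\varepsilon)\log K_x + c/\varepsilon$, i.e.\ $L_x> 2^{c/\varepsilon}K_x^{1+\varepsilon}$.

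Within $G_j$, Step 1 shows that the number of bad elements is at most
\[
\frac{2^{j+1}m}{2^{c/\varepsilon}2^{j(1+\varepsilon)}}=2^{1-c/\varepsilon-\varepsilon j}m.
\]
This is summable over $j$: the total is $O(m\cdot 2^{-c/\varepsilon}/\varepsilon)$, which can be made $O(m)$. Then apply Jensen only to the bad elements. For $B_j$ the bad elements of $G_j$, with $b_j=|B_j|$,
\[
\sum_{x\in B_j}\log(L_x/K_x^{1+\varepsilon})\le b_j\log\bigl(2^{j+1}m\,2^{-j(1+\varepsilon)}/b_j\bigr).
\]
The function $b\log(Mq/b)$ is increasing for $b\le Mq/e$, so plugging in the upper bound on $b_j$ gives contributions of order
\[
2^{-c/\varepsilon-\varepsilon j}m\cdot\frac{c}{\varepsilon}.
\]
Their sum over $j$ is $O(m\,c\,2^{-c/\varepsilon}/\varepsilon^2)=O(m/\varepsilon)$ for a suitable constant $c$. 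Good elements contribute at most $(1+\varepsilon)\log K_x+c/\varepsilon$ each, giving $O(m/\varepsilon)$ overall since there are at most $m$ elements.

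Checking this monotonicity/threshold bookkeeping, i.e.\ that the extremal value of $b_j$ lies in the increasing regime and the geometric sums close, is the step I expect to need the most care.
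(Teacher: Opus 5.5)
Your proof is correct, but it spends the $\varepsilon$-slack differently from the paper. Both arguments start from the same packing-based area bound per level (your Step~1).

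The paper uses \emph{coarse} levels of width $b=\lceil 1/\varepsilon\rceil$ in $\log K_x$. It applies Jensen to $\log(L_x/K_x)$ inside each level, giving $\sum_j N_j\log(2m2^b/N_j)$. It then controls the resulting entropy term by comparing the level distribution with the geometric distribution $2^{-(j+1)}$. Because the levels are coarse, the level index costs only $\frac1b\log K_x\le\varepsilon\log K_x$, which yields $G\le O(mb)+\frac1b\sum_x\log K_x$.

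You keep fine dyadic levels and instead put the slack into each element's ratio $L_x/K_x^{1+\varepsilon}$. The per-level budget $\sum_{x\in G_j}L_x/2^{j(1+\varepsilon)}$ then decays like $2^{-\varepsilon j}m$. The bound $\sup_{b>0}b\log(Q/b)=Q/(e\ln 2)$ and the geometric sum $\sum_j2^{-\varepsilon j}=O(1/\varepsilon)$ finish the argument with no entropy or relative-entropy step. This is arguably more elementary.

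Your good/bad threshold is not needed. The inequality you reduced to in Step~2 already holds, since $n_j\log(m/n_j)-\varepsilon j n_j=n_j\log(2^{-\varepsilon j}m/n_j)\le 2^{-\varepsilon j}m/(e\ln 2)$. So the ``$\log\log$ loss'' you worried about disappears once $\varepsilon j n_j$ is spent level by level. The threshold only adds the (true) refinement that elements with $L_x>2^{c/\varepsilon}K_x^{1+\varepsilon}$ contribute $O(m)$ excess in total.

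Two bookkeeping fixes are needed in Step~3:
\begin{itemize}
\item Step~1 bounds $\sum(L_x-1)$, so you should use $\sum_{x\in G_j}L_x\le 2^{j+1}m+n_j\le 2^{j+2}m$. This costs only a harmless factor $2$ in both the bound on $b_j$ and the Jensen bound.
\item Restrict to $\varepsilon\le 1$, as the paper does. This is what lets $c$ be an absolute constant with $2^{-c/\varepsilon}\le 1/e$.
\end{itemize}
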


\begin{proof}
Fix $\varepsilon>0$. We may assume $\varepsilon \le 1$, since otherwise the claim is weaker
than the case $\varepsilon=1$. Let
\[
b := \left\lceil \frac{1}{\varepsilon}\right\rceil,
\]
so that $1/b \le \varepsilon$ and $b = O(1/\varepsilon)$.

We partition the extracted elements into coarse levels according to the size of $K_x$:
for each integer $j \ge 0$, let
\[
X_j := \{x : 2^{bj} \le K_x < 2^{b(j+1)}\},
\qquad
N_j := |X_j|.
\]

We first bound the total lifetime inside one level. Since an element $x$ is alive after
exactly the times $t=t_x,t_x+1,\dots,t'_x-1$, it contributes $L_x-1$ to
$\sum_t \mathbf{1}_{x \in A(t)}$. Therefore
\[
\sum_{x \in X_j}(L_x-1)
=
\sum_{t=1}^m |X_j \cap A(t)|.
\]
Now if $x \in X_j$, then $K_x < 2^{b(j+1)}$, hence $K_x \le 2^{b(j+1)}-1$.
Applying \Cref{lem:congestion} with $k=2^{b(j+1)}-1$, we get
\[
|X_j \cap A(t)| \le 2^{b(j+1)}-1 \le 2^{b(j+1)}
\qquad\text{for every } t.
\]
Summing over $t$ yields
\[
\sum_{x \in X_j}(L_x-1) \le m\cdot 2^{b(j+1)},
\]
and hence
\begin{equation}
\label{eq:level-area}
\sum_{x \in X_j} L_x
\le
m\cdot 2^{b(j+1)} + N_j
\le
2m\cdot 2^{b(j+1)}.
\end{equation}

We now estimate the excess
\[
G := \sum_x \log\!\left(\frac{L_x}{K_x}\right).
\]
For $x \in X_j$ we have $K_x \ge 2^{bj}$, and therefore
\[
\log\!\left(\frac{L_x}{K_x}\right)
\le
\log\!\left(\frac{L_x}{2^{bj}}\right).
\]
Set $y_x := L_x/2^{bj}$. By \Cref{lem:lifetime-dominates}, we have $y_x \ge 1$.
Also, by \eqref{eq:level-area},
\[
\sum_{x \in X_j} y_x
=
\sum_{x \in X_j} \frac{L_x}{2^{bj}}
\le
\frac{2m \cdot 2^{b(j+1)}}{2^{bj}}
=
2m\cdot 2^b.
\]
Since $\log$ is concave, Jensen's inequality gives
\[
\sum_{x \in X_j} \log\!\left(\frac{L_x}{K_x}\right)
\le
\sum_{x \in X_j} \log y_x
\le
N_j \log\!\left(\frac{2m\cdot 2^b}{N_j}\right).
\]
Summing over all levels,
\[
G
\le
\sum_j N_j \log\!\left(\frac{2m\cdot 2^b}{N_j}\right).
\]

Let $n := \sum_j N_j$ be the number of extracted elements, and let $p_j := N_j/n$.
Then
\[
G
\le
n \log\!\left(\frac{2m\cdot 2^b}{n}\right) + nH(p),
\]
where $H(p):= -\sum_j p_j \log p_j$ is the entropy of the distribution $(p_j)_j$.

We bound the two terms separately.

For the first term,
\[
n \log\!\left(\frac{2m\cdot 2^b}{n}\right)
=
n(b+1) + n\log(m/n).
\]
Since $n \le m$, the first summand is at most $m(b+1)$.
For the second, using the elementary inequality $\log z \le (z-1)/\ln 2$ for $z\ge 1$,
we get
\[
n\log(m/n) \le \frac{m}{\ln 2}.
\]
Hence
\begin{equation}
\label{eq:first-term}
n \log\!\left(\frac{2m\cdot 2^b}{n}\right) = O(mb).
\end{equation}

For the entropy term, compare $p$ with the geometric distribution
$q_j := 2^{-(j+1)}$ on $\{0,1,2,\dots\}$.
Nonnegativity of relative entropy gives
\[
H(p) \le -\sum_j p_j \log q_j = \sum_j p_j(j+1) = 1 + \sum_j p_j j,
\]
and thus
\[
nH(p) \le n + \sum_j N_j j.
\]
Now if $x \in X_j$, then $K_x \ge 2^{bj}$, so $\log K_x \ge bj$, i.e.
\[
j \le \frac{\log K_x}{b}.
\]
Therefore
\[
\sum_j N_j j
=
\sum_x j(x)
\le
\frac{1}{b}\sum_x \log K_x,
\]
where $j(x)$ denotes the unique level index such that $x \in X_{j(x)}$.
Since also $n \le m$, we obtain
\begin{equation}
\label{eq:entropy-term}
nH(p) \le m + \frac{1}{b}\sum_x \log K_x.
\end{equation}

Combining \eqref{eq:first-term} and \eqref{eq:entropy-term}, we conclude that
\[
G \le O(mb) + \frac{1}{b}\sum_x \log K_x.
\]
Using $1/b \le \varepsilon$ and $b=O(1/\varepsilon)$, this becomes
\[
G \le \varepsilon \sum_x \log K_x + O(m/\varepsilon).
\]
Finally,
\[
\sum_x \log L_x
=
\sum_x \log K_x + G
\le
(1+\varepsilon)\sum_x \log K_x + O(m/\varepsilon),
\]
as claimed.
\end{proof}

As an immediate consequence, the weak and strong working set properties are equivalent.

\begin{corollary}
\label{cor:wsp-equivalence}
A heap satisfies the weak working set property if and only if it satisfies the strong working set property.
\end{corollary}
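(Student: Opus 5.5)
The plan is to derive \Cref{cor:wsp-equivalence} directly from \Cref{lem:lifetime-dominates} and \Cref{thm:stronger_equivalence_app}, handling the two implications separately. Both working set bounds concern the same heap run on the same operation sequence; only the upper bound claimed on its cost differs. So it suffices to show that the two bound expressions,
\[
m+\sum_x \log L_x
\qquad\text{and}\qquad
m+\sum_x \log K_x,
\]
are within a constant factor of each other. Note that $L_x = t'_x-t_x+1$ and $K_x = \max_{t}(|W_{t,x}|+1)$ are exactly the arguments of the logarithms in \Cref{def:weak_working_set} and \Cref{def:strong_working_set}.

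\emph{Strong implies weak.} By \Cref{lem:lifetime-dominates}, $K_x\le L_x$ for every extracted element $x$, and $\log$ is monotone. Hence
\[
m+\sum_x\log K_x \;\le\; m+\sum_x \log L_x .
\]
Therefore a cost bound of $O(m+\sum_x\log K_x)$ is also a bound of $O(m+\sum_x\log L_x)$.

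\emph{Weak implies strong.} This is the substantive direction, and it is handled entirely by \Cref{thm:stronger_equivalence_app}. Apply it with the fixed choice $\varepsilon=1$, so that
\[
\sum_x\log L_x\le 2\sum_x\log K_x+O(m).
\]
Adding $m$ to both sides gives
\[
m+\sum_x\log L_x = O\Bigl(m+\sum_x\log K_x\Bigr).
\]
Now suppose the heap's cost on every sequence is at most $c\,(m+\sum_x\log L_x)$. Then its cost is at most $c'\,(m+\sum_x\log K_x)$, where $c'$ depends only on $c$ and the absolute constant hidden in the theorem.

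\emph{Main check.} There is essentially no obstacle. The one point to verify is that the constant in $O(m/\varepsilon)$ in \Cref{thm:stronger_equivalence_app} is universal, i.e.\ independent of the sequence and of $m$. This holds by its proof: the bound is at most $m(b+1)+m/\ln 2+m$ with $b=\lceil 1/\varepsilon\rceil$. Consequently, fixing $\varepsilon=1$ yields a uniform constant, and the implication holds for all operation sequences simultaneously.
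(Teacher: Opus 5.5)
Your proposal is correct and follows the paper's own proof: strong implies weak via $K_x \le L_x$ (\Cref{lem:lifetime-dominates}), and weak implies strong by applying \Cref{thm:stronger_equivalence_app} with the fixed choice $\varepsilon=1$. Your added check, that the constant hidden in $O(m/\varepsilon)$ is the same for every operation sequence, is a useful detail that the paper leaves implicit.
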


\begin{proof}
If a heap satisfies the weak working set property, then its total cost on every operation
sequence is
\[
O\!\left(m + \sum_x \log L_x\right).
\]
Applying \Cref{thm:stronger_equivalence_app} with any fixed $\varepsilon$ (say $\varepsilon=1$),
we get
\[
\sum_x \log L_x = O\!\left(m + \sum_x \log K_x\right),
\]
and therefore the total cost is
\[
O\!\left(m + \sum_x \log K_x\right),
\]
which is exactly the strong working set property.

Conversely, if a heap satisfies the strong working set property, then by
\Cref{lem:lifetime-dominates} we have $K_x \le L_x$ for every extracted element $x$, hence
\[
\sum_x \log K_x \le \sum_x \log L_x.
\]
So a bound of the form
\[
O\!\left(m + \sum_x \log K_x\right)
\]
immediately implies
\[
O\!\left(m + \sum_x \log L_x\right),
\]
which is the weak working set property.
\end{proof}
\section{Partitioning under Function Preimage Constraints}
\label{app:function-partition}

In this appendix we prove the two main results stated in \Cref{sec:partition}:
the counterexample to the original conjecture (\Cref{thm:disproof}) and the positive
partition theorem under the stronger pairwise boundedness hypothesis
(\Cref{thm:pairwise-intro}). We also record two complementary observations:
under the stronger assumption of uniform $n$-boundedness the constant improves from
$2nk(k-1)+1$ to $nk(k-1)+1$, and even under pairwise $n$-boundedness one cannot hope
for a bound smaller than $2k-1$ in general.

The natural language for the problem is graph coloring. The following reformulation
will be used throughout.

\begin{lemma}[Conflict graph reformulation]
\label{lem:conflict-graph}
Let $E$ and $F$ be sets, and let $f_1,\dots,f_k:E\to F$ satisfy
$f_i(x)\neq f_j(x)$ for every $x\in E$ and every $i\neq j$.
Define the \emph{conflict graph} $G$ on vertex set $E$ by declaring distinct
$x,y\in E$ adjacent whenever
\[
f_p(x)=f_q(y)
\qquad\text{for some } p\neq q.
\]
Then for every integer $m\ge 1$, the following are equivalent:
\begin{enumerate}
    \item $E$ admits a partition $E=E_1\sqcup\cdots\sqcup E_m$ such that
    \[
    f_p(E_i)\cap f_q(E_i)=\emptyset
    \qquad\text{for every } i \text{ and every } p<q;
    \]
    \item the graph $G$ is properly $m$-colorable.
\end{enumerate}
\end{lemma}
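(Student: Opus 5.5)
The plan is to identify the parts of a valid partition with the color classes of a coloring of $G$, and to check both implications directly from the definitions. Throughout, a partition may have empty parts, and a coloring need not use every color.

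For (1)$\Rightarrow$(2), take a partition $E=E_1\sqcup\cdots\sqcup E_m$ with the stated disjointness property. Define $c(x)=i$ for the unique $i$ with $x\in E_i$. Suppose $x\neq y$ are adjacent in $G$ with $c(x)=c(y)=i$. Then there exist $p\neq q$ with $f_p(x)=f_q(y)$. This common value lies in $f_p(E_i)\cap f_q(E_i)$. If $p<q$, this intersection is empty by hypothesis, a contradiction. If $p>q$, the same value lies in $f_q(E_i)\cap f_p(E_i)$, which is the symmetric case with the roles of $p$ and $q$ swapped. Hence $c$ is a proper $m$-coloring.

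For (2)$\Rightarrow$(1), take a proper coloring $c:E\to\{1,\dots,m\}$ and set $E_i:=c^{-1}(i)$. Suppose, for contradiction, that some $z\in f_p(E_i)\cap f_q(E_i)$ with $p<q$. Write $z=f_p(x)=f_q(y)$ with $x,y\in E_i$. There are two cases.
\begin{enumerate}
\item If $x\neq y$, then $x$ and $y$ are adjacent in $G$ by definition, yet $c(x)=c(y)$. This contradicts properness.
\item If $x=y$, then $f_p(x)=f_q(x)$ with $p\neq q$. This contradicts pointwise distinctness.
\end{enumerate}
Hence every intersection $f_p(E_i)\cap f_q(E_i)$ with $p<q$ is empty.

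The only step needing any care is the case $x=y$ in the converse. The conflict graph has no loops, so properness of the coloring cannot rule this case out. This is precisely where the pointwise distinctness hypothesis is used.
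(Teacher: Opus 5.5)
Your proof is correct and matches the paper's argument: color classes correspond to parts, and in the converse direction pointwise distinctness rules out $x=y$, which forces $x$ and $y$ to be adjacent. Your explicit treatment of the $p>q$ case and of empty parts just spells out what the paper leaves implicit.
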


\begin{proof}
Suppose first that $E=E_1\sqcup\cdots\sqcup E_m$ is such a partition.
If two distinct vertices $x,y\in E_i$ were adjacent in $G$, then for some
$p\neq q$ we would have $f_p(x)=f_q(y)$, and hence
$f_p(E_i)\cap f_q(E_i)\neq\emptyset$, a contradiction. Thus every $E_i$
is an independent set, so the partition defines a proper $m$-coloring of $G$.

Conversely, suppose that $c:E\to\{1,\dots,m\}$ is a proper $m$-coloring of $G$,
and let $E_i:=c^{-1}(i)$. If for some $i$ and some $p\neq q$ the sets
$f_p(E_i)$ and $f_q(E_i)$ were not disjoint, we could choose $x,y\in E_i$
with $f_p(x)=f_q(y)$. Then $x \ne y$ by assumptions, so $x$ and $y$ would be adjacent in $G$, contradicting
the fact that $E_i$ is a color class.
\end{proof}

\subsection{A counterexample to the original conjecture}

We now prove \Cref{thm:disproof}. The mechanism behind the counterexample is simple:
while for $k=2$ the assumption implies that the maximum degree in the conflict graph
is at most $2n$, for $k\ge 3$, the original assumption can be satisfied vacuously by means of a
``dummy'' function with many empty fibers, while the remaining two functions create
a highly chromatic conflict graph.

\begin{proof}[Proof of \Cref{thm:disproof}]
Fix $n\ge 1$ and $M\ge 1$. Let $m>2^M$, and define
\[
E:=\{(i,j):1\le i<j\le m\},
\qquad
F:=\{0,1,\dots,m\}.
\]
Define three functions on $E$ by
\[
f_1(i,j):=i,\qquad f_2(i,j):=j,\qquad f_3(i,j):=0.
\]

The pointwise distinctness condition holds: for every $(i,j)\in E$ we have
$i<j$ and $0\notin\{1,\dots,m\}$, so the three values
$f_1(i,j),f_2(i,j),f_3(i,j)$ are pairwise distinct.

The fiber size condition is also satisfied. Indeed, if $z\in\{1,\dots,m\}$, then
$f_3^{-1}(z)=\emptyset$, while for $z=0$ we have $f_1^{-1}(0)=\emptyset$.
Thus for every $z\in F$ there exists some $t\in\{1,2,3\}$ with
$|f_t^{-1}(z)|=0\le n$.

It remains to analyze the conflict graph $G$.
Since $f_3$ takes only the value $0$, while neither $f_1$ nor $f_2$ ever takes
the value $0$, no edge of $G$ is witnessed by an equality involving $f_3$.
Thus two distinct vertices $(i,j)$ and $(i',j')$ are adjacent if and only if
either
\[
f_2(i,j)=f_1(i',j')
\quad\Longleftrightarrow\quad
j=i',
\]
or
\[
f_1(i,j)=f_2(i',j')
\quad\Longleftrightarrow\quad
i=j'.
\]
This graph is known as the shift graph $S_m$. It is well-known, that $\chi(S_m)>M$; 
we include a simple proof to keep the treatment self-contained. 

Let $c$ be any proper coloring of $S_m$ using $r$ colors. For each $j\in\{1,\dots,m\}$ define
\[
A_j:=\{\ell:\text{ there exists } i<j \text{ with } c(i,j)=\ell\}.
\]
We claim that the sets $A_1,\dots,A_m$ are pairwise distinct subsets of $\{1, \dots, r\}$. Indeed, take
$1\le j<k\le m$, and let $\ell:=c(j,k)$. Then $\ell\in A_k$ by definition.
If also $\ell\in A_j$, then there exists $i<j$ with $c(i,j)=\ell$.
But in the shift graph the vertices $(i,j)$ and $(j,k)$ are adjacent, since
their second and first coordinates coincide, a contradiction, as $c$ is a coloring. 
Hence $\ell\notin A_j$, so $A_j\neq A_k$.

Thus we have $m$ distinct subsets of an $r$-element set, which implies
$m\le 2^r$. Since $m>2^M$, we conclude that $r>M$. Therefore
$\chi(S_m)>M$, and hence the conflict graph $G$ is not $M$-colorable.
By \Cref{lem:conflict-graph}, any partition satisfying the required disjointness
condition must use more than $M$ parts.
\end{proof}

The same construction immediately rules out the original conjecture for all larger
values of $k$.

\begin{corollary}
\label{cor:disproof-all-k}
For every $k\ge 3$, the original conjecture fails.
\end{corollary}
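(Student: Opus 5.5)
The plan is to reduce the case of general $k\ge 3$ to the three-function construction in the proof of \Cref{thm:disproof}, by padding with additional dummy functions. Fix $k\ge 3$, $n\ge 1$ and a target $M\ge 1$; we must show that the original hypothesis can hold while no partition into $2n+1$ parts exists. Taking $M=2n+1$ is enough.

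Take $m>2^M$, $E=\{(i,j):1\le i<j\le m\}$, and let $f_1(i,j)=i$ and $f_2(i,j)=j$ as before. For the extra functions, take a set of fresh symbols $0_3,\dots,0_k$ disjoint from $\{1,\dots,m\}$, put $F=\{1,\dots,m\}\cup\{0_3,\dots,0_k\}$, and define the constant functions $f_t(i,j)=0_t$ for $3\le t\le k$. The conditions are then checked in order:
\begin{enumerate}
\item \emph{Pointwise distinctness.} At each point, $f_1$ and $f_2$ differ because $i<j$. The constants $0_t$ are pairwise distinct and lie outside $\{1,\dots,m\}$, so all $k$ values differ.
\item \emph{Fibre hypothesis.} For $z\in\{1,\dots,m\}$ the fibre $f_3^{-1}(z)$ is empty. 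For $z=0_t$ the fibre $f_1^{-1}(z)$ is empty. So some fibre always has size $0\le n$.
\item \emph{Conflict graph.} An equality $f_p(x)=f_q(y)$ with $p\ne q$ cannot involve any index $t\ge 3$. Indeed, $0_t$ is attained only by $f_t$, and $f_t$ attains nothing else. Hence the conflict graph is exactly the shift graph $S_m$.
\end{enumerate}

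The shift-graph argument from the proof of \Cref{thm:disproof} gives $\chi(S_m)>M$. \Cref{lem:conflict-graph} then rules out any valid partition into $M$ parts, in particular into $2n+1$ parts, so the conjecture fails for this $k$. The one point needing care is step 3: the dummy constants must be pairwise distinct so that no two dummy functions create conflicts with each other. There is no genuine obstacle beyond that.
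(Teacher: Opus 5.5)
Your proof is correct and matches the paper's argument: pad the three-function shift-graph construction with extra functions whose values are fresh and pairwise disjoint (your constants $0_t$ are a special case of the paper's disjoint-range functions), so pointwise distinctness holds and the conflict graph still contains $S_m$. Your version is a bit more explicit than the paper's, since you also check the fibre hypothesis for the new values and observe that the conflict graph is exactly $S_m$; the paper leaves both points implicit.
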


\begin{proof}
Starting from the construction above for $f_1,f_2,f_3$, add functions
$f_4,\dots,f_k$ whose ranges are disjoint from one another and from the range of
$f_1,f_2,f_3$. This preserves pointwise distinctness and does not remove any edges
from the conflict graph, so the graph still contains the shift graph $S_m$ and
therefore has arbitrarily large chromatic number.
\end{proof}

\subsection{The strengthened hypothesis}

The counterexample shows that the original assumption is too weak because it constrains
only individual fibers. A natural fix is to require smallness \emph{pairwise} across
the functions involved in a potential conflict. We now prove
\Cref{thm:pairwise-intro}.

\begin{proof}[Proof of \Cref{thm:pairwise-intro}]
Let $G$ be the conflict graph from \Cref{lem:conflict-graph}; we 
orient its edges so that every vertex has bounded indegree.
Consider an edge $xy$ of $G$. By definition, there exist indices $p\neq q$ such that
\[
f_p(x)=f_q(y)=:z.
\]
By pairwise $n$-boundedness,
\[
\min\bigl(|f_p^{-1}(z)|,\;|f_q^{-1}(z)|\bigr)\le n.
\]
If $|f_p^{-1}(z)|\le n$, orient the edge from $x$ to $y$; otherwise orient it from
$y$ to $x$. (If both inequalities hold, choose either orientation.)
We claim that every vertex has indegree at most $d:=nk(k-1)$. 
Fix a vertex $y\in E$. For an incoming edge $x\to y$, there exist indices $p\neq q$
such that
\[
f_p(x)=f_q(y)
\qquad\text{and}\qquad
|f_p^{-1}(f_q(y))|\le n.
\]
For any $y \in E$, the number of such triples $(p,q,x)$ is at most $k(k-1)n$, thus 
$\operatorname{indeg}(y)\le k(k-1)n = d$. 

Now any induced subgraph $H$ of $G$ has at most $d |V(H)|$ edges (counting arcs 
by their tails), thus average degree at most $2d$. Therefore, $G$ is $2d$-degenerate 
and therefore $(2d+1)$-colorable, so
\[
\chi(G)\le 2nk(k-1)+1.
\]
The desired partition now follows from \Cref{lem:conflict-graph}.
\end{proof}

Under even stronger hypothesis that every fiber of every function has size at most $n$,
one gets a better constant, because then the conflict graph has maximum degree (rather than merely maximum indegree) 
bounded by $d$. 

\begin{theorem}[Uniform $n$-boundedness]
\label{thm:uniform-bound}
Assume pointwise distinctness and suppose that
\[
|f_i^{-1}(z)|\le n
\qquad\text{for every } i\in\{1,\dots,k\}\text{ and every } z\in F.
\]
Then $E$ can be partitioned into $nk(k-1)+1$ parts with the required disjointness
property.
\end{theorem}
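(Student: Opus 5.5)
We use the conflict-graph reformulation of \Cref{lem:conflict-graph}. It reduces the statement to showing that the conflict graph $G$ is properly $(nk(k-1)+1)$-colorable. The plan is to bound the maximum degree of $G$ by $d:=nk(k-1)$ and then color greedily: with maximum degree $\Delta$, a greedy coloring uses at most $\Delta+1$ colors. For infinite $E$, the same conclusion follows either by a compactness argument (De Bruijn--Erd\H{o}s, applicable since the bound is uniform over finite subgraphs) or by a transfinite greedy procedure, since each vertex has finitely many neighbours. So the whole argument reduces to a degree count.

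For the degree count, fix $y\in E$ and let $x\neq y$ be a neighbour. Adjacency means $f_p(x)=f_q(y)$ or $f_p(y)=f_q(x)$ for some $p\neq q$. Relabeling the indices, this is the same as saying that there exist ordered $p\neq q$ with $f_p(x)=f_q(y)$. The relation is symmetric in the pair, so quantifying over all ordered pairs $(p,q)$ with $p\neq q$ covers both directions. For a fixed ordered pair $(p,q)$, the vertex $x$ must lie in the fiber $f_p^{-1}(f_q(y))$, which has size at most $n$ by uniform $n$-boundedness. There are $k(k-1)$ ordered pairs, so $y$ has at most $nk(k-1)$ neighbours.

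The one point to be careful about is exactly this symmetry: one must not count both $f_p(x)=f_q(y)$ and $f_p(y)=f_q(x)$ separately. Doing so would double the bound to $2nk(k-1)$, which is no better than \Cref{thm:pairwise-intro}. Ranging over ordered pairs $(p,q)$, with $y$ always placed on the $q$ side, handles both cases at once. Apart from that, the step needing slight care is the extension to infinite $E$, which compactness handles. Finally, \Cref{lem:conflict-graph} converts the coloring into the required partition into $nk(k-1)+1$ parts.
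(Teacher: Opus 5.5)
Your proposal is correct and matches the paper's proof. The paper likewise bounds $\deg_G(y)$ by summing $|f_p^{-1}(f_q(y))|\le n$ over the $k(k-1)$ ordered pairs $(p,q)$, colors the graph with $\Delta(G)+1$ colors, and applies \Cref{lem:conflict-graph}; your remark on extending to infinite $E$ (via De Bruijn--Erd\H{o}s or a transfinite greedy coloring) is a point of care the paper leaves implicit.
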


\begin{proof}
Let $G$ be the conflict graph. Fix a vertex $y\in E$.
For each ordered pair $(p,q)$ with $p\neq q$, define
\[
N_{p,q}(y):=\{x\in E\setminus\{y\}: f_p(x)=f_q(y)\}.
\]
Then $|N_{p,q}(y)|\le |f_p^{-1}(f_q(y))|\le n$. 
Since every neighbor of $y$ belongs to at least one such set,
\[
\deg_G(y)\le \sum_{p\neq q}|N_{p,q}(y)|\le nk(k-1).
\]
Thus $\Delta(G)\le nk(k-1)$, and therefore $G$ is
$(nk(k-1)+1)$-colorable. The claim follows from \Cref{lem:conflict-graph}.
\end{proof}

\subsection{Lower bounds}

The upper bounds above are not tight in general, but even under pairwise $n$-boundedness
one cannot hope for a bound independent of $k$.

\begin{proposition}
\label{prop:lower-bound}
For every $k\ge 2$ and every $n\ge 1$, there exist examples satisfying
pairwise $n$-boundedness for which at least $2k-1$ parts are required.
\end{proposition}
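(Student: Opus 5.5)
The plan is to realize a complete graph $K_{2k-1}$ as the conflict graph while keeping every fiber of every function of size at most $1$. Since $n\ge 1$, this is even uniform $1$-boundedness, and it implies pairwise $n$-boundedness. \Cref{lem:conflict-graph} then turns the chromatic number $\chi(K_{2k-1})=2k-1$ directly into the lower bound on the number of parts.

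The construction is cyclic. Set $E:=F:=\mathbb{Z}_{2k-1}$ and define
\[
f_i(x):=x+(i-1)\pmod{2k-1},\qquad i=1,\dots,k.
\]
\begin{enumerate}
\item \emph{Pointwise distinctness.} The shifts $0,1,\dots,k-1$ are distinct modulo $2k-1$ because $k-1<2k-1$. Hence $f_i(x)\neq f_j(x)$ for $i\neq j$.
\item \emph{Boundedness.} Each $f_i$ is a bijection of $\mathbb{Z}_{2k-1}$, so $|f_i^{-1}(z)|=1\le n$ for every $i$ and $z$. This gives pairwise $n$-boundedness.
\item \emph{The conflict graph is complete.} Two distinct $x,y$ are adjacent exactly when $f_p(x)=f_q(y)$ for some $p\neq q$, that is, when $x-y\equiv q-p \pmod{2k-1}$. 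The differences $q-p$ with $p\neq q$ in $\{1,\dots,k\}$ range over $\pm1,\dots,\pm(k-1)$. Modulo $2k-1$, these are exactly all nonzero residues, since $-j\equiv 2k-1-j$ and the values $2k-1-j$ for $1\le j\le k-1$ cover $k,\dots,2k-2$. Hence every pair of distinct vertices is adjacent.
\end{enumerate}

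It follows that the conflict graph is $K_{2k-1}$, which is not $(2k-2)$-colorable. By \Cref{lem:conflict-graph}, every partition with the required disjointness property therefore uses at least $2k-1$ parts.

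The only step that needs care is checking that the difference set $\{q-p\}$ covers all of $\mathbb{Z}_{2k-1}\setminus\{0\}$. This is the reason for choosing the modulus $2k-1$: it is the largest modulus for which the shifts $0,\dots,k-1$ still produce a complete graph.
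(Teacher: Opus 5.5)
Your proposal is correct and takes the same approach as the paper: the paper also uses cyclic shifts on $\mathbb{Z}_{2k-1}$ (with $f_i(x)=x+i$ rather than $x+(i-1)$, which makes no difference) and shows the conflict graph is $K_{2k-1}$. As in your argument, the key step there is that the differences $p-q$ with $p\neq q$ cover every nonzero residue modulo $2k-1$.
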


\begin{proof}
Let $E=F=\mathbb{Z}_{2k-1}$ and define for $i=1, \dots, k$ 
\[
f_i(x):=x+i.
\]
Each $f_i$ is a bijection, so in particular all fibers have size $1\le n$.
Hence the instance satisfies the stronger uniform $n$-boundedness condition.

We claim that the conflict graph is the complete graph $K_{2k-1}$.
Indeed, let $x\neq y$ and write $\delta:=y-x\in\mathbb{Z}_{2k-1}\setminus\{0\}$.
The set of ordered differences
$\{p-q: p\neq q\} = \{-(k-1),\dots,-1,1,\dots,k-1\}$ 
coincides with all nonzero residues modulo $2k-1$.
Hence there exist $p\neq q$ with $p-q=\delta$, 
which is equivalent to
\[
   f_p(x) = x+p=y+q = f_q(y). 
\]
Thus every two distinct vertices are adjacent.
Therefore the conflict graph is $K_{2k-1}$ and requires $2k-1$ colors.
\end{proof}

\section{Complexity of Optimization in KKOS Cultural Dynamics}
\label{app:opdiff}

In this appendix we prove the results stated in \Cref{sec:opdiff}. We first establish NP-hardness of the decision problem via a reduction from \textsc{Clique}, then prove membership in NP under rational input encoding, yielding NP-completeness. Finally, we characterize feasible supports on forests and give a polynomial-time algorithm.

Throughout, $G=(V,E)$ is an undirected graph, $A'$ is its adjacency matrix, $A=A'+I$ (self-loops added), $y \in \mathbb{R}_{\ge 0}^V$ is a distribution with $\|y\|_1=1$, and $c \in \mathbb{R}_{\ge 0}^V$ is a cost vector. A distribution $x \in \mathbb{R}_{\ge 0}^V$ with $\sum_v x_v=1$ is \emph{feasible} if for every edge $uv \in E$, whenever $x_u,x_v>0$ one has $(Ax)_u=(Ax)_v$. We write $\operatorname{supp}(x)=\{v : x_v>0\}$ and call $(Ax)_v$ the \emph{mass} at vertex~$v$.

\subsection{NP-hardness}

\begin{lemma}[Universal vertex forces clique support]
\label{lem:universal-clique}
Let $G$ be a graph with a universal vertex $z$. If $x$ is a feasible distribution with $z \in \operatorname{supp}(x)$, then $G[\operatorname{supp}(x)]$ is a clique.
\end{lemma}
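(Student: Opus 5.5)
Since $z$ is universal, $z$ is adjacent to every vertex of $S:=\operatorname{supp}(x)$. Feasibility then gives $(Ax)_v=(Ax)_z$ for every $v\in S\setminus\{z\}$. Because $z$ is adjacent to everything, $(Ax)_z=\sum_{u\in V}x_u=1$. Hence every $v\in S$ satisfies $(Ax)_v=1$; for $v=z$ this holds trivially.

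Next I expand this equation for a vertex $v\in S$. We have $(Ax)_v=x_v+\sum_{u\in N(v)}x_u=\sum_{u\in N[v]\cap S}x_u$, since entries outside $S$ vanish. On the other hand $\sum_{u\in S}x_u=1$. Setting the two equal gives
\[
\sum_{u\in S\setminus N[v]} x_u \;=\; 0 .
\]
All $x_u$ with $u\in S$ are strictly positive, so $S\setminus N[v]=\emptyset$. In other words, every vertex of $S$ other than $v$ is a neighbour of $v$.

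Since $v\in S$ was arbitrary, any two distinct vertices of $S$ are adjacent, so $G[S]$ is a clique. The only point needing care is the identity $(Ax)_z=\|x\|_1=1$, which uses the self-loop in $A=A'+I$ together with universality of $z$; the rest is immediate from positivity on the support.
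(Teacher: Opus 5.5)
Your proposal is correct and follows the paper's proof: universality of $z$ together with the self-loop gives $(Ax)_z=1$, feasibility along each edge $zv$ transfers this to every $v\in S$, and positivity of $x$ on $S$ then forces $S\subseteq N[v]$. The only cosmetic difference is that you write $(Ax)_z$ as a sum over all of $V$, whereas the paper sums over $S$; these agree because $x$ vanishes off $S$.
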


\begin{proof}
Let $S=\operatorname{supp}(x)$.
Fix any $u \in S$ with $u \ne z$. Since $z$ is universal, $zu \in E$. Because $x_z,x_u>0$, feasibility gives $(Ax)_u=(Ax)_z$.
Now $z$ is adjacent to every vertex of $S$, and $A$ includes self-loops, so
$(Ax)_z = \sum_{v \in S} x_v = 1$.
Hence $(Ax)_u=1$. But
$(Ax)_u = \sum_{v \in S \cap N_G[u]} x_v$,
and every $x_v$ with $v \in S$ is positive. This sum equals $1 = \sum_{v \in S} x_v$ only if $S \subseteq N_G[u]$. Since this holds for every $u \in S$, the support induces a clique.
\end{proof}

\begin{lemma}[$\ell_1$ cost lower bound]
\label{lem:l1-lower-bound}
Assume $c_i=1$ for all $i$. Let $S \subseteq V$, and let $x$ be any distribution with $\operatorname{supp}(x) \subseteq S$. Then
$\|x-y\|_1 \ge 2(1-y(S))$,
where $y(S)=\sum_{i \in S} y_i$.
\end{lemma}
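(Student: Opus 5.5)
The plan is to split the $\ell_1$ distance into the coordinates inside $S$ and those outside $S$, and to bound the two parts separately using only that both $x$ and $y$ are probability distributions.

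\textbf{Coordinates outside $S$.} For $i\notin S$ we have $x_i=0$, so $|x_i-y_i|=y_i$. These coordinates therefore contribute exactly
\[
\sum_{i\notin S} y_i = 1-y(S).
\]

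\textbf{Coordinates inside $S$.} Here I will use the triangle inequality in the form
\[
\sum_{i\in S}|x_i-y_i| \;\ge\; \Bigl|\sum_{i\in S}x_i - \sum_{i\in S}y_i\Bigr|.
\]
Since $\operatorname{supp}(x)\subseteq S$ and $\sum_v x_v=1$, we have $\sum_{i\in S}x_i=1$. The right-hand side is thus $|1-y(S)|=1-y(S)$, where the last equality uses $y(S)\le 1$.

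\textbf{Combining.} Adding the two contributions gives
\[
\|x-y\|_1 \ge 2\bigl(1-y(S)\bigr),
\]
which is the claimed bound.

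The argument is elementary and I do not expect a real obstacle. The only points to watch are that the unit costs make the weighted objective equal to $\|x-y\|_1$, and that the hypotheses $\|y\|_1=1$ and $y\ge 0$ guarantee $y(S)\le 1$, so the absolute value can be dropped.
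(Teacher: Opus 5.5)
Your proof is correct and follows the paper's argument exactly: the coordinates outside $S$ contribute exactly $1-y(S)$, and the triangle inequality on the coordinates in $S$ contributes at least $1-y(S)$. The paper writes this last step as $\sum_{i\in S}|x_i-y_i|\ge\sum_{i\in S}(x_i-y_i)=1-y(S)$, so it does not need your extra observation that $y(S)\le 1$.
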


\begin{proof}
Because $x_i=0$ for $i \notin S$,
$\sum_{i \notin S} |x_i-y_i| = \sum_{i \notin S} y_i = 1-y(S)$.
Also, $\sum_{i \in S} x_i = 1$ and $\sum_{i \in S} y_i = y(S)$, so
$\sum_{i \in S} (x_i-y_i) = 1-y(S)$.
By the triangle inequality,
$\sum_{i \in S} |x_i-y_i| \ge 1-y(S)$.
Adding gives $\|x-y\|_1 \ge 2(1-y(S))$.
\end{proof}

\begin{theorem}[NP-hardness]
\label{thm:opdiff-hard}
The decision problem---given $G$, rational $y,c$, and rational threshold $B$, decide whether a feasible distribution $x$ with $\sum_v c_v |x_v-y_v| \le B$ exists---is NP-hard. Hardness holds even with $c_i=1$ for all~$i$, positive rational~$y$, and $G$ having a universal vertex.
\end{theorem}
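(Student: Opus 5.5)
We reduce from \textsc{Clique}. Given a graph $H=(V_H,E_H)$ on $N$ vertices and an integer $s$, build $G$ from $H$ by adding a new universal vertex $z$. We use unit costs and a positive rational $y$ that puts a large mass on $z$ and a small equal mass on each original vertex. Concretely, take $y_z=1-N\delta$ and $y_v=\delta$ for $v\in V_H$, with, say, $\delta=1/(2N)$.

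The intended threshold comes from \Cref{lem:l1-lower-bound}. If the support $S$ of a feasible $x$ contains $z$, then by \Cref{lem:universal-clique} the set $S$ is a clique in $G$. Hence $S\setminus\{z\}$ is a clique of $H$ of some size $j$, and the cost is at least $2(1-y(S))=2(N-j)\delta$. We set $B:=2(N-s)\delta$.

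For the \emph{completeness direction}, suppose $H$ has an $s$-clique $Q$, and let $S=Q\cup\{z\}$. Set $x_v=y_v$ for $v\in Q$, $x_z=y_z+(N-s)\delta$, and $x=0$ elsewhere. Then $S$ is a clique of $G$, so every $u\in S$ has $(Ax)_u=\sum_{v\in S}x_v=1$, and feasibility holds on all edges inside $S$. Edges with an endpoint outside $S$ impose nothing. The cost is exactly $(N-s)\delta+(N-s)\delta=B$. Everything is rational.

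For the \emph{soundness direction}, take a feasible $x$ with cost at most $B$ and split into two cases.

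\emph{Case $z\in\operatorname{supp}(x)$.} By \Cref{lem:universal-clique}, $S\setminus\{z\}$ is a clique of $H$ of size $j$. By \Cref{lem:l1-lower-bound}, $2(N-j)\delta\le B$, so $j\ge s$.

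\emph{Case $z\notin\operatorname{supp}(x)$.} This is the step that needs care, because the support need not be a clique. However, losing $z$ alone costs at least $2y_z$: the argument of \Cref{lem:l1-lower-bound} applied with $S\subseteq V\setminus\{z\}$ gives cost $\ge 2(1-y(V\setminus\{z\}))=2y_z$. With $\delta=1/(2N)$ we get $y_z=1/2$, so the cost is at least $1$. Meanwhile $B=(N-s)/N<1$ for $s\ge1$, so this case cannot occur.

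The main obstacle is choosing $y$ so that excluding $z$ is strictly more expensive than any clique-respecting solution, while $y$ stays positive and rational. The choice above does this. The reduction is polynomial, which proves NP-hardness of the decision version under the stated restrictions.
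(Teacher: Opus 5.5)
Your proposal is correct and follows the paper's proof: the same reduction from \textsc{Clique} by adding a universal vertex $z$, with the same forward witness ($x_v=y_v$ on the clique and the excess mass on $z$). The backward direction uses the same case split on whether $z\in\operatorname{supp}(x)$, via \Cref{lem:universal-clique} and \Cref{lem:l1-lower-bound}; only the constants differ, namely $y_z=1/2$ and $B=(N-s)/N$ versus the paper's $y_z=2/3$ and $T_k=2/3-2k/(3m)$.
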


\begin{proof}
We reduce from \textsc{Clique}. Let $(H,k)$ be an instance with $H=(U,F)$ and $m=|U|$. Construct $G$ by adding a universal vertex $z$ adjacent to all of~$U$. Set $c_i=1$ for all~$i$, $y_z=2/3$, $y_u=1/(3m)$ for each $u \in U$, and threshold $T_k = 2/3 - 2k/(3m)$.

\emph{Forward.} If $H$ has a clique $C$ of size $k$, set $S=\{z\}\cup C$. Then $G[S]$ is a clique, so any distribution on $S$ is feasible. Define $x_u=1/(3m)$ for $u \in C$, $x_u=0$ for $u \in U \setminus C$, $x_z=1-k/(3m)$. Its cost is $2(m-k)/(3m) = T_k$.

\emph{Backward.} Suppose $x$ is feasible with cost $\le T_k$. If $x_z=0$, the $z$-coordinate contributes $2/3$ and $\sum_{u \in U}|x_u-y_u| \ge 2/3$ (since $\sum_u x_u=1$ while $\sum_u y_u=1/3$), giving cost $\ge 4/3 > T_k$, a contradiction. So $z \in \operatorname{supp}(x)$, and by \Cref{lem:universal-clique}, $\operatorname{supp}(x)=\{z\}\cup C$ for a clique $C$ in $H$ of size $t$. By \Cref{lem:l1-lower-bound}, the cost is $\ge 2/3 - 2t/(3m)$, so $t \ge k$.
\end{proof}

\subsection{Membership in NP}

\begin{theorem}[NP-membership]
\label{thm:opdiff-np}
Under the standard binary encoding of rational input, the decision problem belongs to NP.
\end{theorem}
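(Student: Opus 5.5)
The certificate will be the support $S=\operatorname{supp}(x)$ of a feasible distribution achieving cost at most $B$. Since $S\subseteq V$, it has polynomial size. The verifier, given $S$, must decide in polynomial time whether some feasible $x$ with exactly this support has $\sum_v c_v|x_v-y_v|\le B$. For a fixed $S$, feasibility reads as follows: $x_v=0$ for $v\notin S$, $x_v>0$ for $v\in S$, $\sum_v x_v=1$, and $(Ax)_u=(Ax)_v$ for every edge $uv$ with $u,v\in S$. The equalities are linear because $S$ is fixed. The objective becomes linear after introducing auxiliary variables $d_v\ge x_v-y_v$ and $d_v\ge y_v-x_v$ and requiring $\sum_v c_vd_v\le B$. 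The only difficulty is the strict inequalities $x_v>0$, which make the region possibly non-closed. As the paper's footnote observes, we cannot simply minimize an LP over the closure, because its optimum may sit on the boundary with a smaller support.

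To handle the strict inequalities I would use the max-margin reformulation. Replace $x_v>0$ by $x_v\ge \delta$ for all $v\in S$, and maximize $\delta$ subject to the remaining (closed) linear constraints, including $\sum_v c_v d_v\le B$. Let $P_S$ be this polyhedron in the variables $(x,d,\delta)$. The key observation is that the system with strict inequalities is feasible if and only if the LP optimum $\delta^*$ is strictly positive, or the LP is unbounded in $\delta$ (which cannot happen, since $\sum x_v=1$ gives $\delta\le 1$). For the forward direction, any witness $x$ gives $\delta=\min_{v\in S}x_v>0$. For the converse, any optimal point with $\delta^*>0$ is itself a witness.

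Solving this LP is polynomial by the ellipsoid or interior-point methods, with optimal vertex bit-lengths polynomial in the input encoding, so checking $\delta^*>0$ exactly is polynomial. Alternatively, the certificate could include the optimal vertex $(x,d,\delta)$ directly: by standard vertex-size bounds (Cramer's rule applied to a polynomial-size system with rational data), a vertex of $P_S$ maximizing $\delta$ has polynomial encoding size. The verifier then just checks the linear constraints, $\delta>0$, and the cost bound by exact rational arithmetic. This second variant avoids invoking LP solvability and keeps the argument elementary.

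The step I expect to be the main obstacle is justifying that a rational certificate of polynomial size exists despite the open feasible region. This is exactly where the naive LP for fixed support fails. The margin variable resolves it: the set of $(x,d,\delta)$ with $\delta\ge 0$ is a nonempty polytope whenever a witness exists, and its $\delta$-maximizing vertex has $\delta^*>0$ and polynomial bit size. I would write out the bounded-vertex claim carefully. The polytope is bounded because $0\le x\le 1$, $d$ is bounded via $y$, and $\delta\le 1$, and every coefficient is an input rational or $\pm1$.
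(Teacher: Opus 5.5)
Your proposal is correct and matches the paper's proof. The paper likewise fixes the support $S$ of a witness, writes the same max-margin LP (maximize $\delta$ subject to $u_i\ge\delta$ on $S$, the mass equalities on edges inside $S$, and the linearized cost bound), notes that the witness gives a positive optimum, and uses a polynomial-size optimal basic solution as the certificate, which is exactly your second variant. One small slip: your claim that $d$ is bounded fails when some $c_v=0$, since zero costs are allowed. The paper avoids this by adding $0\le t_i\le 1$; alternatively, your polyhedron is pointed (every variable is bounded below) and the objective satisfies $\delta\le 1$, which already guarantees an optimal vertex.
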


\begin{proof}
Let the instance be a yes-instance, with feasible distribution $x$ and $S=\operatorname{supp}(x)$.
Consider the linear program $P(S)$ with variables $u \in \mathbb{R}^V$, $t \in \mathbb{R}^V$, $\delta \in \mathbb{R}$:
\begin{alignat*}{2}
& u_i=0 &&\text{for } i \notin S, \\
& u_i \ge \delta &&\text{for } i \in S, \\
& 0 \le u_i \le 1 &&\text{for all } i, \qquad \textstyle\sum_i u_i = 1,\\
& (Au)_a=(Au)_b &&\text{for every edge } ab \text{ with } a,b \in S, \\
& t_i \ge u_i-y_i,\; t_i \ge y_i-u_i &&\text{for all } i, \qquad 0 \le t_i \le 1,\\
& \textstyle\sum_i c_i t_i \le B, &&\qquad 0 \le \delta \le 1.
\end{alignat*}
Maximize $\delta$. The point $(u,t,\delta)=(x,|x-y|,\min_{i \in S} x_i)$ is feasible with $\delta>0$, so the optimum is positive.

All coefficients are rational. An optimal basic feasible solution has coordinates whose length is polynomial in the input, so there exists an optimal solution $(u^*,t^*,\delta^*)$ with $\delta^*>0$, $\operatorname{supp}(u^*)=S$, and polynomial encoding length. Since $u^*$ satisfies all constraints, it is a feasible distribution with cost $\le B$. Thus every yes-instance has a polynomial-size rational certificate, verifiable in polynomial time.
\end{proof}

\subsection{Structural characterization}

\begin{lemma}[Dominated neighborhoods obstruct feasibility]
\label{lem:dominated}
Let $H$ be a graph. If there is an edge $uv \in E(H)$ with $N_H[u] \subsetneq N_H[v]$, then no strictly positive vector $x \in \mathbb{R}_{>0}^{V(H)}$ can satisfy $(Ax)_u=(Ax)_v$ for all edges.
\end{lemma}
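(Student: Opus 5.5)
The plan is a direct comparison of the two mass equations at $u$ and $v$, using strict positivity of $x$. Suppose, for contradiction, that $x \in \mathbb{R}_{>0}^{V(H)}$ satisfies $(Ax)_a=(Ax)_b$ for every edge $ab$ of $H$, where $A$ is the adjacency matrix of $H$ with self-loops added. Since $A$ includes self-loops, the mass at a vertex $w$ is exactly the sum over its closed neighborhood,
\[
(Ax)_w=\sum_{s\in N_H[w]} x_s .
\]
I will only use the single edge $uv$ from the hypothesis; all other edge constraints are irrelevant.

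First I subtract the two masses. Because $N_H[u]\subsetneq N_H[v]$, the difference is
\[
(Ax)_v-(Ax)_u=\sum_{s\in N_H[v]\setminus N_H[u]} x_s .
\]
The index set $N_H[v]\setminus N_H[u]$ is nonempty by strictness of the inclusion. Every $x_s$ is strictly positive, so this sum is strictly positive. Hence $(Ax)_v>(Ax)_u$.

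This contradicts the constraint $(Ax)_u=(Ax)_v$ for the edge $uv$, which proves the lemma.

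There is no real obstacle here. The only points to state explicitly are that the self-loops make the mass equal to a closed-neighborhood sum, and that strict positivity is needed on the whole vertex set, or at least on $N_H[v]\setminus N_H[u]$. The subsequent use of the lemma presumably applies it to $H=G[\operatorname{supp}(x)]$, with $x$ restricted to the support, where the masses computed in $G$ and in $H$ agree because $x$ vanishes outside $S$. That remark could be noted but is not part of this lemma.
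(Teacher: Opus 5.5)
Your proof is correct and matches the paper's argument: it writes $(Ax)_v-(Ax)_u=\sum_{w\in N_H[v]\setminus N_H[u]} x_w$ and concludes this is strictly positive because the index set is nonempty and $x>0$. Your remark about applying the lemma to $H=G[\operatorname{supp}(x)]$ also matches the paper, which does this in the chordal and forest propositions.
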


\begin{proof}
We have $(Ax)_v-(Ax)_u = \sum_{w \in N_H[v]\setminus N_H[u]} x_w > 0$, since $N_H[v]\setminus N_H[u] \ne \emptyset$ and $x_w>0$ for all~$w$.
\end{proof}

\begin{proposition}[Chordal feasible supports are disjoint unions of cliques]
\label{prop:chordal-clique}
Let $G$ be a chordal graph. If $S \subseteq V$ is a feasible support, then every connected component of $G[S]$ is a clique.
\end{proposition}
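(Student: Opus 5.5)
The plan is to argue by contradiction using the dominated-neighborhood obstruction (\Cref{lem:dominated}) applied to $H=G[S]$. Let $x$ be feasible with $\operatorname{supp}(x)=S$ and fix a connected component $C$ of $G[S]$. Every vertex of $C$ has all its $S$-neighbours inside $C$. Hence for $v\in C$ we have $(Ax)_v=\sum_{w\in N_{G[C]}[v]}x_w$, since vertices outside $S$ carry zero mass. So the restriction $x|_C$ is a strictly positive vector on $V(C)$ satisfying $(Ax)_u=(Ax)_v$ for every edge of $G[C]$, with $A$ now read as the closed-neighbourhood matrix of $G[C]$. Also, $G[C]$ is chordal, because induced subgraphs of chordal graphs are chordal. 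It therefore suffices to show: \emph{every connected chordal graph $H$ that is not complete has an edge $uv$ with $N_H[u]\subsetneq N_H[v]$.} Then \Cref{lem:dominated} rules out such an $H=G[C]$, so $C$ must be a clique.

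To find such an edge, I would use a simplicial vertex. Every chordal graph has one (Dirac), and in fact a non-complete chordal graph has two non-adjacent simplicial vertices. Take a simplicial vertex $u$ of $H$. Since $H$ is connected and not a single vertex, $u$ has a neighbour. Because $N_H[u]$ is a clique, every $v\in N_H(u)$ satisfies $N_H[u]\subseteq N_H[v]$. If equality held for every neighbour $v$ of $u$, then $N_H[u]$ would be a clique closed under taking neighbours. By connectivity it would then equal $V(H)$, making $H$ complete, which is a contradiction. So some neighbour $v$ has $N_H[u]\subsetneq N_H[v]$, and that edge $uv$ is the one required.

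The main obstacle is simply the existence of a simplicial vertex in a chordal graph. I would quote Dirac's theorem, or equivalently the existence of a perfect elimination ordering, rather than reprove it. Everything else is bookkeeping. One detail to check is that passing from $G$ to the component really preserves the mass equalities: mass from vertices outside $S$ vanishes, and vertices in other components are nonadjacent to $C$. Another is the trivial case $|C|=1$, which is a clique, i.e. $K_1$.
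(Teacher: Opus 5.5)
Your proof is correct and follows the same route as the paper: restrict to a connected component of $G[S]$, take a simplicial vertex, find a neighbour with strictly larger closed neighbourhood, and invoke \Cref{lem:dominated}. Your connectivity argument for choosing that neighbour is actually needed. The paper instead takes an arbitrary neighbour $v$ and deduces $N_H[s]\subsetneq N_H[v]$ from non-universality of $s$. That step fails, for example, for a triangle $\{s,v,w\}$ with a pendant vertex attached to $w$, where $N_H[v]=N_H[s]$, so your version closes a small gap in the written argument.
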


\begin{proof}
Let $H$ be a connected component of $G[S]$. Since $G$ is chordal, $H$ is chordal. Restricting a feasible distribution to the vertices of $H$ gives a strictly positive vector satisfying the mass-equality constraints on~$H$.

If $H$ is not a clique, then $H$ has a simplicial vertex~$s$ that is not universal (every chordal graph has a simplicial vertex, and a non-clique has one that is not adjacent to all others). Pick any neighbor $v$ of~$s$ in~$H$. Since $s$ is simplicial, $N_H(s)$ is a clique, so every neighbor of~$s$ is also a neighbor of~$v$. Thus $N_H[s] \subseteq N_H[v]$, and since $s$ is not universal, $N_H[s] \subsetneq N_H[v]$. By \Cref{lem:dominated}, no strictly positive feasible vector exists on~$H$, a contradiction.
\end{proof}

\begin{proposition}[Feasible supports on forests are dissociation sets]
\label{prop:forest-dissociation}
Let $G$ be a forest and $S \subseteq V$ nonempty. Then there exists a feasible distribution with support exactly $S$ if and only if $G[S]$ has maximum degree at most~$1$.
\end{proposition}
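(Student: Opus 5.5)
Both directions reduce to previously stated facts: necessity comes from the chordal case, and sufficiency is an explicit uniform construction.

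\emph{Necessity.} Suppose $x$ is feasible with support exactly $S$. A forest is chordal, so \Cref{prop:chordal-clique} says every connected component of $G[S]$ is a clique. A forest contains no triangle, so every clique in it has at most two vertices. Hence each component of $G[S]$ is an isolated vertex or a single edge, i.e.\ $G[S]$ has maximum degree at most $1$.

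If one wants to avoid citing chordality, there is a direct route through \Cref{lem:dominated}. Suppose some component $H$ of $G[S]$ has at least three vertices. Being a tree, $H$ has a leaf $s$; let $v$ be its neighbor. Since $H$ is connected with at least three vertices, $v$ has another neighbor in $H$. Then $N_H[s]=\{s,v\}\subsetneq N_H[v]$, which contradicts \Cref{lem:dominated}. One should check that restricting $x$ to $H$ gives a strictly positive vector satisfying the edge equalities within $H$. This holds provided the masses are computed in $G$ but $x$ vanishes outside $S$: then $(Ax)_w$ depends only on $x$ restricted to $N_G[w]\cap S$. For $w\in H$ this set equals $N_H[w]$, because a vertex of $S$ adjacent to $w$ lies in the same component of $G[S]$. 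This locality check is the only subtle point, and it is routine.

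\emph{Sufficiency.} Let $G[S]$ have maximum degree at most $1$ and set $x_v=1/|S|$ for $v\in S$ and $x_v=0$ otherwise. Only edges $uv$ with both endpoints in $S$ impose a constraint. For such an edge, $u$ and $v$ are each other's unique neighbors within $S$. Therefore
\[
(Ax)_u=x_u+x_v=(Ax)_v .
\]
So $x$ is a feasible distribution with support exactly $S$.
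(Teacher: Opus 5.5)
Your proof is correct. Your ``direct route'' for necessity is exactly the paper's argument: take a leaf $u$ of a component of $G[S]$ with at least three vertices, observe $N_H[u]\subsetneq N_H[v]$ for its neighbor $v$, note that $(Ax)_w=\sum_{t\in N_H[w]}x_t$ for $w\in S$ because vertices outside $S$ carry zero mass, and invoke \Cref{lem:dominated}. Your sufficiency argument, the uniform distribution $x_i=1/|S|$ on $S$, is also the same as the paper's.

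Your primary route instead specializes \Cref{prop:chordal-clique}: forests are chordal and triangle-free, so cliques have at most two vertices. This is valid and shorter once that proposition is available. It also shows that the forest case is just the chordal case, with leaves playing the role of simplicial vertices.

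The direct route buys self-containment and avoids one weak step. In a tree, the leaf's unique neighbor automatically gives strict containment of closed neighborhoods. By contrast, the written proof of \Cref{prop:chordal-clique} picks \emph{any} neighbor $v$ of the simplicial vertex $s$, and strict containment then fails when $v$ is a twin of $s$. For example, take a triangle on $s,v,w$ with a pendant vertex attached at $w$; here $N_H[s]=N_H[v]$. The fix is to choose $v$ with a neighbor outside $N_H[s]$, which exists by connectivity since $s$ is not universal. The proposition itself is still true, so your citation of it is sound.
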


\begin{proof}
$(\Rightarrow)$\; Suppose $x$ is feasible with $\operatorname{supp}(x)=S$. Let $H=G[S]$. If $H$ has a component with $\ge 3$ vertices, let $u$ be a leaf and $v$ its unique neighbor in that component. Then $v$ has another neighbor in the component, so $N_H[u] \subsetneq N_H[v]$. Since $(Ax)_w = \sum_{t \in N_H[w]} x_t$ for $w \in S$ (vertices outside $S$ carry zero mass), \Cref{lem:dominated} gives a contradiction.

$(\Leftarrow)$\; If $\Delta(H) \le 1$, set $x_i = 1/|S|$ for $i \in S$, $x_i=0$ otherwise. Every support edge $uv$ is an isolated $K_2$ in $H$, so $N_H[u]=N_H[v]=\{u,v\}$ and $(Ax)_u=(Ax)_v=2/|S|$.
\end{proof}

\subsection{Polynomial-time algorithm for forests}

\begin{lemma}[Closed-form cost for fixed support]
\label{lem:fixed-support-cost}
Let $S \subseteq V$ be nonempty. Among all distributions supported within $S$, the minimum $\ell_1$-cost is
\[
F(S) = \sum_{i \notin S} c_i y_i + (1-y(S))\min_{i \in S} c_i.
\]
\end{lemma}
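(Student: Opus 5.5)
The plan is to prove matching lower and upper bounds on $\sum_i c_i|x_i-y_i|$ over distributions $x$ with $\operatorname{supp}(x)\subseteq S$. This is a weighted version of \Cref{lem:l1-lower-bound}; note that no feasibility constraint appears here, only the support restriction. I will write $c_{\min}:=\min_{i\in S}c_i$, attained at some $i^*\in S$.

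\textbf{Lower bound.} Outside $S$ we have $x_i=0$, so these coordinates contribute exactly $\sum_{i\notin S}c_iy_i$. Inside $S$, I use $c_i\ge c_{\min}$ and then the triangle inequality exactly as in \Cref{lem:l1-lower-bound}:
\[
\sum_{i\in S}c_i|x_i-y_i|\ \ge\ c_{\min}\sum_{i\in S}|x_i-y_i|\ \ge\ c_{\min}\Bigl|\sum_{i\in S}(x_i-y_i)\Bigr| .
\]
Since $\sum_{i\in S}x_i=1$ and $\sum_{i\in S}y_i=y(S)$, the last expression equals $c_{\min}(1-y(S))$, and $1-y(S)\ge 0$ because $\|y\|_1=1$ and $y\ge 0$. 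Adding the two parts gives $\ge F(S)$.

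\textbf{Upper bound.} I take the explicit distribution $x_i=y_i$ for $i\in S\setminus\{i^*\}$, $x_{i^*}=y_{i^*}+(1-y(S))$, and $x_i=0$ for $i\notin S$. It is nonnegative, sums to $1$, and is supported within $S$. Its cost is $\sum_{i\notin S}c_iy_i+c_{i^*}(1-y(S))=F(S)$, so the minimum is attained.

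\textbf{Main obstacle.} There is hardly one. The only point needing care is that the statement says "supported within $S$", not "exactly $S$". So I do not need to worry that $x$ might vanish on parts of $S$ where $y_i=0$, nor about strict-positivity issues like those in the NP-membership LP. I would remark that when exact support $S$ is required, as for dissociation sets on forests, the value $F(S)$ is an infimum approached by perturbation rather than necessarily attained; handling that distinction is left to the subsequent algorithmic argument.
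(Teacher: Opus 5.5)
Your proof is correct and essentially identical to the paper's. You make the same split into the outside-$S$ contribution $\sum_{i\notin S}c_iy_i$ and the inside bound $\sum_{i\in S}c_i|x_i-y_i|\ge c_*\sum_{i\in S}|x_i-y_i|\ge c_*(1-y(S))$, and you attain it the same way, by placing the whole deficit $1-y(S)$ on a minimum-cost vertex of $S$. Your closing remark about exact supports is not needed for the lemma; in the forest algorithm the paper handles it through heredity of dissociation sets, so the minimum there is actually attained.
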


\emph{Note:} the least-cost distribution above may \emph{not} be an equilibrium.

\begin{proof}
Let $m=1-y(S)$ and $c_*=\min_{i \in S} c_i$. For any distribution $x$ supported within $S$, we have $x_i=0$ for every $i \notin S$, so
$\sum_{i \notin S} c_i|x_i-y_i| = \sum_{i \notin S} c_i y_i$.
For the in-$S$ part, setting $d_i=x_i-y_i$, we have $\sum_{i \in S} d_i = m$, so
$\sum_{i \in S} c_i|d_i| \ge c_* \sum_{i \in S} |d_i| \ge c_* m$.

To attain equality, pick $r \in S$ with $c_r=c_*$ and set $x_i=y_i$ for $i \in S \setminus \{r\}$, $x_r=y_r+m$.
\end{proof}

\begin{theorem}[Polynomial-time algorithm for forests]
\label{thm:forest-algo}
If $G$ is a forest, the optimization problem is solvable in $O(n^2)$ time. The decision problem on forests is therefore in~$P$.
\end{theorem}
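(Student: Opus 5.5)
By \Cref{prop:forest-dissociation}, the feasible supports on a forest are exactly the nonempty dissociation sets. I will first show that, once a dissociation set $S$ is fixed, the best cost over feasible distributions with support exactly $S$ has infimum $F(S)$ from \Cref{lem:fixed-support-cost}. The optimization problem then becomes minimizing $F(S)$ over nonempty dissociation sets $S$, and a tree dynamic program handles that.

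\textbf{Step 1: reduction to $F(S)$.} Take the minimizer from \Cref{lem:fixed-support-cost}: $x_i=y_i$ on $S\setminus\{r\}$ and $x_r=y_r+m$. This vector is not feasible in general, for two reasons.
\begin{itemize}
\item Some $y_i$ with $i\in S$ may be $0$, so the support is smaller than $S$.
\item On a support edge $uv$, which is an isolated $K_2$ in $G[S]$, feasibility requires only $x_u+x_v=(Ax)_u=(Ax)_v=x_v+x_u$. This holds automatically.
\end{itemize}
So inside a dissociation set, every strictly positive vector supported on $S$ is feasible. This is the key observation: the mass condition on an isolated edge is vacuous.

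It follows that any distribution whose support is a subset of a dissociation set is feasible, since subsets of dissociation sets are again dissociation sets. The optimum over all feasible $x$ therefore equals $\min_S F(S)$ over nonempty dissociation sets $S$. The minimum is attained, by the $x$ of \Cref{lem:fixed-support-cost} whose support $S'\subseteq S$ is itself a dissociation set, with cost $F(S)$. I need to double-check that the lemma's bound $F(S)$ is a lower bound for every $x$ supported within $S$; it is, by that lemma.

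\textbf{Step 2: rewriting the objective.} Write $F(S)=\sum_{i\notin S}c_iy_i+(1-y(S))\,c_*(S)$. Because of the $\min_{i\in S} c_i$ factor, the objective is not additive over vertices. I will enumerate the vertex $r$ attaining the minimum cost, fixing the threshold $\gamma=c_r$. Restrict to vertices with $c_i\ge\gamma$, force $r\in S$, and minimize
\[
\sum_{i\notin S}c_iy_i+\gamma\Bigl(1-\sum_{i\in S}y_i\Bigr)=\gamma+\sum_{i\notin S}c_iy_i-\gamma\sum_{i\in S}y_i .
\]
This is additive: each vertex contributes $c_iy_i$ if it is excluded and $-\gamma y_i$ if it is included. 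Ties are harmless. If the true minimum cost in $S$ is smaller than $\gamma$, that choice is covered when we guess the correct $r$. With a guessed $r$ that is not the minimizer, restricting to $c_i\ge\gamma$ ensures that $c_*(S)=\gamma$ exactly.

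\textbf{Step 3: the dynamic program.} For each guessed $r$, I run a linear-time tree DP on every component, each rooted arbitrarily. Each vertex has three states:
\begin{enumerate}
\item not in $S$;
\item in $S$ with no chosen neighbor among its children;
\item in $S$ with exactly one chosen child.
\end{enumerate}
Parent–child compatibility follows the dissociation constraint: a vertex in state~2 may accept at most one chosen child, and a vertex in state~3 must not be adjacent to a chosen parent. Vertices with $c_i<\gamma$ are forced out, and $r$ is forced in. The state transitions are routine; the "one chosen child" case is a best single-swap computation, which is still linear.

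There are $n$ guesses of $r$, each costing $O(n)$, for $O(n^2)$ in total. Taking the best of all guesses gives the optimum together with an explicit feasible $x$. The decision version compares the optimum with $B$, so it is in P.

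The main obstacle is Step 2, handling the non-separable $\min c_i$ term; enumerating $r$ is the fix I would try first. The feasibility subtlety in Step 1, where some $y_i$ may be zero, is resolved by closure of dissociation sets under taking subsets.
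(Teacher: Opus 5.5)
Your proposal is correct and follows the paper's proof essentially step for step. You reduce to minimizing $F(S)$ over nonempty dissociation sets by heredity, guess the minimum-cost anchor $r$, and restrict to $\{i : c_i \ge c_r\}$ with $r$ forced in, so that the objective becomes additive with the paper's weights $(c_i+c_r)y_i$. Then you run the same three-state tree DP in $O(n)$ per anchor.

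Two wording slips are worth fixing:
\begin{itemize}
\item In Step~1, you say the lemma's minimizer is ``not feasible in general'', which contradicts your own correct conclusion that any distribution supported inside a dissociation set is feasible.
\item In Step~3, a state-2 vertex has no chosen children by definition. The rule you intend is that the unique chosen child of a chosen vertex must be in state~2.
\end{itemize}
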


\begin{proof}
\emph{Step~1: reduction to dissociation sets.}\;
By \Cref{prop:forest-dissociation}, the feasible supports are exactly the nonempty dissociation sets (vertex sets $S$ with $\Delta(G[S]) \le 1$). Since dissociation sets are hereditary, $\operatorname{OPT} = \min\{F(S) : \emptyset \ne S \subseteq V,\; \Delta(G[S]) \le 1\}$.

\emph{Step~2: anchor at a minimum-cost vertex.}\;
Set $K = \sum_{i \in V} c_i y_i$.
For each $r \in V$, let $V_r = \{i \in V : c_i \ge c_r\}$, let $G_r=G[V_r]$, and assign weight $w_i^{(r)}=(c_i+c_r)y_i$ to each $i \in V_r$. Let $M_r$ be the maximum total weight of a dissociation set in $G_r$ containing~$r$.

Using \Cref{lem:fixed-support-cost}, for any nonempty dissociation set $S$ with anchor $r \in S$ minimizing $c_i$ over $S$:
$F(S) = K + c_r - \sum_{i \in S} w_i^{(r)} \ge K + c_r - M_r$.
Conversely, the dissociation set achieving $M_r$ gives $F(S_r)=K+c_r-M_r$.
Hence $\operatorname{OPT} = \min_{r \in V}(K+c_r-M_r)$.

\emph{Step~3: dynamic programming.}\;
Each $G_r$ is a forest. In the component containing~$r$, root the tree at~$r$. For each vertex~$v$ in a rooted subtree, define:
\begin{itemize}
\item $P_v$: max weight of a dissociation set in the subtree of $v$ with $v \notin S$,
\item $Q_v$: max weight with $v \in S$ and no child of $v$ in $S$,
\item $R_v$: max weight with $v \in S$ and exactly one child of $v$ in $S$.
\end{itemize}
Base case (leaf): $P_v=0$, $Q_v=w(v)$, $R_v=-\infty$.
For children $u_1,\ldots,u_t$:
\begin{align*}
P_v &= \textstyle\sum_{j=1}^t \max\{P_{u_j}, Q_{u_j}, R_{u_j}\}, \\
Q_v &= w(v) + \textstyle\sum_{j=1}^t P_{u_j}, \\
R_v &= w(v) + \textstyle\sum_{j=1}^t P_{u_j} + \max_{1 \le j \le t}(Q_{u_j}-P_{u_j}).
\end{align*}
(The $R_v$ formula uses the observation that selecting child $u_j$ adds $Q_{u_j}-P_{u_j}$ over the baseline. If $v \in S$ and child $u_j \in S$, then $u_j$ cannot have a selected child, so $u_j$ must be in state~$Q$.)

For the root component, $M_r$ uses $\max\{Q_r, R_r\}$ (the root must be selected). For other components, add $\max\{P_\rho, Q_\rho, R_\rho\}$ at each root~$\rho$. Each $M_r$ is computed in $O(n)$ time; iterating over all anchors takes $O(n^2)$.

Once a minimizing anchor and optimal dissociation set $S_r$ are found, the optimal distribution is $x_i=0$ for $i \notin S_r$, $x_i=y_i$ for $i \in S_r \setminus \{r\}$, $x_r=y_r+1-y(S_r)$.
\end{proof}

\section{Estimating the support size of $\varepsilon$-coarse correlated equilibria}
\label{app:gameTheory}

In this section, we prove Theorem~\ref{thm:gameTheory} from Section~\ref{sec:gameTheory}.
Recall that we want to show that there is a constant $C>0$ such that for every $\varepsilon \in (0,1/2)$, there are arbitrarily large integers $n$ and normal-form games $G$ of $n$ players, each with two actions, with payoffs in $\{0,1\}$ such that every $k$-uniform $\varepsilon$-coarse correlated equilibrium in $G$ satisfies
\[k \geq \frac{C \cdot\log{n}}{\varepsilon^2\log{(1/\varepsilon)}}.\] 

For an integer $s \geq 2$, we consider the following normal-form game $G=(S,A,u)$
 of $n=2\binom{s}{2}=s(s-1)$ players labeled by ordered pairs $(i,j)$ of integers satisfying $1 \leq i, j \leq s$ and $i \neq j$.
Each player $(i,j) \in S$ has the action set $A_{(i,j)}=\{-1,+1\}$.
For each $i\in [s]$, we also define the following subset of the players
\[
S_i=\{(i,j) \colon j \in [s] \setminus\{i\}\}
\]
and, for each $\{i,j\} \in \binom{[s]}{2}$, we set
\[S_{\{i,j\}} = (S_i \Delta S_j) \setminus \{(i,j),(j,i)\},\]
where $\Delta$ denotes the symmetric difference.
Note that $(i,j) \in S_i \setminus S_j$ and $(j,i) \in S_j \setminus S_i$. 
For an action profile $a = (a_{(i,j)})_{(i,j) \in S} \in A$ and a player $(i,j) \in S$, we define the payoffs by setting
\[
u_{(i,j)}(a) =
\begin{cases}
\frac{1+a_{(i,j)}a_{(j,i)}\chi_{S_{\{i,j\}}}(a)}{2} & \text{if } i<j, \\
\frac{1-a_{(i,j)}a_{(j,i)}\chi_{S_{\{i,j\}}}(a)}{2}  & \text{if } j>i,
\end{cases}
\]
where $\chi_T(a) = \prod_{(i',j') \in T}a_{(i',j')}$ for every subset $T$ of $S$.
Note that the payoffs attain values in $\{0,1\}$ as all actions are from $\{-1,+1\}$.

The following two auxiliary results show how the condition from the definition of $\varepsilon$-coarse correlated equilibrium can be used to obtain a bound that will eventually be used to estimate the Hamming distance between certain vectors.

\begin{lemma}
\label{lem-players1}
For every $\varepsilon > 0$, if $P$ is an $\varepsilon$-coarse correlated equilibrium in $G$, then $|\mathbb{E}_{a \sim P}[a_{(i,j)}a_{(j,i)}\chi_{S_{\{i,j\}}}(a)]| \leq 2\varepsilon$ for every player $(i,j) \in S$.
\end{lemma}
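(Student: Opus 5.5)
The plan is to use the $\varepsilon$-CCE condition twice: once for player $(i,j)$ and once for its partner $(j,i)$. Each time, the deviation is to the constant action that is best against the sign of the correlation term. Write
\[
X(a) := a_{(i,j)}a_{(j,i)}\chi_{S_{\{i,j\}}}(a)
\quad\text{and}\quad
\mu := \mathbb{E}_{a\sim P}[X(a)].
\]
The key structural observation is that neither $(i,j)$ nor $(j,i)$ belongs to $S_{\{i,j\}}$, since both are removed by definition. Hence
\[
Y_{(i,j)}(a) := a_{(j,i)}\chi_{S_{\{i,j\}}}(a)
\]
does not depend on $a_{(i,j)}$, and $X(a)=a_{(i,j)}Y_{(i,j)}(a)$. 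Symmetrically, $X(a)=a_{(j,i)}Y_{(j,i)}(a)$ with $Y_{(j,i)}$ independent of $a_{(j,i)}$.

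By relabeling we may assume $i<j$. I read the second case of the payoff definition as ``$i>j$''; the stated ``$j>i$'' is presumably a typo. Then
\[
u_{(i,j)}=\tfrac{1+X}{2}
\quad\text{and}\quad
u_{(j,i)}=\tfrac{1-X}{2}.
\]

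\emph{Player $(i,j)$.} This player's expected payoff is $(1+\mu)/2$. Deviating to the constant action $a'\in\{-1,+1\}$ gives
\[
\mathbb{E}\bigl[(1+a'Y_{(i,j)}(a))/2\bigr]=\bigl(1+a'\,\mathbb{E}[Y_{(i,j)}]\bigr)/2,
\]
because $Y_{(i,j)}$ does not involve player $(i,j)$'s own action. Choosing the better $a'$ yields the gain $\bigl(|\mathbb{E}[Y_{(i,j)}]|-\mu\bigr)/2$, which must be at most $\varepsilon$. This alone only bounds $-\mu$ from above (i.e., $\mu\ge -2\varepsilon$), since $|\mathbb{E}[Y_{(i,j)}]|\ge 0$.

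\emph{Player $(j,i)$.} This player's payoff is $(1-X)/2$, with expectation $(1-\mu)/2$. Deviating to $a'$ gives $\bigl(1-a'\,\mathbb{E}[Y_{(j,i)}]\bigr)/2$, so the gain is at least $\bigl(0+\mu\bigr)/2$ whichever sign is best. Hence $\mu/2\le\varepsilon$, i.e., $\mu\le 2\varepsilon$.

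Combining the two bounds gives $|\mu|\le 2\varepsilon$. This is a statement about the pair, so it holds for every player $(i,j)\in S$, as required.

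The only delicate step is justifying that a constant deviation's payoff reduces to $\pm\mathbb{E}[Y]$. This relies on both $(i,j)$ and $(j,i)$ being excluded from $S_{\{i,j\}}$, so that the deviating player's action appears exactly once in $X$. Actually it is simpler than taking the best response: for each player one only needs $|\mathbb{E}[Y]|\ge 0$, which gives the gain bound of $\mp\mu/2$ directly. I therefore expect no real obstacle beyond getting the signs right and handling the typo in the case split of the payoff definition.
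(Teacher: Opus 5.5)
Your proof is correct and follows the paper's argument: the $\varepsilon$-CCE constraint for player $(i,j)$ (with $i<j$) gives $\mu\ge -2\varepsilon$ and the one for $(j,i)$ gives $\mu\le 2\varepsilon$, because the deviator's own action enters $X$ only through the single factor $a_{(i,j)}$, resp.\ $a_{(j,i)}$. Your version is slightly cleaner than the paper's. You write the best constant deviation as $(1+|\mathbb{E}[Y_{(i,j)}]|)/2$, whereas the paper omits the absolute value and then asserts that $\mathbb{E}[a_{(j,i)}\chi_{S_{\{i,j\}}}(a)]$ is non-negative. You also rightly read the second payoff case as $i>j$.
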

\begin{proof}
For every player $(i,j) \in S$ with $i<j$, it follows from the definition of the payoffs that
\[\mathbb{E}_{a \sim P}[u_{(i,j)}(a)] = \frac{1+\mathbb{E}_{a \sim P}[a_{(i,j)}a_{(j,i)}\chi_{S_{\{i,j\}}}(a)]}{2} .\]
For every fixed action $a'_{(i,j)} \in A_{(i,j)}$, the expected payoff of $(i,j)$ for playing $a'_{(i,j)}$ while others follow $P$ is 
\[\mathbb{E}_{a \sim P}[u_{(i,j)}(a'_{(i,j)};a_{-(i,j)})] = \frac{1+a'_{(i,j)}\mathbb{E}_{a \sim P}[a_{(j,i)}\chi_{S_{\{i,j\}}}(a)]}{2}.\]
Thus, since $a'_{(i,j)} \leq 1$, the maximum payoff that the player $(i,j)$ can get for playing a fixed action is $\frac{1+\mathbb{E}_{a \sim P}[a_{(j,i)}\chi_{S_{\{i,j\}}}(a)]}{2}$.

Since $P$ is an $\varepsilon$-coarse correlated equilibrium, we have \[\mathbb{E}_{a \sim P}[u_{(i,j)}(a'_{(i,j)};a_{-(i,j)})] \leq \mathbb{E}_{a \sim P}[u_{(i,j)}(a)] + \varepsilon.\]
Using the above expressions, we can rewrite this inequality as
\[\mathbb{E}_{a \sim P}[a_{(j,i)}\chi_{S_{\{i,j\}}}(a)] \leq \mathbb{E}_{a \sim P}[a_{(i,j)}a_{(j,i)}\chi_{S_{\{i,j\}}}(a)] + 2\varepsilon.\]
Since the left-hand side is non-negative, we obtain $\mathbb{E}_{a \sim P}[a_{(i,j)}a_{(j,i)}\chi_{S_{\{i,j\}}}(a)] \geq -2\varepsilon$.

Proceeding analogously for the player $(j,i)$, we can derive $\mathbb{E}_{a \sim P}[a_{(i,j)}a_{(j,i)}\chi_{S_{\{i,j\}}}(a)] \leq 2\varepsilon$.
Putting both inequalities together, we get the bound $|\mathbb{E}_{a \sim P}[a_{(i,j)}a_{(j,i)}\chi_{S_{\{i,j\}}}(a)]| \leq 2\varepsilon$.
\end{proof}

\begin{lemma}
\label{lem-players2}
For every $\varepsilon>0$, each pair $\{i,j\} \in \binom{[s]}{2}$, and every $\varepsilon$-coarse correlated equilibrium $P$ in $G$, we have
\[|\mathbb{E}_{a \sim P}[\chi_{S_i \Delta S_j}(a)]| \leq 2\varepsilon.\]
\end{lemma}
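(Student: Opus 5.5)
The plan is to reduce Lemma~\ref{lem-players2} directly to Lemma~\ref{lem-players1} by an algebraic identity for characters. Since every action lies in $\{-1,+1\}$, we have $a_{(i',j')}^2=1$. Hence for disjoint sets $T,U\subseteq S$ we get $\chi_T(a)\chi_U(a)=\chi_{T\cup U}(a)$, and in particular $\chi_{\{p\}}(a)=a_p$ for a single player $p$.

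First, I will check the set-theoretic facts. We have $(i,j)\in S_i\setminus S_j$ and $(j,i)\in S_j\setminus S_i$, so both players lie in $S_i\Delta S_j$. By definition $S_{\{i,j\}}=(S_i\Delta S_j)\setminus\{(i,j),(j,i)\}$. Therefore $S_i\Delta S_j$ is the disjoint union of $\{(i,j)\}$, $\{(j,i)\}$ and $S_{\{i,j\}}$.

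Second, from this decomposition I obtain the pointwise identity
\[
\chi_{S_i\Delta S_j}(a)=a_{(i,j)}\,a_{(j,i)}\,\chi_{S_{\{i,j\}}}(a)
\]
for every action profile $a\in A$.

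Third, I take expectations under $P$ and apply Lemma~\ref{lem-players1} to the player $(i,j)$, with the pair ordered so that $i<j$; either ordering works, since the expression is symmetric in $i$ and $j$. This gives $|\mathbb{E}_{a\sim P}[\chi_{S_i\Delta S_j}(a)]|\le 2\varepsilon$.

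There is no real obstacle here. The only point needing care is confirming that $(i,j)$ and $(j,i)$ are actually removed when forming $S_{\{i,j\}}$, and are not merely absent from it, so that the product has no repeated factors. Even a repeated factor would square to $1$, so the argument is robust either way.
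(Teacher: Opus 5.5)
Your proof is correct and is the paper's argument: split $S_i\Delta S_j$ into $\{(i,j)\}$, $\{(j,i)\}$ and $S_{\{i,j\}}$, deduce $\chi_{S_i\Delta S_j}(a)=a_{(i,j)}a_{(j,i)}\chi_{S_{\{i,j\}}}(a)$, and apply Lemma~\ref{lem-players1} to player $(i,j)$ with $i<j$. Your closing aside is wrong, though: if $(i,j)$ and $(j,i)$ were still in $S_{\{i,j\}}$, the repeated factors would square to $1$ and cancel, the identity would fail, and so the removal in the definition is genuinely needed.
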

\begin{proof}
Fix a pair $\{i,j\} \in \binom{[s]}{2}$ and assume without loss of generality that $i<j$.
By applying Lemma~\ref{lem-players1} for the player $(i,j)$, we obtain $|\mathbb{E}_{a \sim P}[a_{(i,j)}a_{(j,i)}\chi_{S_{\{i,j\}}}(a)]| \leq 2\varepsilon$.
By the choice of $S_{\{i,j\}}$, we have $S_{\{i,j\}} = (S_i \Delta S_j) \setminus \{(i,j),(j,i)\},$ where $(i,j) \in S_i \setminus S_j$ and $(j,i) \in S_j \setminus S_i$.
Thus, $a_{(i,j)}a_{(j,i)}\chi_{S_{\{i,j\}}}(a) = \chi_{S_i \Delta S_j}(a)$.
It then follows that 
\[
|\mathbb{E}_{a \sim P}[\chi_{S_i \Delta S_j}(a)]|=|\mathbb{E}_{a \sim P}[a_{(i,j)}a_{(j,i)}\chi_{S_{\{i,j\}}}(a)]| \leq 2\varepsilon.
\]
\end{proof}

We now use the previous two results to construct a set of vectors with pairwise large Hamming distances that will eventually provide the desired lower bound on the support.

\begin{lemma}
\label{lem-players3}
For a positive integer $k$ and a fixed real number $\varepsilon\in (0,1/2)$, let $P$ be a $k$-uniform probability distribution on $A$.
If $R_1,\dots,R_s$ are subsets of $S$ such that $|\mathbb{E}_{a \sim P}[\chi_{R_\ell \Delta R_{\ell'}}(a)]|<2\varepsilon$ for every pair $\{\ell,\ell'\} \in \binom{[s]}{2}$, then $k \in \Omega(\log{s}/(\varepsilon^2\log{(1/\varepsilon)}))$.
\end{lemma}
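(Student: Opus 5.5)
The plan is to reduce the statement to a known lower bound on the dimension of nearly orthogonal $\pm1$ vectors, namely Alon's rank lemma, and then prove that bound by a tensor-power argument.

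\emph{Reduction to vectors.} Write $P=\frac1k\sum_{t=1}^k a^t$ with action profiles $a^1,\dots,a^k\in\{-1,+1\}^S$. For each $\ell\in[s]$ define $v_\ell\in\{-1,+1\}^k$ by $v_\ell(t):=\chi_{R_\ell}(a^t)$. Since $a_{(i',j')}^2=1$, we have $\chi_{R_\ell\Delta R_{\ell'}}(a)=\chi_{R_\ell}(a)\chi_{R_{\ell'}}(a)$. Hence
\[
\mathbb{E}_{a\sim P}[\chi_{R_\ell\Delta R_{\ell'}}(a)]=\tfrac1k\langle v_\ell,v_{\ell'}\rangle .
\]
The hypothesis therefore says that the $s$ vectors $v_\ell/\sqrt k$ are unit vectors in $\mathbb{R}^k$ with pairwise inner products of absolute value less than $\delta:=2\varepsilon<1$. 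It remains to show that such a configuration forces $k=\Omega\bigl(\log s/(\delta^2\log(1/\delta))\bigr)$; since $\varepsilon<1/2$, this is the claimed bound up to constants.

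\emph{Rank bound.} Let $M$ be the $s\times s$ Gram matrix with entries $M_{\ell\ell'}=\frac1k\langle v_\ell,v_{\ell'}\rangle$. Then $M$ has unit diagonal, off-diagonal entries of absolute value less than $\delta$, and $\operatorname{rank}M\le k$. The basic trace inequality (Cauchy--Schwarz on the eigenvalues) gives
\[
\operatorname{rank}(B)\ge \frac{(\operatorname{tr}B)^2}{\operatorname{tr}(B^2)}\ge \frac{s^2}{s+s^2\eta^2}
\]
for any symmetric $B$ with unit diagonal and off-diagonal entries bounded by $\eta$. To amplify this, apply it to the entrywise power $M^{\circ r}$. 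Its off-diagonal entries are bounded by $\delta^r$, and its rank is at most $\binom{k+r-1}{r}$, since it is spanned by symmetric tensor powers. Choose $r=\lceil \log s/(2\log(1/\delta))\rceil$, so that $\delta^{2r}\le 1/s$. Then $\binom{k+r-1}{r}\ge s/2$, and consequently $\bigl(e(k+r)/r\bigr)^r\ge s/2$.

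\emph{Main obstacle: extracting the bound on $k$.} The step I expect to be hardest is converting this into the stated bound without losing the $1/\delta^2$ factor. Write $k=r\cdot x$, so the inequality becomes $x\gtrsim s^{1/r}$. With $s^{1/r}\approx \delta^{-2}$ this gives $k\gtrsim r/\delta^2\approx \log s/(\delta^2\log(1/\delta))$. The computation must be done carefully in two respects: tracking the constant $e$ and the ceiling in $r$, and treating the regime $\delta<s^{-1/2}$, where $r$ would be $1$. In that regime I would use the unamplified bound directly, which gives $k\ge \operatorname{rank}M\ge s/2$; this is the regime not relevant when $\varepsilon$ is fixed and $s\to\infty$, as in the application. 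With the ceiling and this case handled, the argument yields $k\in\Omega(\log s/(\varepsilon^2\log(1/\varepsilon)))$.
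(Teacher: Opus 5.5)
Your reduction to the vectors $v_\ell=(\chi_{R_\ell}(a^t))_{t\le k}$ with $|\langle v_\ell,v_{\ell'}\rangle|<2\varepsilon k$ is the paper's first step. After that the routes diverge. The paper turns the inner-product bound into Hamming distances in $((1-2\varepsilon)k/2,(1+2\varepsilon)k/2)$, packs disjoint balls of radius $(1-2\varepsilon)k/4$ to get $sV\le 2^k$, and then points to a citation for the final form of the bound. That packing inequality cannot produce the factor $1/(\varepsilon^2\log(1/\varepsilon))$ on its own. Since $V\le 2^{H(1/4)k}$ ($H$ the binary entropy), $sV\le 2^k$ holds automatically once $k\ge \log s/(1-H(1/4))\approx 5.3\log s$, whatever $\varepsilon$ is. So the packing buys a two-line $\Omega(\log s)$ bound for fixed $\varepsilon$, which already answers the Babichenko--Barman--Peretz question negatively. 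The $\varepsilon$-dependence with an absolute constant, which the theorem's universal $C$ needs, is carried entirely by the citation. Your argument is the standard tensor-power proof of Alon's bound for nearly orthogonal vectors: $\operatorname{rank}B\ge(\operatorname{tr}B)^2/\operatorname{tr}(B^2)$ applied to $M^{\circ r}$, whose rank is at most $\binom{k+r-1}{r}$. That is exactly the ingredient the citation must supply. It makes the lemma self-contained, at the cost of a more delicate final computation.

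That computation needs three routine repairs.

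(i) With $r=\lceil L\rceil$, $L=\log s/(2\log(1/\delta))$, you only know $L>r-1$, hence $s^{1/r}\ge\delta^{-2+2/r}$. This is within a constant of $\delta^{-2}$ only when $r\gtrsim\log(1/\delta)$. For $r=2$ and $s$ just above $\delta^{-2}$ it gives merely $k\gtrsim 1/\delta$. This is harmless for fixed $\varepsilon$ and $s\to\infty$. The clean fix is $r=\lfloor L\rfloor\ge L/2$ together with $s/(1+s\eta^2)\ge\min\{s/2,1/(2\eta^2)\}$. Since $s\delta^{2r}\ge 1$, this yields $\binom{k+r-1}{r}\ge\delta^{-2r}/2$, hence $e(k+r)/r\ge\delta^{-2}/2$ and $k\ge r(\frac{1}{2e\delta^2}-1)$.

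(ii) A bound in terms of $\delta=2\varepsilon$ is not ``the claimed bound up to constants'' as $\varepsilon\to1/2$, because $\log(1/\delta)\to0$. The $\delta$-version is actually false there: take one vector from each antipodal pair of $\{\pm1\}^k$.
\begin{itemize}
\item For $\varepsilon\ge 1/8$, use that the $v_\ell$ are pairwise distinct and non-antipodal. So $s\le 2^{k-1}$ and $k>\log s$, which suffices since $\varepsilon^2\log(1/\varepsilon)\ge 3/64$ on that range.
\item For $\varepsilon<1/8$ you have $\delta<1/4$, so (i) gives $k\ge r/(4e\delta^2)\ge \log s/(16e\,\delta^2\log(1/\delta))$. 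Then $\delta^2\log(1/\delta)\le 4\varepsilon^2\log(1/\varepsilon)$ finishes.
\end{itemize}

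(iii) The regime $s<\delta^{-2}$ must be excluded rather than ``handled''. Take $s=k$ mutually orthogonal Sylvester--Hadamard vectors, realizable with singleton $R_\ell$. They satisfy the hypothesis for every $\varepsilon>0$. So with an absolute constant the conclusion requires a hypothesis like $s\ge\varepsilon^{-2}$, as in Alon's lemma. This costs nothing in the theorem, where $n$ may be taken arbitrarily large for each $\varepsilon$.
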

\begin{proof}
Since $P$ is a $k$-uniform probability distribution on $A$, there are action profiles $a^1,\dots,a^k$ from $A$ such that $P= \frac{1}{k} \sum_{t=1}^k a^t$.
For each $\ell \in [s]$, we define the vector
\[v_\ell = (\chi_{R_\ell}(a^1),\dots,\chi_{R_\ell}(a^k)) \in \{-1,+1\}^k.\]

We now show that $|v_\ell \cdot v_{\ell'}| < 2\varepsilon k$  for every pair $\{\ell,\ell'\} \in \binom{[s]}{2}$.
It follows from the definition of $\chi_T(a)$ that
\[\chi_{R_\ell \Delta R_{\ell'}}(a^t) = \chi_{R_\ell}(a^t) \cdot \chi_{R_{\ell'}}(a^t).\]
Since $P= \frac{1}{k} \sum_{t=1}^k a^t$, we thus obtain
\[\frac{1}{k}|v_\ell \cdot v_{\ell'}|=\left|\frac{1}{k} \sum_{t=1}^k\chi_{R_\ell}(a^t) \cdot \chi_{R_{\ell'}}(a^t)\right|=\left|\frac{1}{k} \sum_{t=1}^k \chi_{R_\ell \Delta R_{\ell'}}(a^t)\right|=|\mathbb{E}_{a \sim P}[\chi_{R_\ell \Delta R_{\ell'}}(a)]|<2\varepsilon.\]

We can use this fact to show that the Hamming distance $d_H(v_{\ell},v_{\ell'})$ of any distinct vectors $v_\ell$ and $v_{\ell'}$ lies in $\left(\frac{(1-2\varepsilon)k}{2},\frac{(1+2\varepsilon)k}{2}\right)$.
Since $v_\ell,v_{\ell'}$ lie in $\{-1,+1\}^k$, we have
\begin{align*}
v_\ell \cdot v_{\ell'} &= |\{i \in[k]\colon (v_\ell)_i = (v_{\ell'})_i\}| - |\{i \in[k]\colon (v_\ell)_i \neq (v_{\ell'})_i\}|\\
&= d_H(v_{\ell},v_{\ell'}) - (k-d_H(v_{\ell},v_{\ell'})) = 2d_H(v_{\ell},v_{\ell'})-k.
\end{align*}
Thus, since $|v_\ell \cdot v_{\ell'}| < 2\varepsilon k$, we have $d_H(v_{\ell},v_{\ell'}) \in \left(\frac{(1-2\varepsilon)k}{2},\frac{(1+2\varepsilon)k}{2}\right)$.

Now, a standard sphere-packing argument gives the desired lower bound on $k$.
Let $V=\sum_{r=0}^{(1-2\varepsilon)k/4}\binom{k}{r}$.
If $B_\ell$ is the set of vectors from $\{-1,+1\}^k$ that are at Hamming distance at most $(1-2\varepsilon)k/4$ from $v_{\ell}$ with $\ell \in [s]$, then it follows from the lower bound on the Hamming distance that the sets $B_1,\dots,B_s$ are pairwise disjoint.
Since $B_\ell=V$ for every $\ell$, we obtain $sV \leq 2^k$.
If $\varepsilon$ is fixed, then the expression $k \geq \log{(sV)}$ can be expressed as $k \in \Omega(\log{s}/(\varepsilon^2\log{(1/\varepsilon)}))$; see~\cite{alon92}.
\end{proof}

We are now ready to prove Theorem~\ref{thm:gameTheory}

\begin{proof}[Proof of Theorem~\ref{thm:gameTheory}]
Choose a fixed $\varepsilon \in (0,1/2)$.
Let $P$ be a $k$-uniform $\varepsilon$-coarse correlated equilibrium in $G$.
By Lemma~\ref{lem-players2}, the sets $S_1,\dots,S_s$ satisfy $|\mathbb{E}_{a \sim P}[\chi_{S_i \Delta S_j}(a)]| \leq 2\varepsilon$ for every $\{i,j\} \in \binom{[s]}{2}$.
The assumptions of Lemma~\ref{lem-players3} are thus satisfied for the sets $S_1,\dots,S_s$, and we obtain $k \in \Omega(\log{s}/(\varepsilon^2\log{(1/\varepsilon)}))$.
Since $n=s(s-1)$, we also have $k \in \Omega(\log{n}/(\varepsilon^2\log{(1/\varepsilon)}))$.
\end{proof}

\section{Interleaving and Wilber's First Lower Bound}
\label{app:wilber-merge}

In this appendix we prove \Cref{thm:wilber-merge} and deduce
\Cref{cor:wilber-merge}.

\paragraph{Dyadic setup.}
We assume throughout that $n$ is a power of two; an arbitrary $n$ can be
handled by padding to the next power of two, which changes the right-hand
side of \Cref{thm:wilber-merge} by at most constant factors. Let
$\mathcal{D}_n$ be the family of internal dyadic intervals of $[n]$; each
$I \in \mathcal{D}_n$ has two children $L(I)$ and $R(I)$.

For a sequence $S=(s_1,\dots,s_t)$ over $[n]$ and an interval
$I\in\mathcal{D}_n$, let $S|_I$ denote the subsequence of terms lying in $I$,
and let $\alpha_I(S)$ be the number of consecutive pairs in $S|_I$ whose two
endpoints lie in \emph{different} children of $I$. Wilber's first bound is
\[
\Wil(S) \;:=\; \sum_{I\in\mathcal{D}_n}\alpha_I(S).
\]

Let $Z$ be a two-colored sequence over $[n]$ (colors: \textsf{red} and
\textsf{blue}). To avoid notational clash with the right child $R(I)$, we
write $Z_{r}$ and $Z_{b}$ for the red and blue subsequences of $Z$ ---
these are the sequences denoted $R$ and $B$ in the statement of
\Cref{thm:wilber-merge}. The alternating merge $X \shuffle Y$ is the special
case with the $x_i$'s red and the $y_i$'s blue.

\begin{proof}[Proof of \Cref{thm:wilber-merge}]
Fix $I\in\mathcal{D}_n$. Call a consecutive pair of $Z|_I$ lying in opposite
children of $I$ an \emph{alternation at $I$}; classify it as
\emph{monochromatic} if its two endpoints share a color, \emph{bichromatic}
otherwise. Write $M_I$ for the number of bichromatic alternations at $I$, so
\[
\alpha_I(Z) \;=\; \#\{\text{monochromatic alternations at }I\} + M_I.
\]
We bound the two parts separately.

\paragraph{Monochromatic alternations.}
A consecutive monochromatic pair in $Z|_I$ is still consecutive in the
corresponding same-color restriction $Z_r|_I$ or $Z_b|_I$, so
$\#\{\text{monochromatic alternations at }I\} \le \alpha_I(Z_r) + \alpha_I(Z_b)$.
Summing over $I$,
\begin{equation}
\label{eq:wmerge-mono}
\sum_{I\in\mathcal{D}_n} \#\{\text{monochromatic alternations at }I\}
\;\le\; \Wil(Z_r) + \Wil(Z_b).
\end{equation}

\paragraph{Bichromatic alternations via telescoping.}
For a sequence $S$, let $C(S)$ denote the number of adjacent color changes in
$S$. Every adjacent color change in $Z|_I$ either has both endpoints in the
same child of $I$, or straddles the two children --- in the latter case it is
exactly a bichromatic alternation at $I$. Writing $C_{LL}(I)$ (resp.\
$C_{RR}(I)$) for the color changes of $Z|_I$ whose endpoints both lie in
$L(I)$ (resp.\ $R(I)$),
\begin{equation}
\label{eq:wmerge-split}
C(Z|_I) \;=\; C_{LL}(I) + C_{RR}(I) + M_I.
\end{equation}
Restricting further from $Z|_I$ to $Z|_{L(I)}$ only creates new adjacencies:
every color change in $Z|_{L(I)}$ is either inherited from an adjacent
same-child color change already counted by $C_{LL}(I)$, or is a pair $(u,v)$
adjacent in $Z|_{L(I)}$ but not in $Z|_I$. Let $\New_L(I)$ be the set of such
newly-created color changes, and define $\New_R(I)$ symmetrically. Then
\begin{equation}
\label{eq:wmerge-new}
C(Z|_{L(I)}) = C_{LL}(I) + |\New_L(I)|,
\qquad
C(Z|_{R(I)}) = C_{RR}(I) + |\New_R(I)|.
\end{equation}
Subtracting \eqref{eq:wmerge-new} from \eqref{eq:wmerge-split},
\begin{equation}
\label{eq:wmerge-MI}
M_I \;=\; C(Z|_I) - C(Z|_{L(I)}) - C(Z|_{R(I)})
         + |\New_L(I)| + |\New_R(I)|.
\end{equation}
We sum \eqref{eq:wmerge-MI} over $I \in \mathcal{D}_n$. The children of
dyadic intervals range over every non-root dyadic interval and every leaf
interval exactly once, so
\[
\sum_{I\in\mathcal{D}_n}\!\bigl(C(Z|_{L(I)}) + C(Z|_{R(I)})\bigr)
\;=\;
\sum_{\substack{J\in\mathcal{D}_n\\ J\neq [n]}}\! C(Z|_J)
\;+\;
\sum_{\ell\text{ leaf}} C(Z|_\ell).
\]
The first sum on the right telescopes against $\sum_{I\in\mathcal{D}_n} C(Z|_I)$;
the second is non-negative. What remains is the root term
$C(Z|_{[n]}) \le |Z|-1$, giving
\begin{equation}
\label{eq:wmerge-telescope}
\sum_{I\in\mathcal{D}_n} M_I
\;\le\; |Z| + \sum_{I\in\mathcal{D}_n}\bigl(|\New_L(I)| + |\New_R(I)|\bigr).
\end{equation}

\paragraph{Charging new color changes to alternations in the monochromatic restrictions.}
Let $\mathcal{A}_I^{r}$ (resp.\ $\mathcal{A}_I^{b}$) be the set of
alternations at $I$ in $Z_r|_I$ (resp.\ $Z_b|_I$) and set
\[
\mathcal{A}_I \;:=\; \mathcal{A}_I^{r} \sqcup \mathcal{A}_I^{b},
\qquad
|\mathcal{A}_I| \;=\; \alpha_I(Z_r) + \alpha_I(Z_b).
\]
Note that $\mathcal{A}_I$ is, in general, strictly larger than the set of
monochromatic alternations at $I$ bounded in \eqref{eq:wmerge-mono}: a pair
consecutive in $Z_r|_I$ need not be consecutive in $Z|_I$. We construct a map
$\New_L(I) \cup \New_R(I) \to \mathcal{A}_I$ under which each element of
$\mathcal{A}_I$ receives at most two preimages, establishing
\begin{equation}
\label{eq:wmerge-charge}
|\New_L(I)| + |\New_R(I)|
\;\le\; 2\,|\mathcal{A}_I|
\;=\; 2\,\alpha_I(Z_r) + 2\,\alpha_I(Z_b).
\end{equation}

\medskip\noindent\emph{The charge map.}
Take $(u,v) \in \New_L(I)$. Since $u,v$ are adjacent in $Z|_{L(I)}$ but not in
$Z|_I$, all points of $Z|_I$ strictly between $u$ and $v$ form a non-empty
block $M \subseteq R(I)$. Because $\col(u) \neq \col(v)$, exactly one of the
following holds:
\begin{itemize}[nosep]
    \item[\textbf{(a)}] $M$ contains a point of color $\col(u)$. Let $x$ be
    the \emph{first} such point and charge $(u,v)$ to the pair $u \to x$.
    \item[\textbf{(b)}] Every point of $M$ has color $\col(v)$. Let $y$ be
    the \emph{last} point of $M$ and charge $(u,v)$ to $y \to v$.
\end{itemize}
In either case the charged pair lies in $\mathcal{A}_I$: its endpoints lie
in opposite children of $I$, and by the choice of $x$ or $y$ no point of the
shared color lies between them in $Z|_I$, so the pair is consecutive in the
corresponding monochromatic restriction $Z_r|_I$ or $Z_b|_I$. Define the
charging from $\New_R(I)$ symmetrically.

\medskip\noindent\emph{At most one preimage from $\New_L(I)$ per target.}
Fix an alternation $a \to b \in \mathcal{A}_I$.

If $a \in L(I)$ and $b \in R(I)$, a charge from $\New_L(I)$ to $a \to b$ can
only arise via case (a), so the contributing pair $(u,v)\in \New_L(I)$
satisfies $u = a$ and $x = b$. Since $(u,v)$ is adjacent in $Z|_{L(I)}$,
the point $v$ is forced to be the next element of $Z|_{L(I)}$ after $u$.
Hence there is at most one such preimage.

If $a \in R(I)$ and $b \in L(I)$, symmetrically the charge must arise via
case (b), so the contributing pair satisfies $y = a$ and $v = b$. Again
$(u,v) \in \New_L(I)$ forces $u$ to be the previous element of $Z|_{L(I)}$
before $v$, so again there is at most one such preimage.

By symmetry, each $a \to b$ also receives at most one preimage from
$\New_R(I)$. This proves \eqref{eq:wmerge-charge}.

\paragraph{Conclusion.}
Summing \eqref{eq:wmerge-charge} over $I$ and combining with
\eqref{eq:wmerge-telescope},
\[
\sum_{I\in\mathcal{D}_n} M_I
\;\le\; |Z| + 2\,\Wil(Z_r) + 2\,\Wil(Z_b).
\]
Together with \eqref{eq:wmerge-mono},
\[
\Wil(Z) = \sum_{I\in\mathcal{D}_n} \alpha_I(Z)
\;\le\; \bigl(\Wil(Z_r) + \Wil(Z_b)\bigr) + \sum_{I\in\mathcal{D}_n} M_I
\;\le\; 3\,\Wil(Z_r) + 3\,\Wil(Z_b) + |Z|.
\qedhere
\]
\end{proof}

\begin{proof}[Proof of \Cref{cor:wilber-merge}]
Since $X$ and $Y$ can be served at cost $C_1$ and $C_2$, we have
$\OPT(X) \le C_1$ and $\OPT(Y) \le C_2$. Wilber's lower
bound~\cite{wilber} gives $\Wil(X) = O(C_1)$ and $\Wil(Y) = O(C_2)$, so by
\Cref{thm:wilber-merge},
\[
\Wil(X \shuffle Y) \;\le\; 3\,\Wil(X) + 3\,\Wil(Y) + 2m
\;=\; O(C_1 + C_2 + m).
\]
Applying a Tango-tree upper bound of the form~\cite{tango}
\[
\mathrm{BST\text{-}cost}(Z) \;=\; O\bigl((\Wil(Z) + |Z|)\log\log n + n\bigr)
\]
to $Z = X \shuffle Y$ with $|Z| = 2m$ gives
\[
\mathrm{BST\text{-}cost}(X \shuffle Y)
\;=\; O\bigl((C_1 + C_2 + m)\log\log n + n\bigr),
\]
which is the claimed bound.
\end{proof}

\end{document}